\newtheorem{assumption}{Assumption}
\newtheorem{remark}{Remark}
\newcommand{\mathsep}{,~}
\newcommand{\brute}{\textsc{MDA}}
\newcommand{\cwtm}{\textsc{TM}}
\newcommand{\cwtmfamily}{\indexvar{}{}{\overrightarrow{\cwtm}}}
\newcommand{\icwtm}{\textsc{RB-TM}}
\newcommand{\icwtmlong}{\textsc{Reliable Broadcast - Trimmed Mean}}
\newcommand{\icwtmfamily}{\indexvar{}{}{\overrightarrow{\icwtm}}}
\newcommand{\brutelong}{\textsc{Minimum--Diameter Averaging}}
\newcommand{\setr}{\mathbb{R}}
\newcommand{\setn}{\mathbb{N}}
\newcommand{\expect}{\mathop{{}\mathbb{E}}}
\newcommand{\card}[1]{\left\lvert{#1}\right\rvert}
\newcommand{\absv}[1]{\card{#1}}
\newcommand{\normtwo}[1]{\left\lVert{#1}\right\rVert_{2}}
\newcommand{\norm}[2]{\left\lVert{#1}\right\rVert_{#2}}
\newcommand{\loss}{\mathcal L}
\newcommand{\localloss}[2]{\indexvar{#1}{}{\mathcal L}\left({#2}\right)}
\newcommand{\avgloss}[1]{\bar{\mathcal L}\left( #1\right)}
\newcommand{\realgrad}[1]{\nabla{} \bar{\mathcal L}\left({#1}\right)}
\newcommand{\localgrad}[2]{\nabla{} \indexvar{#1}{}{\mathcal L}\left({#2}\right)}
\newcommand{\localgradfamily}{\overrightarrow{\nabla{} \loss}}
\newcommand{\avglocalgrad}[1]{\nabla{} \bar{\mathcal L}\left({#1}\right)}
\newcommand{\indexvar}[3]{{#3}^{\ifthenelse{\equal{#1}{}}{}{\left({#1}\right)}}_{#2}}
\newcommand{\family}[3]{\indexvar{#1}{#2}{\vec{#3}}}
\newcommand{\lipschitz}{L}
\newcommand{\bgd}{K}
\newcommand{\param}{\theta}
\newcommand{\params}[2]{\indexvar{#1}{#2}{\theta}}
\newcommand{\avgparam}[1]{\indexvar{}{#1}{\bar \theta}}
\newcommand{\byz}{\textsc{Byz}}
\newcommand{\byzfamily}[2]{\indexvar{#1}{#2}{\overrightarrow{\byz}}}
\newcommand{\grad}{g}
\newcommand{\avggrad}[1]{\indexvar{}{#1}{\bar g}}
\newcommand{\grads}[2]{\indexvar{#1}{#2}{g}}
\newcommand{\cwtmgrad}{\gamma}
\newcommand{\cwtmgrads}[2]{\indexvar{#1}{#2}{\gamma}}
\newcommand{\avgcwtmgrad}[1]{\indexvar{}{#1}{\bar \gamma}}
\newcommand{\avgeffectgrad}[1]{\indexvar{}{#1}{\bar G}}
\newcommand{\effectgrads}[2]{\indexvar{#1}{#2}{G}}
\newcommand{\noise}{\xi}
\newcommand{\noises}[2]{\indexvar{#1}{#2}{\noise}}
\newcommand{\quorum}{Q}
\newcommand{\quorums}[2]{\indexvar{#1}{#2}{\quorum}}
\newcommand{\aggr}{\textsc{Avg}}
\newcommand{\aggrfamily}[2]{\indexvar{#1}{#2}{\overrightarrow{\aggr}}}
\newcommand{\learn}{\textsc{Learn}}
\newcommand{\hlearn}{\textsc{Hom-Learn}}
\newcommand{\aggrparameter}{N}
\DeclareMathOperator*{\argmin}{arg\,min}
\renewcommand{\paragraph}[1]{\textbf{#1}~}
\newtheorem{definition}{Definition}
\newtheorem{proposition}{Proposition}
\newtheorem{theorem}{Theorem}
\newtheorem{lemma}{Lemma}
\newtheorem{corollary}{Corollary}
\renewcommand{\paragraph}[1]{\textbf{#1}~}
\title{Collaborative Learning in the Jungle\\ {(Decentralized, Byzantine, Heterogeneous, Asynchronous and Nonconvex Learning)}}
\author{
  El-Mahdi El-Mhamdi \thanks{Authors are listed alphabetically.} \\
  École Polytechnique \\
  Palaiseau, France \\
  \texttt{el-mahdi.el-mhamdi@polytechnique.edu} \\
  \And Sadegh Farhadkhani $^*$ \\
  IC School, EPFL \\
  Lausanne, Switzerland \\
  \texttt{sadegh.farhadkhani@epfl.ch} \\
  \And Rachid Guerraoui $^*$ \\
  IC School, EPFL \\
  Lausanne, Switzerland \\
  \texttt{rachid.guerraoui@epfl.ch} \\
  \And Arsany Guirguis $^*$ \\
  IC School, EPFL \\
  Lausanne, Switzerland \\
  \texttt{arsany.guirguis@epfl.ch} \\
  \And Lê-Nguyên Hoang $^*$ \\
  IC School, EPFL \\
  Lausanne, Switzerland \\
  \texttt{le.hoang@epfl.ch} \\
  \And Sébastien Rouault $^*$ \\
  IC School, EPFL \\
  Lausanne, Switzerland \\
  \texttt{sebastien.rouault@epfl.ch}
}
\begin{document}

\maketitle

\begin{abstract}
We study {\it Byzantine collaborative learning}, 
where $n$ nodes seek to collectively learn from each others' local data. 
The data distribution may vary from one node to another.
No node is trusted, and $f < n$ nodes can behave arbitrarily.
We prove that collaborative learning is equivalent to a new form of agreement, which we call \emph{averaging agreement}. 
In this problem, nodes start each with an initial vector 
and seek to approximately agree on a common vector, which is close to the average of honest nodes' initial vectors. 
We present two asynchronous solutions to averaging agreement, each we prove optimal according to some dimension.
The first, based on the minimum-diameter averaging, requires $ n \geq 6f+1$, but achieves asymptotically the best-possible averaging constant up to a multiplicative constant. 
The second, based on reliable broadcast and coordinate-wise trimmed mean, achieves optimal Byzantine resilience, i.e.,  $n \geq 3f+1$. 
Each of these algorithms induces an optimal Byzantine collaborative learning protocol.
In particular, our equivalence yields new impossibility theorems on what any collaborative learning algorithm can achieve in adversarial and heterogeneous environments.
\end{abstract}

\section{Introduction}
\label{sec:intro}

The distributed nature of data, the prohibitive cost of data transfers and the privacy concerns all call for collaborative machine learning.
The idea consists for each machine to keep its data locally and to ``simply'' exchange with other machines what it learned so far.
If all machines correctly communicate and execute the algorithms assigned to them, collaborative learning is rather easy.
It can be achieved through the standard workhorse optimization algorithm: stochastic gradient descent (SGD)~\cite{rumelhart1986learning},
which can be effectively distributed through averaging~\cite{konevcny2016federated}.

But in a practical distributed setting, hardware components may crash, software can be buggy, communications can be slowed down, data can be corrupted and machines can be hacked.
Besides, large-scale machine learning systems are trained on user-generated data, which may be crafted maliciously.
For example, recommendation algorithms have such a large influence on social medias that there are huge incentives from industries and governments to fabricate data that bias the learning algorithms and increase the visibility of some contents over others~\cite{bradshaw19,neudert2019}.
In the parlance of distributed computing, ``nodes" can be  \emph{Byzantine}~\cite{lamport1982Byzantine}{\color{black}, i.e., they can behave arbitrarily maliciously, to confuse the system}.
Given that machine learning (ML) is now used in many critical applications (e.g.,\ driving, medication, content moderation), its ability to tolerate Byzantine behavior is of paramount importance.

In this paper, we precisely define and address, for the first time, the problem of \emph{collaborative learning} in a \emph{fully decentralized}, \emph{Byzantine}, \emph{heterogeneous} and \emph{asynchronous} environment with \emph{non-convex} loss functions.
We consider $n$ nodes, which may be machines or different accounts on a social media.
Each node has its own local data, drawn from data distributions that may \emph{greatly vary} across nodes.
The nodes seek to collectively learn from each other, without however exchanging their data.
\emph{None of the nodes is trusted}, and any $f < n$ nodes can be Byzantine.

\paragraph{Contributions.} We first precisely formulate the collaborative learning problem.
Then, we give our main contribution: an equivalence between collaborative learning and a
new more abstract problem we call \emph{averaging agreement}.
More precisely, we provide two reductions: from collaborative learning to averaging agreement and from averaging agreement to collaborative learning.
We prove that both reductions essentially preserve the correctness guarantees on the output.
The former reduction is the most challenging one to design and to prove correct.
First, to update nodes' models, we use averaging agreement to aggregate nodes' stochastic gradients.
Then, to avoid model drift, we regularly ``contract'' the nodes' models using averaging agreement.
To prove correctness, we bound the diameter of honest nodes' models, and we analyze the {\it effective gradient}~\cite{podc2020}.
We then carefully select a halting iteration, for which correctness can be guaranteed.

Our tight
reduction  allows to derive both impossibility results and optimal algorithms for collaborative learning by studying the ``simpler'' averaging agreement problem.
We prove lower bounds on the correctness and Byzantine resilience that any averaging agreement algorithm can achieve, which implies the same lower bounds for collaborative learning.
We then propose two optimal algorithms for averaging agreement.
Our first algorithm is asymptotically optimal with respect to correctness, up to a multiplicative constant, when nearly all nodes are honest.
Our second algorithm achieves optimal Byzantine resilience.
Each of these algorithms induces an optimal collaborative learning protocol.

While our algorithms apply in a very general setting, they
can easily be tweaked for more specific settings with additional assumptions, such as the presence of a trusted parameter server~\cite{li2014scaling}, the assumption of homogeneous (i.e. i.i.d.) local data or synchrony~(Section \ref{sec:particular} and Section~\ref{sec:conc}).

We implemented and evaluated our algorithms in a distributed environment
with 3 ResNet models~\cite{he2016deep}.
More specifically, we present their throughput overhead
when compared to a non--robust collaborative learning approach with both i.i.d. and non--i.i.d. data (i.e. we highlight the cost of heterogeneity).
Essentially, we show that our first algorithm is more lightweight with a slowdown of at most 1.7X in the i.i.d. case and almost the triple in the non--i.i.d. case.
Our second algorithm  adds slightly more than an order of magnitude overhead: here the non-i.i.d. slowdown is twice the i.i.d. one.

\label{ref:rel_work}

\noindent
\paragraph{Related work: Byzantine learning.}
Several techniques have recently been proposed for Byzantine distributed learning, where different workers collaborate through a central \emph{parameter server}~\cite{li2014scaling} to minimize the average of their loss functions~\cite{konevcny2016federated}.
In each round, the server sends its model parameters to the workers which
use their local data to compute gradients.
Krum and Multi-Krum~\cite{krum} use a distance--based scheme to eliminate Byzantine inputs and average the remaining ones.
Median-based aggregation alternatives were also considered ~\cite{xie2018generalized}.
Bulyan~\cite{bulyanPaper} uses a meta--algorithm  against a strong adversary that can fool the aforementioned aggregation rules in high--dimensional spaces.
Coding schemes were used in
Draco~\cite{chen2018draco}
and Detox~\cite{rajput2019detox}.
In \cite{alistarh2018byzantine}, quorums of workers enable to reach an information theoretical learning optimum, assuming however a strong convex loss function.
Kardam~\cite{kardam} uses filters to tolerate Byzantine workers in an asynchronous setting.
All these approaches assume a central \emph{trusted} (parameter server) machine.

The few decentralized approaches that removed this single point of failure,
restricted however the
problem to (a) homogeneous data distribution, (b) convex functions,
and/or (c) a weak (non--Byzantine) adversary.
MOZI~\cite{guo2020towards} combines a distance--based aggregation rule
with a performance--based filtering technique, assuming
that adversaries send models
with high loss values, restricting thereby the arbitrary nature of a Byzantine agent that can craft poisoned models whose losses are small only with respect to the honest nodes' incomplete loss functions.
The technique is also inapplicable to heterogeneous learning, where nodes can have a biased loss function compared to the average of all loss functions\footnote{
Besides, MOZI, focusing on convex optimization, assumes that eventually, models on honest nodes do not drift among each others, which
may not hold for Byzantine nodes could influence the honest models to drift away from each other~\cite{baruch2019little}. }.
BRIDGE~\cite{yang2019bridge} and ByRDiE~\cite{yang2019byrdie} consider \emph{gradient} descent (GD) and  \emph{coordinate} descent (CD) optimizations, respectively.
Both rely on trimmed--mean to achieve Byzantine resilience assuming
a synchronous environment and
strongly convex loss functions\footnote{Convexity greatly helps, as the average of good models will necessarily be a good model. This is no longer the case in non-convex optimization, which includes the widely used neural network framework.} with homogeneous data distribution (\emph{i.i.d.}).
In addition, none of their optimization methods is stochastic: at each step, each node is supposed to compute the gradient on its entire local data set.
ByzSGD~\cite{podc2020} starts from the classical model of several workers and one server, which is then replicated  for Byzantine resilience.
It is assumed that up to 1/3 of the server replicas and up to 1/3 of the workers can be Byzantine, which is stronger than what we assume in the present paper where nodes play both roles and tolerate any subset of 1/3 Byzantine nodes.
More importantly, ByzSGD assumes that all communication patterns between honest servers eventually hold with probability 1;
we make no such assumption here.
Additionally, our present paper is more general, considering heterogeneous data distributions, as opposed to~\cite{podc2020}.
Furthermore, heterogeneity
naturally calls for \emph{personalized} collaborative learning~\cite{FallahMO20,HanzelyHHR20,DinhTN20,FarhadkhaniGH21},
where nodes aim to learn local models, but still leverage collaborations to improve their local models.
Interestingly,  our general scheme encompasses
personalized collaborative learning.

Maybe more importantly, our reduction to averaging agreement yields new more precise bounds that improve upon all the results listed above.
These are we believe of interest, even in more centralized, homogeneous, synchronous and convex settings.
In particular, our reduction can easily be adapted to settings where parameter servers and workers with local data are different entities, as in~\cite{podc2020}.

\noindent
\paragraph{Related work: Agreement.}
A major challenge in collaborative learning is to guarantee
``some'' agreement between nodes about the appropriate parameters to consider.
Especially in non-convex settings, this is critical as, otherwise, the gradients computed by a node may be completely irrelevant for another node.
The agreement could be achieved using the traditional \emph{consensus}  abstraction~\cite{lamport1982Byzantine}.
Yet, consensus is impossible in asynchronous environments \cite{fischer1985impossibility} and when it is possible (with partial synchrony), its usage is expensive and would be prohibitive in the context of modern ML models, with a dimension $d$ in the order of  billions.
In fact, and as we show in this paper, consensus is unnecessary.

An alternative candidate abstraction  is \emph{approximate agreement}.
This is a weak form of consensus introduced in~\cite{dolev1986reaching}  where honest nodes converge to values that are close to each other, while
{\it remaining in the convex hull} of the values proposed by honest nodes.
In the one-dimensional case, optimal convergence rate has been achieved in both synchronous~\cite{fekete1986asymptotically} and asynchronous environments~\cite{Fekete87},
while optimal asynchronous Byzantine tolerance was attained by ~\cite{abraham2004optimal}.
The multi-dimensional version was addressed by~\cite{MendesHerlihy13},
requiring however $n^d$ local computations in each round,
and assuming
$n < f(d+2)$.
This is clearly impractical in the context of modern ML.

By leveraging some distributed computing techniques~\cite{rousseeuw1985multivariate,abraham2004optimal}, we prove that collaborative learning can be reduced to \emph{averaging agreement}, which is even weaker than approximate agreement.
This enables us to bring down the requirement on the number of honest nodes from  $n > f(d+2)$ to $n > 3f$, and only require linear computation time in $d$.

\paragraph{Structure.} The rest of the paper is organized as follows.
In Section~\ref{sec:model}, we precisely define the problems we aim to solve.
Section~\ref{sec:het_loss} states our main result, namely, the equivalence between collaborative learning and averaging agreement.
Section~\ref{sec:mda_alg} describes our two solutions to averaging agreement, and proves their optimality.
Section~\ref{sec:eval} reports on our empirical evaluation and highlight important takeaways.
Finally, Section~\ref{sec:conc} concludes.
The full proofs are provided in the supplementary material, as well as the optimized algorithm for homogeneous local data.

\section{Model and Problem Definitions}
\label{sec:model}

\subsection{Distributed computing assumptions}
\label{sec:model-subsection}

We consider a standard distributed computing model with
a set $[n] = \{ 1, \ldots, n\}$ of nodes, out of which $h$ are honest and $f=n-h$ are Byzantine.
For presentation simplicity, we assume that the first $h$ nodes are honest.
But crucially, no honest node knows which $h-1$ other nodes are honest.
The $f$ Byzantine nodes know each other, can collude, and subsequently know who the $h$ remaining honest nodes are.
Essentially, we assume a single adversary that controls all the Byzantine nodes.
These nodes can send arbitrary messages, and they can send different messages to different nodes.
In the terminology of distributed computing, the adversary is omniscient but not omnipotent.
Such an adversary has access to all learning and deployment information, including the learning objective, the employed algorithm, as well as
the dataset.
We consider a general asynchronous setting~\cite{BRACHA1987130}:
the adversary can delay messages to honest nodes:
no bound on communication delays or relative speeds is assumed.
{\color{black} We denote $\byz$ the algorithm adopted by the adversary.}
Yet, we assume that the adversary is not able to delay all messages indefinitely~\cite{castro1999practical}. Besides, the adversary is not able to alter the messages from the honest nodes, which can authenticate the source of a message to prevent spoofing and Sybil attacks.

Also for presentation simplicity, we assume that processes communicate in a round-based manner~\cite{BRACHA1987130}. In each round, every honest node broadcasts a message (labelled with the round number)
and waits until it successfully gathers messages from at most $q \leq h$ other nodes (labelled with the correct round number), before performing some local computation and moving to the next round.
Even though the network is asynchronous, each round is guaranteed to eventually terminate for all honest nodes, as the $h$ honest nodes' messages will all be eventually delivered.
Evidently, however, some of them may be delivered after the node receives $q$ messages (including Byzantine nodes').
Such messages will fail to be taken into account.
Our learning algorithm will then rely
on main rounds (denoted $t$ in Section~\ref{sec:het_loss}),
each of which is decomposed into sub-rounds
that run averaging agreements.

\subsection{Machine learning assumptions}
\label{sec:assumptions}

We assume each honest node $j \in [h]$ has a local data distribution $\mathcal D_j$.
The node's local loss function is derived from the parameters $\param \in \setr^d$, the model and the local data distribution, typically through $\loss^{(j)}(\param) = \expect{}_{x \sim \mathcal D_j} [\ell(\param,x)]$, where $\ell(\param,x)$ is the loss for data point $x$, which may or may not include some regularization of the parameter $\param$.
Our model is agnostic to whether the local data distribution is a uniform distribution over collected data (i.e.,  empirical risk), or whether it is a theoretical unknown distribution the node can sample from (i.e.,  statistical risk).
We make the following assumptions about this loss function.

\begin{assumption}
\label{ass:nonnegative}
The loss functions are non-negative, i.e., $\indexvar{j}{}{\loss} \geq 0$ for all honest nodes $j \in [h]$.
\end{assumption}

\begin{assumption}
\label{ass:lipschitz}
The loss functions are $\lipschitz$-smooth, i.e., there exists a constant $\lipschitz$ such that
\begin{equation}
  \forall \param, \param' \in \mathbb R^d \mathsep \forall j \in [h] \mathsep \normtwo{\localgrad{j}{\param} - \localgrad{j}{\param'}} \leq \lipschitz \normtwo{\param - \param'}.
\end{equation}
\end{assumption}

\begin{assumption}
\label{ass:variance}
The variance of the noise in the gradient estimations is uniformly bounded, i.e.,
\begin{equation}
  \forall j \in [h] \mathsep \forall \param \in \mathbb R^d \mathsep
  \expect_{x \sim \mathcal D_j} \normtwo{\nabla_{\param} \ell(\param,x) - \nabla \loss^{(j)}(\param)}^2
  \leq \sigma^2.
\end{equation}
Moreover, the data samplings done by two different nodes are independent.
\end{assumption}

\begin{assumption}
\label{ass:loss_upper_bound}
  There is a computable bound $\loss_{max}$ such that,  at initial point $\indexvar{}{1}{\param} \in \setr^d$, for any honest node $j \in [h]$, we have $\loss^{(j)}(\indexvar{}{1}{\param}) \leq \loss_{max}$.
\end{assumption}

While the first three assumptions are standard, the fourth assumption deserves further explanation. Notice first that $\theta_1$ is a given parameter of our algorithms, which we could, just for the sake of the argument, set to $0$. The assumption would thus be about the value of the local losses at $0$, which will typically depend on the nodes' local data distribution. But losses usually depend on the data only as an average of the loss per data point. Moreover, the loss at $0$ for any data point is usually bounded. In image classification tasks for example, each color intensity of each pixel of an input image has a bounded value. This usually suffices to upper-bound the loss at $0$ for any data point, which then yields Assumption~\ref{ass:loss_upper_bound}.

In iteration $t$, we require each node to average the stochastic gradient estimates over a batch of $b_t$ i.i.d. samples.
As a result, denoting $\params{j}{t}$ and $\grads{j}{t} \triangleq \frac{1}{b_t} \sum_{i \in [b_t]} \nabla_{\param} \ell(\params{j}{t},x_{t,i}^{(j)})$ node $j$'s parameters and computed stochastic gradient in iteration $t$, we have
\begin{equation}
  \expect_{x \sim \mathcal D_j} \normtwo{\grads{j}{t} - \nabla \loss^{(j)}(\params{j}{t})}^2
  \leq \frac{\sigma^2}{b_t} \triangleq \sigma_t^2.
\end{equation}
As $t$ grows, we increase batch size $b_t$ up to $\Theta(1/\delta^2)$ (where $\delta$ is a parameter of the collaborative learning problem, see Section \ref{sec:collaborative_learning_definition}),
so that $\sigma_t = \mathcal O(\delta)$ for $t$ large enough (see Remark~\ref{remark:batch_size}).
This allows to dynamically mitigate the decrease of the norm of the true gradient.
Namely, early on, while we are far from convergence, this norm is usually large.
It is then desirable to have very noisy estimates, as these can be obtained more efficiently, and as the aggregation of these poor estimates will nevertheless allow progress.
However, as we get closer to convergence, the norm of the true gradient becomes smaller,  making the learning more vulnerable to Byzantine attacks~\cite{baruch2019little}.
Increasing the batch size then becomes useful.
Our proofs essentially formalize the extent to which the batch size needs to increase.

\subsection{Collaborative learning}
\label{sec:collaborative_learning_definition}

Given the $\ell_2$ diameter  $\Delta_2(\family{}{}{\param}) = \max_{j,k \in [h]} \normtwo{\indexvar{j}{}{\param} - \indexvar{k}{}{\param}}$,
collaborative learning consists in minimizing the average $\avgloss{{\color{black} \avgparam{}}} \triangleq \frac{1}{h} \sum_{j \in [h]} \localloss{j}{\color{black} \avgparam{}}$ of local losses {\color{black} at the average $\avgparam{} \triangleq \frac{1}{h} \sum_{j \in [h]} \params{j}{}$}, while guaranteeing that the honest nodes' parameters have a small diameter.

This general model encompasses to the \emph{personalized} federated learning problem introduced by~\cite{FallahMO20,HanzelyHHR20,DinhTN20,FarhadkhaniGH21}.
For instance, in~\cite{HanzelyHHR20}, each node $j$ aims to learn a local model $x_j$ that minimizes $f_j$, with a penalty $\frac{\lambda}{2} \normtwo{x_j - \bar x}^2$ on their distance to the average $\bar x$ of all models.
This framework can be restated by considering that nodes must agree on a common parameter $\param = \bar x$, but have local losses defined by $\localloss{j}{\param} \triangleq \min_{x_j} f_j(x_j) + \frac{\lambda}{2} \normtwo{x_j - \param}^2 $.
The problem of~\cite{HanzelyHHR20} then boils down to minimizing the average of local losses.

\begin{definition}
\label{def:collab_learning}
An algorithm \learn{} solves the Byzantine $C$-collaborative learning problem if,
given any local losses $\loss^{(j)}$ for $j \in [h]$ satisfying
assumptions (\ref{ass:nonnegative},\ref{ass:lipschitz},\ref{ass:variance},\ref{ass:loss_upper_bound}) and
any $\delta > 0$,
{\color{black} no matter what Byzantine attack $\byz$ is adopted by Byzantines,}
\learn{} outputs a vector family $\family{}{}{\param}$ of honest nodes such that
\begin{equation}
\label{eq:constructive_learning_convergence}
    \expect \Delta_2(\family{}{}{\param})^2 \leq \delta^2 \quad \text{and} \quad
    \expect \normtwo{ \avglocalgrad{\avgparam{}} }^2 \leq (1+\delta)^2 C^2 \bgd^2,
\end{equation}
where $\bgd \triangleq \sup_{j, k \in [h], } \sup_{\param \in \mathbb R^d} \normtwo{\localgrad{j}{\param} - \localgrad{k}{\param}}$ is the largest difference between the true local gradients at the same parameter $\param$, and where the randomness comes from the algorithm (typically the random sampling for gradient estimates).
\end{definition}

In our definition above, the constant $\bgd$ measures the heterogeneity of local data distributions.
Intuitively, this also captures the hardness of the problem.
Indeed, the more heterogeneous the local data distributions, the more options Byzantine nodes have to bias the learning, the harder it is to learn in a Byzantine-resilient manner.
Interestingly, for convex quadratic losses, our guarantee implies straightforwardly an upper-bound on the distance to the unique optimum of the problem, which is proportional to the hardness of the problem measured by $\bgd$.
In fact, our equivalence result conveys the tightness of this guarantee.
In particular, the combination of our equivalence and of Theorem \ref{th:lower_bound_averaging_constant} implies that, for any $\varepsilon > 0$, asynchronous $(2f/h-\varepsilon)$-collaborative learning is impossible.

\subsection{Averaging agreement}

We address collaborative learning by reducing it to a new abstract distributed computing problem, which we call \emph{averaging agreement}.

\begin{definition}
\label{def:averaging}
    A distributed algorithm $\aggr{}$ achieves Byzantine {\it $C$-averaging agreement} if, for any input $\aggrparameter \in \mathbb{N}$,
    any vector family $\family{}{}{x} \in \mathbb R^{d \cdot h}$ and any Byzantine attack {\color{black} $\byz$},
    denoting ${\color{black} \family{}{}{y} \triangleq \aggr{}_{\aggrparameter} (\family{}{}{x}, \byz)}$ the output of \aggr{} given such inputs,
    we guarantee
    \begin{equation}
    \label{eq:avg_agreement}
        \expect \Delta_2({\color{black} \family{}{}{y}})^2 \leq \frac{\Delta_2(\family{}{}{x})^2}{4^\aggrparameter} \quad \text{and} \quad \expect \normtwo{{\color{black} \bar y} - \bar x}^2 \leq C^2 \Delta_2(\family{}{}{x})^2,
    \end{equation}
    where ${\color{black} \indexvar{}{\aggrparameter}{\bar y} = \frac{1}{h} \sum_{j \in [h]} \indexvar{j}{\aggrparameter}{y}}$ is the average of honest nodes' vectors,
    and where the randomness comes from the algorithm.
    We simply say that an algorithm solves averaging agreement if there exists a constant $C$ for which it solves $C$-averaging agreement.
\end{definition}

In particular, for deterministic algorithms, $C$-averaging {\color{black} on input $\aggrparameter$} ensures the following guarantee
\begin{equation}
\label{eq:avg_agreement_deterministic}
    \Delta_2({\color{black} \family{}{}{y}}) \leq \frac{\Delta_2(\family{}{}{x})}{2^\aggrparameter} \quad \text{and} \quad \normtwo{{\color{black} \bar{y}} - \bar x} \leq C \Delta_2(\family{}{}{x}).
\end{equation}
In Section~\ref{sec:mda_alg}, we will present two solutions to the averaging agreement problem.
These solutions typically involve several rounds.
At each round, each node sends their current vector to all other nodes.
Then, once a node has received sufficiently many vectors, it will execute a robust mean estimator to these vectors, the output of which will be their starting vector for the next round.
The nodes then halt after a number of rounds dependent on the parameter $\aggrparameter$.

\section{The Equivalence}
\label{sec:het_loss}

The main result of this paper
is that, for $\bgd>0$, $C$-collaborative learning is equivalent to $C$-averaging agreement. We present two reductions, first from collaborative learning to averaging agreement, and then from averaging agreement to collaborative learning.

\subsection{From collaborative learning to averaging agreement}
\label{subsec:reduction_het}

Given an algorithm \aggr{} that solves Byzantine $C$-averaging agreement, we design a Byzantine collaborative learning algorithm \learn{}. Recall that \learn{} must take a constant $\delta >0$ as input, which determines the degree of agreement (i.e., learning quality) that \learn{} must achieve.

All honest parameter vectors are initialized with the same random values (i.e.,\ $\forall j \in [h], \params{j}{1}=\params{}{1}$) using a pre-defined seed.
At iteration $t$, each honest node $j \in [h]$ first computes a local gradient estimate $\indexvar{j}{t}{\grad}$ given its local loss function $\localloss{j}{\cdot}$ and its local parameters $\params{j}{t}$, with a batch size $b_t$.
 But, instead of performing a learning step with this gradient estimate, \learn{} uses an aggregate of all local gradients, which we compute using the averaging agreement algorithm \aggr{}.

Recall from Definition \ref{def:averaging} that \aggr{} depends on a parameter which defines the degree of agreement.
We set this parameter at $\aggrparameter(t) \triangleq \left\lceil \log_2 t \right\rceil$ at iteration $t$, so that $1/4^{\aggrparameter(t)} \leq 1/t^2$.
Denoting $\family{}{t}{\cwtmgrad}$ the output of $\aggrfamily{}{\aggrparameter(t)}$ applied to vectors $\family{}{t}{\grad}$, we then have the following guarantee:
\begin{equation}
\label{eq:cwtmgrad_guarantee}
    \expect \Delta_2\left( \family{}{t}{\cwtmgrad} \right)^2 \leq \frac{\Delta_2\left( \family{}{t}{\grad} \right)^2}{t^2}
    \quad \text{and} \quad
    \expect \normtwo{ \avgcwtmgrad{t} - \avggrad{t} }^2
    \leq C^2 \Delta_2\left( \family{}{t}{\grad} \right)^2,
\end{equation}
where the expectations are conditioned on $\family{}{t}{\grad}$.
We then update node $j$'s parameters by $\params{j}{t+1/2} = \params{j}{t} - \eta \cwtmgrads{j}{t}$,
for a fixed learning rate $\eta \triangleq \delta/12\lipschitz$.
But before moving on to the next iteration, we run once again \aggr{}, with its parameter set to 1. Moreover, this time, \aggr{} is run on local nodes' parameters.
Denoting $\family{}{t+1}{\param}$ the output of $\aggrfamily{}{1}$ executed with vectors $\family{}{t+1/2}{\param}$, we then have
\begin{equation}
\label{eq:param_guarantee}
    \expect \Delta_2\left( \family{}{t+1}{\param} \right)^2 \leq \frac{\Delta_2 ( \family{}{t+1/2}{\param} )^2}{4}
    \quad \text{and} \quad
    \expect \normtwo{ \avgparam{t+1} - \avgparam{t+1/2}}^2
    \leq C^2 \Delta_2( \family{}{t+1/2}{\param} )^2,
\end{equation}
where the expectations are conditioned on $\family{}{t+1/2}{\param}$.
On input $\delta$, \learn{} then runs $T \triangleq T_{\learn{}}(\delta)$ learning iterations.
The function $T_{\learn{}} (\delta)$ will be given explicitly in the proof of Theorem \ref{th:gradient_near_convergence}, where we will stress the fact that it can be computed from the inputs of the problem\footnote{Note that upper-bounds (even conservative) on such values suffice. Our guarantees still hold, though $T_{\learn{}} (\delta)$ would take larger values, which makes \learn{} slower to converge to $\family{}{*}{\param}$.} ($\lipschitz$, $\bgd$, $C$, $n$, $f$, $\sigma_t$, $\loss_{max}$ and $\delta$).
Finally, instead of returning $\family{}{T_{\learn{}}(\delta)}{\param}$, \learn{} chooses uniformly randomly an iteration $* \in [T_{\learn{}}(\delta)]$, using a predefined common seed, and returns the vector family $\family{}{*}{\param}$.
We recapitulate the local execution of \learn{} (at each node) in Algorithm~\ref{alg:learn}.
{\color{black} We stress that on steps 6 and 8, when the averaging agreement algorithm \aggr{} is called, the Byzantines can adopt any procedure $\byz$, which consists in sending any message to any node at any point based on any information in the system, and in delaying for any amount of time any message sent by any honest node. Note that, apart from this, all other steps of Algorithm~\ref{alg:learn} is a purely local operation.}

\begin{algorithm}[H]
\label{alg:learn}
 \caption{\learn{} execution on a honest node.}
    \SetKwFunction{GradientOracle}{GradientOracle}
    \SetKwFunction{QueryGradients}{QueryGradients}
    \SetKwFunction{QueryParameters}{QueryParameters}
    \SetKwFunction{Broadcast}{Broadcast}
    \KwData{Local loss gradient oracle, parameter $\delta > 0$}
    \KwResult{Model parameters $\params{}{}$}
    \BlankLine
    \BlankLine
    Initialize local parameters $\params{}{1}$ using a fixed common seed\;
    Fix learning rate $\eta \triangleq \delta/12\lipschitz$\;
    Fix number of iterations $T \triangleq T_\learn{}(\delta)$\;
    \For{$t \leftarrow 1, \ldots, T$}{
        $\grads{}{t} \leftarrow \GradientOracle(\params{}{t}, b_t)$\;
        $\cwtmgrads{}{t} \leftarrow \aggr{}_{\aggrparameter(t)} {\color{black} (\family{}{t}{\grad}, \byz)}$ \tcp{Vulnerable to Byzantine attacks}
        $\params{}{t+1/2} \leftarrow \params{}{t} - \eta \cwtmgrads{}{t}$\;
        $\params{}{t+1} \leftarrow \aggr{}_1 {\color{black} \left(\family{}{t+1/2}{\param}, \byz \right)}$  \tcp{Vulnerable to Byzantine attacks}
    }
    Draw $* \sim \mathcal U([T])$ using the fixed common seed\;
    Return $\param_{*}$\;
\end{algorithm}

\begin{remark}
In practice, it may be more efficient to return the last computed vector family, though our proof applies to a randomly selected iteration.
\end{remark}

\begin{theorem}
\label{th:gradient_near_convergence}
Under assumptions (\ref{ass:nonnegative}, \ref{ass:lipschitz}, \ref{ass:variance}, \ref{ass:loss_upper_bound}) and $\bgd >0$,
given a $C$-averaging agreement oracle \aggr{}, on any input $0 <\delta < 3$,
\learn{} solves Byzantine $C$-collaborative learning.
\end{theorem}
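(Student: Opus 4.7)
The plan is to control two quantities uniformly over all iterations $t \in [T]$ (with $T = T_\learn{}(\delta)$): the diameter $\Delta_2(\family{}{t}{\param})$ of the honest parameters, and the deviation of the ``effective gradient'' $\effectgrads{}{t} \triangleq (\avgparam{t} - \avgparam{t+1})/\eta$ from the true gradient $\realgrad{\avgparam{t}}$. Once both are tamed, a standard $\lipschitz$-smoothness descent argument on $\avgloss{\cdot}$, telescoped and averaged by the uniform random draw of $*$, will yield the two guarantees of Definition~\ref{def:collab_learning}.

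\textbf{Step 1 (parameter diameter).} First, Assumption~\ref{ass:lipschitz} combined with the definition of $\bgd$ and Assumption~\ref{ass:variance} gives, by a triangle inequality and conditioning on $\family{}{t}{\param}$, the bound $\expect \Delta_2(\family{}{t}{\grad})^2 \leq 3 \lipschitz^2 \Delta_2(\family{}{t}{\param})^2 + 3\bgd^2 + 6\sigma_t^2$. Feeding this into~(\ref{eq:cwtmgrad_guarantee}), then into the SGD sub-step $\params{j}{t+1/2} = \params{j}{t} - \eta \cwtmgrads{j}{t}$ and into the contraction~(\ref{eq:param_guarantee}) of $\aggr_1$, yields a one-step recursion
\begin{equation*}
\expect \Delta_2(\family{}{t+1}{\param})^2 \leq \rho \, \expect\Delta_2(\family{}{t}{\param})^2 + \frac{c\,\eta^2 (\bgd^2 + \sigma_t^2)}{t^2},
\end{equation*}
with $\rho < 1$ and an absolute constant $c$, provided $\eta = \delta/12\lipschitz$ is small enough that the $\lipschitz^2$-feedback is absorbed by the $1/(4t^2)$ contraction of $\aggr_{\aggrparameter(t)}$ followed by $\aggr_1$. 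Since $\family{}{1}{\param}$ starts at a common seeded value ($\Delta_2 = 0$) and $\sigma_t = \mathcal O(\delta)$ for large $t$ thanks to the growing batch schedule, unrolling the recursion and averaging over $t \in [T]$ gives $\frac{1}{T}\sum_{t=1}^T \expect\Delta_2(\family{}{t}{\param})^2 \leq \delta^2$, which (since $*$ is uniform in $[T]$) settles the first guarantee.

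\textbf{Step 2 (effective gradient and descent).} Since $\effectgrads{}{t} = \avgcwtmgrad{t} + (\indexvar{}{t+1/2}{\bar\theta} - \avgparam{t+1})/\eta$, I decompose $\effectgrads{}{t} - \realgrad{\avgparam{t}}$ into four pieces: the aggregation bias $\avgcwtmgrad{t} - \avggrad{t}$ on gradients, of squared norm $\leq C^2 \Delta_2(\family{}{t}{\grad})^2$ by~(\ref{eq:cwtmgrad_guarantee}); the aggregation bias $(\indexvar{}{t+1/2}{\bar\theta} - \avgparam{t+1})/\eta$ on parameters, of squared norm $\leq (C^2/\eta^2)\Delta_2(\family{}{t+1/2}{\param})^2$ by~(\ref{eq:param_guarantee}); the Lipschitz bias $\frac{1}{h}\sum_{j}[\localgrad{j}{\params{j}{t}} - \localgrad{j}{\avgparam{t}}]$, of squared norm $\leq \lipschitz^2 \Delta_2(\family{}{t}{\param})^2$; and a mean-zero stochastic term of variance $\leq \sigma_t^2/h$. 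Each is controlled by Step~1 and the batch schedule. Applying $\lipschitz$-smoothness of $\avgloss{\cdot}$ to $\avgparam{t+1} = \avgparam{t} - \eta \effectgrads{}{t}$, expanding $\langle \realgrad{\avgparam{t}}, \effectgrads{}{t}\rangle$ via the polarization identity, telescoping across $t = 1,\ldots,T$ and using Assumptions~\ref{ass:nonnegative} and~\ref{ass:loss_upper_bound} to bound $\avgloss{\avgparam{1}} - \expect\avgloss{\avgparam{T+1}} \leq \loss_{max}$, one obtains
\begin{equation*}
\frac{1}{T}\sum_{t=1}^T \expect\normtwo{\realgrad{\avgparam{t}}}^2 \leq \frac{2\loss_{max}}{\eta T} + C^2 \bgd^2 + \varepsilon(T, \delta),
\end{equation*}
where $\varepsilon(T,\delta)$ collects the Lipschitz, stochastic, and aggregation-bias residuals and vanishes as $T \to \infty$ and $\sigma_t \to 0$.

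\textbf{Conclusion and main obstacle.} Choosing $T_\learn{}(\delta)$ large enough that $2\loss_{max}/(\eta T) + \varepsilon(T,\delta) \leq ((1+\delta)^2 - 1)C^2 \bgd^2$---feasible because $\bgd > 0$ and because $\eta$, $b_t$ and all constants are explicit functions of $(\lipschitz, \bgd, C, n, f, \sigma_t, \loss_{max}, \delta)$---and using that $* \sim \mathcal U([T])$, I conclude $\expect\normtwo{\realgrad{\avgparam{*}}}^2 \leq (1+\delta)^2 C^2 \bgd^2$, which together with Step~1 establishes Byzantine $C$-collaborative learning. The hard part is closing the recursion in Step~1: the Lipschitz feedback $\lipschitz^2 \Delta_2(\family{}{t}{\param})^2$, which is injected whenever $\aggr_{\aggrparameter(t)}$ is invoked on the stochastic gradients, must be dominated \emph{uniformly in $t$} by the $1/(4t^2)$ contraction provided by the two successive $\aggr$ calls. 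This is precisely what forces the choice $\eta = \delta/12\lipschitz$ and the growing batch schedule $b_t = \Theta(1/\delta^2)$, and what makes $T_\learn{}(\delta)$ explicitly computable from the inputs of the problem.
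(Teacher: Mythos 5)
Your overall architecture is the same as the paper's (a $O(1/t^2)$ bound on $\expect\Delta_2(\family{}{t}{\param})^2$, the four-piece decomposition of $\avgeffectgrad{t} - \realgrad{\avgparam{t}}$, a smoothness descent telescoped over a uniformly random iterate, and absorption of residuals using $\bgd>0$), and your Step 1 is essentially the paper's drift and vanishing-diameter lemmas. The genuine gap is in Step 2, and it is not cosmetic: the theorem must produce the constant $(1+\delta)^2 C^2 \bgd^2$, i.e. the averaging constant $C$ must be preserved up to a factor tending to $1$ as $\delta \to 0$. Your stated estimates cannot deliver this. First, your Step-1 bound $\expect\Delta_2(\family{}{t}{\grad})^2 \leq 3\lipschitz^2\Delta_2(\family{}{t}{\param})^2 + 3\bgd^2 + 6\sigma_t^2$ carries a factor $3$ in front of $\bgd^2$ (and is, incidentally, missing a factor $h$ on the noise: $\expect\Delta_2(\family{}{t}{\noise})^2$ is bounded via a sum over nodes by $4h\sigma_t^2$, not by $O(\sigma_t^2)$). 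Second, combining the four error pieces by a plain inequality such as $\normtwo{a+b+c+d}^2 \leq 4(\normtwo{a}^2+\normtwo{b}^2+\normtwo{c}^2+\normtwo{d}^2)$ multiplies the dominant piece $C^2\Delta_2(\family{}{t}{\grad})^2$ by yet another constant. So what your steps actually justify has a leading term of order $3C^2\bgd^2$ to $12C^2\bgd^2$; the assertion that everything beyond $C^2\bgd^2$ sits in a vanishing residual $\varepsilon(T,\delta)$ is precisely the step that is not proved. That would establish only $C'$-collaborative learning for some $C'>C$ (roughly $2\sqrt{3}\,C$), a strictly weaker statement that breaks the tight equivalence the theorem is for.

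The missing idea is the paper's systematic use of the unbalanced inequality $\normtwo{u+v}^2 \leq (1+\alpha)\normtwo{u}^2 + (1+\alpha^{-1})\normtwo{v}^2$ with a time-decaying parameter $\alpha_t = \max\{1/\sqrt{t}, \sigma_t\}$, applied twice: once when bounding $\expect\Delta_2(\family{}{t}{\grad})^2$, so that $\bgd^2$ appears with coefficient $1+\alpha_t$ while the $\lipschitz^2\Delta_2(\family{}{t}{\param})^2$ and noise contributions are inflated by $\alpha_t^{-1}$ but multiplied by quantities that decay like $1/t^2$ or are $O(\sigma_t)$; and once when recombining the four pieces of $\avgeffectgrad{t} - \realgrad{\avgparam{t}}$, so that only the piece $\avgcwtmgrad{t} - \avggrad{t}$ (the one carrying $C^2$) keeps coefficient $1+\alpha_t$. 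This gives $\expect\normtwo{\avgeffectgrad{t} - \realgrad{\avgparam{t}}}^2 \leq (1+\alpha_t)^2(1+\kappa_t)C^2\bgd^2 + \alpha_t^{-1}\left(A\,\Delta_2(\family{}{t}{\param})^2 + B\sigma_t^2\right)$ with $\kappa_t \to 0$, whose time-average is $(1+O(\delta))C^2\bgd^2$ once $T$ is large and $\sigma_t = O(\delta)$. Note also that $\sigma_t$ does not tend to $0$ (the batch size is capped at $\Theta(1/\delta^2)$), so the $B\sigma_t$ residual must be absorbed against $\nu C^2\bgd^2$ using $\bgd>0$, as you correctly anticipate in your conclusion. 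With this parameterized decomposition your Step 2 closes and the rest of your argument goes through; without it, it does not.
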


The proof is quite technical, and is provided in the supplementary material.
Essentially, it focuses on the average of all honest nodes' local parameters, and on the \emph{effective gradients} that they undergo, given the local gradient updates and the applications of the averaging agreement oracle \aggr{}.

\begin{remark}
\label{remark:batch_size}
Our proof requires
$T_{\learn{}}(\delta) = \Theta\Big( \delta^{-1} \max \big\{ \delta^{-2}, (t \mapsto \sup_{s \geq t} \sigma_s)^{-1} (\delta) \big\} \Big)$.
To prevent the noise from being the bottleneck for the convergence rate, we then need $\sigma_{t} = \Theta(\delta)$, so that $T_{\learn{}} = \Theta(\delta^{-3})$.
Interestingly, this can be obtained by, for example, setting $b_t \triangleq t$, for $t \leq T_1 = \Theta(\delta^{-2})$, and $b_t \triangleq T_1$ for $t > T_1$, where $T_1$ is precisely defined by the proof provided in the supplementary material.
In particular, we do not need to assume that $b_t \rightarrow \infty$.
\end{remark}

\subsection{Converse reduction}

We also prove the converse reduction, from averaging agreement to collaborative learning.

\begin{theorem}
\label{th:inverse_redcution}
Given a Byzantine $C$-collaborative learning oracle, then, for any $\delta > 0$, there is a solution to Byzantine $(1+\delta)C$-averaging agreement.
\end{theorem}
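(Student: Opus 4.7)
The plan is to reduce averaging agreement to collaborative learning by instantiating the oracle $\learn{}$ on carefully chosen quadratic local losses, so that its convergence guarantees translate directly into the two requirements of Definition~\ref{def:averaging}. Concretely, each honest node $j \in [h]$ defines $\localloss{j}{\param} \triangleq \tfrac{1}{2}\normtwo{\param - \indexvar{j}{}{x}}^2$, initializes $\params{}{1} = 0$ using the common seed, and then invokes $\learn{}$ with some parameter $\delta_0 > 0$ to be fixed below. The resulting vector family is returned as the averaging-agreement output $\family{}{}{y}$.

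First I would verify that these quadratic losses satisfy Assumptions~\ref{ass:nonnegative}--\ref{ass:loss_upper_bound}: non-negativity is immediate; since $\localgrad{j}{\param} = \param - \indexvar{j}{}{x}$, the losses are $1$-smooth; the variance can be set to $\sigma^2 = 0$ since the gradients are deterministic (any finite batch size suffices); and $\localloss{j}{0} = \tfrac{1}{2}\normtwo{\indexvar{j}{}{x}}^2$ is known to node $j$, yielding a valid $\loss_{max}$. Next, I would translate the conclusion of Definition~\ref{def:collab_learning} into averaging-agreement language. Since $\localgrad{j}{\param} - \localgrad{k}{\param} = \indexvar{k}{}{x} - \indexvar{j}{}{x}$ does not depend on $\param$, one reads off $\bgd = \Delta_2(\family{}{}{x})$; and since $\realgrad{\param} = \param - \bar x$, the guarantee of $\learn{}$ rewrites as
\begin{equation*}
  \expect \Delta_2(\family{}{}{y})^2 \leq \delta_0^2
  \qquad \text{and} \qquad
  \expect \normtwo{\bar y - \bar x}^2 \;=\; \expect \normtwo{\realgrad{\bar y}}^2 \;\leq\; (1+\delta_0)^2 \, C^2 \, \Delta_2(\family{}{}{x})^2.
\end{equation*}
Setting $\delta_0 \triangleq \min\{\delta,\, \Delta_2(\family{}{}{x})/2^{\aggrparameter}\}$ then makes both right-hand sides match the bounds of Definition~\ref{def:averaging} with the constant $(1+\delta)C$ in place of $C$, which establishes the theorem.

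The main obstacle I anticipate is that no single honest node knows $\Delta_2(\family{}{}{x})$ locally, so the nodes cannot independently compute the prescribed $\delta_0$. I would handle this by prepending a short Byzantine-resilient phase in which each honest node reliably broadcasts $\indexvar{j}{}{x}$; after gathering enough authenticated values, the honest nodes jointly compute a common upper bound $D$ on $\Delta_2(\family{}{}{x})$ via a robust diameter estimator on the received set. Any constant-factor looseness in $D$ is absorbed into the $(1+\delta)$ slack by proportionally tightening $\delta_0$, which is permissible because $\delta > 0$. The degenerate case $\Delta_2(\family{}{}{x}) = 0$ is handled trivially by returning $\family{}{}{y} = \family{}{}{x}$; otherwise, once the honest nodes agree on $\delta_0$, the chain of inequalities above goes through unchanged.
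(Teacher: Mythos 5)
Your core argument is essentially identical to the paper's proof: instantiate the $C$-collaborative learning oracle on the quadratic losses $\loss^{(j)}(\param) = \tfrac{1}{2}\normtwo{\param - \indexvar{j}{}{x}}^2$, observe that $\bgd = \Delta_2(\family{}{}{x})$ and that $\realgrad{\avgparam{*}} = \bar y - \bar x$, and choose the learning parameter as $\min\{\delta, \Delta_2(\family{}{}{x})/2^{\aggrparameter}\}$ so that the two guarantees of Definition~\ref{def:collab_learning} become exactly asymptotic agreement and $(1+\delta)C$-averaging. The extra diameter-estimation phase you append is not part of the paper's argument (the paper treats the reduction at the oracle level and simply feeds the instance-dependent parameter to \learn{}), and it is the only shaky part of your write-up—obtaining a \emph{common} estimate of $\Delta_2(\family{}{}{x})$ among asynchronous honest nodes is itself an agreement problem, and an \emph{upper} bound on the diameter is the wrong direction for the contraction requirement, which has no $(1+\delta)$ slack—but this does not affect the correctness of the main reduction.
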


The proof is given in the supplementary material.
It is obtained by applying a collaborative learning algorithm \learn{} to the local loss functions $\loss^{(j)} (\param) \triangleq \normtwo{\param - x^{(j)}}^2$.

This converse reduction proves the tightness of our former reduction, and allows to straightforwardly derive impossibility theorems about collaborative learning from impossibility theorems about averaging agreement.
In particular, in the sequel, we prove that no asynchronous algorithm can achieve better than Byzantine $\frac{h+2f-q}{h}$-averaging agreement.
It follows that no algorithm can achieve better than Byzantine $\frac{h+2f-q}{h}$-collaborative learning.
Similarly, Theorem \ref{th:cwtm}
implies the impossibility of Byzantine asynchronous collaborative learning for $n \leq 3f$.

\subsection{Particular cases}
\label{sec:particular}

\paragraph{Trusted server.}
It is straightforward to adapt our techniques and prove the equivalence between  averaging agreement and collaborative learning in a context
with a trusted server.
Our lower bounds for asynchronous collaborative learning still apply to $C$-averaging agreement, and thus to $C$-collaborative learning.
Note, however, that the trusted server may allow to improve the speed of collaborative learning, as it no longer requires contracting the parameters of local nodes' models.

\paragraph{Homogeneous learning.} In the supplementary material, we propose a faster algorithm for i.i.d. data, called \hlearn{}, which skips the averaging agreement of nodes' gradients.
Despite requiring fewer communications, \hlearn{} remains correct, as the following theorem shows.

\begin{theorem}
\label{th:homo}
Under assumptions (\ref{ass:nonnegative}, \ref{ass:lipschitz}, \ref{ass:variance}, \ref{ass:loss_upper_bound}), for i.i.d. local data and given a C-averaging agreement oracle \aggr{}, on input $\delta>0$,
\hlearn{} guarantees
$\expect\Delta_2 (\family{}{*}{\param})^2 \leq \delta^2 $ and $\expect \normtwo{\localgrad{}{\avgparam{*}}}^2 \leq \delta^2$.
\end{theorem}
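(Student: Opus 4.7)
The plan is to mirror the structure of the proof of Theorem~\ref{th:gradient_near_convergence}, but exploit the crucial simplification that in the i.i.d. regime all honest nodes share a common loss $\loss^{(j)} = \loss$, so the heterogeneity parameter satisfies $\bgd = 0$. Consequently, the difference between two honest gradients is purely stochastic, bounded by the noise $\sigma_t$ rather than by $\bgd$. \hlearn{} differs from \learn{} only in that nodes skip the gradient-side averaging agreement: each honest node applies its own stochastic gradient, $\params{j}{t+1/2} = \params{j}{t} - \eta \grads{j}{t}$, and then runs the parameter-side averaging agreement $\aggr_1$ to produce $\params{j}{t+1}$.

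The first step is to control the diameter $\Delta_2(\family{}{t}{\param})$. Because honest gradients share a common mean $\localgrad{}{\params{j}{t}}$ up to an $\lipschitz$-smoothness correction, I would write
\begin{equation*}
    \expect \normtwo{\grads{j}{t} - \grads{k}{t}}^2 \leq 2\sigma_t^2 + 2\lipschitz^2 \Delta_2(\family{}{t}{\param})^2,
\end{equation*}
so that $\Delta_2(\family{}{t+1/2}{\param})^2 \leq (1+\alpha)\Delta_2(\family{}{t}{\param})^2 + O(\eta^2 \sigma_t^2 + \eta^2 \lipschitz^2 \Delta_2(\family{}{t}{\param})^2)$ for a suitable $\alpha$, and then $\aggr_1$ contracts by a factor of $4$ in expected squared diameter. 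Choosing $\eta = \delta/12\lipschitz$ small enough and $\sigma_t = \mathcal O(\delta)$ (by enlarging the batch size as in Remark~\ref{remark:batch_size}), a standard induction yields $\expect \Delta_2(\family{}{t}{\param})^2 \leq \delta^2$ uniformly in $t$, which already gives the first half of the conclusion.

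The second step is a descent argument on the average parameter $\avgparam{t}$. Averaging the per-node updates gives $\avgparam{t+1/2} = \avgparam{t} - \eta \avggrad{t}$, and the parameter agreement guarantees $\expect \normtwo{\avgparam{t+1} - \avgparam{t+1/2}}^2 \leq C^2 \Delta_2(\family{}{t+1/2}{\param})^2$. Using $\lipschitz$-smoothness of $\loss$,
\begin{equation*}
    \expect \loss(\avgparam{t+1}) \leq \expect \loss(\avgparam{t}) - \eta \expect \langle \localgrad{}{\avgparam{t}}, \avggrad{t}\rangle + \tfrac{\lipschitz}{2} \expect \normtwo{\avgparam{t+1} - \avgparam{t}}^2.
\end{equation*}
The inner product splits into $\eta \normtwo{\localgrad{}{\avgparam{t}}}^2$ plus a bias term controlled by $\lipschitz \Delta_2(\family{}{t}{\param})$ (from $\params{j}{t}\approx \avgparam{t}$) and a zero-mean noise term; the quadratic term absorbs $O(\eta^2 (\normtwo{\localgrad{}{\avgparam{t}}}^2 + \sigma_t^2 + C^2 \Delta_2(\family{}{t+1/2}{\param})^2))$. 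Rearranging yields
\begin{equation*}
    \tfrac{\eta}{2} \expect \normtwo{\localgrad{}{\avgparam{t}}}^2 \leq \expect \loss(\avgparam{t}) - \expect \loss(\avgparam{t+1}) + O\big(\eta \lipschitz^2 \delta^2 + \eta^2 \lipschitz \sigma_t^2 + \eta^2 \lipschitz C^2 \delta^2\big).
\end{equation*}

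Summing over $t \in [T]$, telescoping via Assumption~\ref{ass:nonnegative} and Assumption~\ref{ass:loss_upper_bound}, and dividing by $T$, gives
\begin{equation*}
    \frac{1}{T} \sum_{t=1}^T \expect \normtwo{\localgrad{}{\avgparam{t}}}^2 \leq \frac{2\loss_{max}}{\eta T} + O(\delta^2).
\end{equation*}
Choosing $T = T_{\hlearn{}}(\delta) = \Theta(\delta^{-3})$ with $\eta = \delta/12\lipschitz$ makes both terms $O(\delta^2)$, and averaging over the uniformly random halting iteration $* \in [T]$ delivers $\expect \normtwo{\localgrad{}{\avgparam{*}}}^2 \leq \delta^2$, completing the second bound.

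The main obstacle is the bootstrapping between the diameter bound and the gradient/noise bound: the descent inequality needs an a priori control on $\Delta_2(\family{}{t}{\param})^2$, but that control depends on the growth of gradient diameters, which in turn depends on smoothness times $\Delta_2(\family{}{t}{\param})^2$. Making the induction close requires picking $\eta$ small enough so that the self-amplification $1 + O(\eta^2 \lipschitz^2)$ of the diameter is dominated by the $1/4$ contraction of $\aggr_1$, exactly as in the proof of Theorem~\ref{th:gradient_near_convergence}, and tuning $b_t$ so that $\sigma_t$ drops below $\delta$ before it becomes the dominant error term. The rest of the calculation is routine once these constants are aligned.
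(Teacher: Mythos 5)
Your overall architecture is the same as the paper's (a contraction recursion for $\expect\Delta_2(\family{}{t}{\param})^2$ driven by $\aggr{}_1$, a smoothness/descent argument on $\avgparam{t}$, batch-size growth to make $\sigma_t$ small, and the uniformly random halting iteration), but the descent step as written has a genuine gap. Smoothness gives $\loss(\avgparam{t+1}) \leq \loss(\avgparam{t}) + \langle \localgrad{}{\avgparam{t}}, \avgparam{t+1}-\avgparam{t}\rangle + \tfrac{\lipschitz}{2}\normtwo{\avgparam{t+1}-\avgparam{t}}^2$ with $\avgparam{t+1}-\avgparam{t} = -\eta \avggrad{t} + \left(\avgparam{t+1}-\avgparam{t+1/2}\right)$, whereas your displayed inequality keeps only $-\eta\expect\langle\localgrad{}{\avgparam{t}},\avggrad{t}\rangle$ and relegates the agreement correction to the quadratic term as $O(\eta^2 C^2 \Delta_2^2)$. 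The averaging guarantee, however, is $\expect\normtwo{\avgparam{t+1}-\avgparam{t+1/2}}^2 \leq C^2\Delta_2(\family{}{t+1/2}{\param})^2$ with no factor of $\eta$, and this correction is adversarially correlated with everything else, so the dropped linear cross term must be handled by Young's inequality, contributing roughly $\tfrac{\eta}{8}\normtwo{\localgrad{}{\avgparam{t}}}^2 + \tfrac{2C^2}{\eta}\Delta_2(\family{}{t+1/2}{\param})^2$; after dividing by $\eta/2$ this is a per-iteration error of order $C^2\Delta_2^2/\eta^2$. Under the only diameter control you establish, $\expect\Delta_2^2\leq\delta^2$, and with your choice $\eta=\delta/12\lipschitz$, that error is of order $C^2\lipschitz^2$, a constant, so your final display $\tfrac{1}{T}\sum_t \expect\normtwo{\localgrad{}{\avgparam{t}}}^2 \leq \tfrac{2\loss_{max}}{\eta T} + O(\delta^2)$ does not follow as stated.

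The fix is already latent in your Step 1: the recursion $u_{t+1}\leq \tfrac{3}{4}u_t + O(\eta^2 h \sigma_t^2)$ yields the much sharper steady-state bound $\expect\Delta_2(\family{}{t}{\param})^2 = O(\eta^2 h \sigma_t^2)$, not merely $\delta^2$, so that $C^2\Delta_2^2/\eta^2 = O(C^2 h \sigma_t^2)$, which becomes $O(\delta^2)$ once the batch size makes $\sigma_t^2 \lesssim \delta^2/(C^2 h)$. This is exactly how the paper closes the argument: Lemma~\ref{lemma:bounded_grad} carries an explicit coefficient $B$ of order $C^2/\eta^2$ in front of $\Delta_2^2$, and Lemma~\ref{lemma:parameter_diameter} is then invoked with target $\varepsilon \propto \delta^2/B$ (achieved for $t\geq T^*(\varepsilon)$ thanks to decreasing $\sigma_t$), while early iterations, where $\sigma_t$ is still large so your claimed uniform-in-$t$ bound fails under the schedule of Remark~\ref{remark:batch_size}, are bounded by a constant $D$ and diluted by taking $T$ large. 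Two further, minor, discrepancies: the paper's \hlearn{} uses a fixed $\eta\leq 1/2\lipschitz$ (and a $\Theta(\delta^{-2})$-type horizon) rather than $\eta=\delta/12\lipschitz$ with $T=\Theta(\delta^{-3})$, which is harmless, and your appeal to $\bgd=0$ is fine but note you cannot simply invoke Theorem~\ref{th:gradient_near_convergence}, whose proof divides by $\bgd^2$.
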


\section{Solutions to Averaging Agreement}
\label{sec:mda_alg}

We now present two solutions to the averaging agreement problem, called \brutelong{}\footnote{Introduced by~\cite{bulyanPaper} in the context of robust machine learning, it uses the same principle as the minimal volume ellipsoid, that was introduced by~\cite{rousseeuw1985multivariate} in the context of robust statistics.} (\brute{}) and \icwtmlong{} (\icwtm{}), each thus inducing a solution to collaborative learning. We prove each optimal according to some dimension.

\subsection{Optimal averaging}

Given a family $\family{}{}{z} \in \mathbb R^{d \cdot q}$ of vectors, \brute{} first identifies a subfamily $S_{\brute{}}(\family{}{}{z})$ of $q-f$ vectors of minimal $\ell_2$ diameter, i.e.,
\begin{equation}
  S_{\brute{}} (\family{}{}{z}) \in \argmin_{\underset{\card{S} = q-f}{S \subset [q]}} \Delta_2 \left(\family{S}{}{z}\right) = \argmin_{\underset{\card{S} = q-f}{S \subset [q]}} \max_{j,k \in S} \normtwo{\indexvar{j}{}{z} - \indexvar{k}{}{z}}.
\end{equation}
We denote $\family{\brute{}}{}{z} \in \mathbb R^{d \cdot (q-f)}$ the subfamily thereby selected. \brute{} then outputs the average of this subfamily, i.e.,
\begin{equation}
  \brute{} (\family{}{}{z}) \triangleq \frac{1}{q-f} \sum_{j \in S_{\brute{}} (\family{}{}{z})} \indexvar{j}{}{z}.
\end{equation}
On input $\aggrparameter \in \setn$, $\brute{}_N$ iterates \brute{} $T_\brute{}(\aggrparameter) = \left\lceil \aggrparameter \ln 2/ \tilde \varepsilon\right \rceil$ times on vectors received from other nodes at each communication round such that the output of round $t$ will be the input of round $t+1$.
The correctness of \brute{} is then ensured under the following assumption.

\begin{assumption}[Assumption for analysis of \brute{}]
\label{ass:query_brute}
  There is $0 < \varepsilon < 1$ such that $n \geq \frac{6+2\varepsilon}{1-\varepsilon} f$. This then allows to set $q \geq \frac{1+\varepsilon}{2} h + \frac{5+3\varepsilon}{2} f$. In this case, we define $\tilde \varepsilon \triangleq \frac{2 \varepsilon}{1+\varepsilon}$.
\end{assumption}

\begin{theorem}
\label{th:brute}
   Under Assumption \ref{ass:query_brute},
 \brute{} achieves Byzantine $\frac{(2f+h-q)q + (q-2f)f}{h(q-f) \tilde \varepsilon}$-averaging agreement.
\end{theorem}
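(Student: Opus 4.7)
The plan is to reduce the analysis of $\brute{}_N$, which iterates the base rule \brute{} $T \triangleq T_\brute{}(N) = \lceil N \ln 2 / \tilde\varepsilon \rceil$ times, to two per-round estimates --- a diameter contraction and a drift of the honest average --- and then telescope across rounds. Fix an input family $\family{}{}{x}$ to a single round, set $\Delta_0 \triangleq \Delta_2(\family{}{}{x})$, and consider any honest node $j$. Let $Q_j$ be the set of $q$ senders whose vectors $j$ received, $H_j \triangleq Q_j \cap [h]$ (so $|H_j| \geq q - f$), and $S_j \subseteq Q_j$ the minimum-diameter subset of size $q-f$ that \brute{} selects. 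Because any $(q-f)$-subset of $H_j$ is a feasible candidate with diameter at most $\Delta_0$, the selected subset has diameter at most $\Delta_0$ as well. Writing $\alpha_j \triangleq |S_j \cap [h]|$ and $\beta_j \triangleq q - f - \alpha_j$, we get $\alpha_j \geq q - 2f$ and $\beta_j \leq f$. The key decomposition is $y_j = \hat y_j + (y_j - \hat y_j)$, where $\hat y_j \triangleq \frac{1}{\alpha_j}\sum_{i \in S_j \cap [h]} x_i$ is the honest subsample mean at node $j$.

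Two estimates control this decomposition. First, $y_j - \hat y_j = \frac{1}{q-f}\sum_{i \in S_j \setminus [h]}(z^{(j)}_i - \hat y_j)$; each Byzantine summand has norm at most $\Delta_2(S_j) \leq \Delta_0$, so $\|y_j - \hat y_j\| \leq \frac{\beta_j}{q-f}\Delta_0$. Second, $\hat y_j - \bar x = \frac{h - \alpha_j}{h}(\hat y_j - y'_j)$ with $y'_j \triangleq \frac{1}{h - \alpha_j}\sum_{i \in [h] \setminus S_j} x_i$ the mean of the unselected honest inputs; both $\hat y_j$ and $y'_j$ lie in the convex hull of the honest inputs, so $\|\hat y_j - y'_j\| \leq \Delta_0$ and $\|\hat y_j - \bar x\| \leq \frac{h - \alpha_j}{h}\Delta_0$. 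Summing over $j \in [h]$ and using $\sum_j \beta_j \leq hf$ together with $\sum_j (h - \alpha_j) \leq h(h - q + 2f)$ then yields
\[
\|\bar y - \bar x\| \leq \Big(\frac{f}{q-f} + \frac{h - q + 2f}{h}\Big)\Delta_0 = \frac{(2f + h - q)q + (q - 2f)f}{h(q-f)}\Delta_0 \triangleq C' \Delta_0,
\]
which is exactly the numerator of the target averaging constant.

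The main obstacle is the per-round diameter contraction $\Delta_2(\family{}{}{y}) \leq (1 - \tilde\varepsilon)\Delta_0$. Bounding $\|y_j - y_k\|$ for honest $j, k$ reuses the same decomposition: the Byzantine terms add at most $\frac{\beta_j + \beta_k}{q-f}\Delta_0$, and $\hat y_j - \hat y_k$ rewrites as $\lambda_{jk}(u - v)$ with $u, v$ convex combinations of honest inputs and $\lambda_{jk} \leq \frac{h - \min(\alpha_j, \alpha_k)}{\max(\alpha_j, \alpha_k)}$ coming from $|S_j \cap S_k \cap [h]| \geq \alpha_j + \alpha_k - h$. Assumption~\ref{ass:query_brute}, with $q \geq \frac{1+\varepsilon}{2}h + \frac{5+3\varepsilon}{2}f$ and $\tilde\varepsilon = \frac{2\varepsilon}{1+\varepsilon}$, is calibrated so that the resulting sum stays below $1 - \tilde\varepsilon$; verifying this arithmetic is the key technical step. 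Granted the contraction, after $T = \lceil N\ln 2/\tilde\varepsilon\rceil$ rounds $\Delta_T \leq (1-\tilde\varepsilon)^T \Delta_0 \leq e^{-\tilde\varepsilon T}\Delta_0 \leq 2^{-N}\Delta_0$, hence $\Delta_T^2 \leq 4^{-N}\Delta_0^2$; meanwhile the drift telescopes,
\[
\|\bar y_T - \bar x_0\| \leq \sum_{t=0}^{T-1} C' \Delta_t \leq C' \Delta_0 \sum_{t \geq 0}(1-\tilde\varepsilon)^t = \frac{C'}{\tilde\varepsilon}\Delta_0,
\]
yielding $C = C'/\tilde\varepsilon$ as stated. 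All expectations in Definition~\ref{def:averaging} are vacuous here because \brute{} is deterministic.
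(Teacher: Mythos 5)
Your per-round drift estimate is correct, and it is in fact a cleaner route to the paper's Lemma~\ref{lemma:brute_average_guarantee}: decomposing through the honest subsample mean $\hat y_j$ and writing $\bar x = \tfrac{\alpha_j}{h}\hat y_j + \tfrac{h-\alpha_j}{h}y'_j$ does give $\bigl(\tfrac{f}{q-f}+\tfrac{h+2f-q}{h}\bigr)\Delta_0 = \tfrac{(2f+h-q)q+(q-2f)f}{h(q-f)}\Delta_0$, and your final telescoping and the $2^{-N}$ computation match the paper. The genuine gap is the contraction step, which you explicitly defer as ``verifying this arithmetic'': with the estimates you propose, the arithmetic does \emph{not} verify. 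Your bound on $\normtwo{y_j-y_k}$ is, in the worst case, $\bigl(\tfrac{2f}{q-f}+\tfrac{h+2f-q}{q-2f}\bigr)\Delta_0$, i.e.\ the honest-mismatch term carries denominator $q-2f$, whereas the paper proves the sharper bound $\tfrac{4f+h-q}{q-f}\Delta_0$ with denominator $q-f$. Assumption~\ref{ass:query_brute} is calibrated so that $\tfrac{4f+h-q}{q-f}\leq 1-\tilde\varepsilon$ \emph{with equality} at $q=\tfrac{1+\varepsilon}{2}h+\tfrac{5+3\varepsilon}{2}f$, so there is no slack left to absorb your extra loss. Concretely, take $f=1$, $n=10$, $h=9$, $\varepsilon=1/3$ (so $n\geq\tfrac{6+2\varepsilon}{1-\varepsilon}f$, $\tilde\varepsilon=1/2$) and $q=9$: your bound gives $\tfrac{2}{8}+\tfrac{2}{7}\approx 0.536 > 1-\tilde\varepsilon = 0.5$. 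Since the round count $T_\brute{}(\aggrparameter)=\lceil \aggrparameter\ln 2/\tilde\varepsilon\rceil$ is fixed by the algorithm, a per-round factor exceeding $1-\tilde\varepsilon$ yields neither $\Delta_2\leq 2^{-\aggrparameter}\Delta_0$ nor the geometric series $\sum_t (1-\tilde\varepsilon)^t$ that produces the stated constant $C'/\tilde\varepsilon$; your averaging constant would also degrade accordingly.

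The fix is the paper's Lemma~\ref{lemma:brute_contraction}, which avoids the lossy detour through $\hat y_j$ and $\hat y_k$ by comparing the two selected $(q-f)$-subfamilies directly over the common denominator $q-f$: write $(q-f)(y_j-y_k)$ as a sum in which the shared honest vectors cancel, leaving at most $f$ Byzantine--Byzantine differences and at most $2f+h-q$ honest--honest differences. Each honest--honest difference is at most $\Delta_0$, and each Byzantine--Byzantine difference is at most $2\Delta_0$ by passing through a common honest vector contained in \emph{both} selected subfamilies (such a vector exists because $2(q-2f)>h$ under Assumption~\ref{ass:query_brute}, and each selected subfamily has diameter at most $\Delta_0$ by your own feasibility argument). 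This gives $\normtwo{y_j-y_k}\leq\tfrac{4f+h-q}{q-f}\Delta_0\leq(1-\tilde\varepsilon)\Delta_0$, after which the rest of your argument goes through unchanged.
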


The proof is given in the supplementary material.
It relies on the observation that, because of the filter, no Byzantine vector can significantly harm the estimation of the average.

\begin{remark}
Although \brute{} runs in linear time in $d$, it runs in exponential time in $q$.
Interestingly, assuming that each honest node fully trusts its computations and its data (which may not hold if parameter-servers do not compute gradients
as in \cite{li2014scaling}),
each honest node can use its own vector to filter out the $f$ most dissimilar gradients in linear time in $q$, and can average out all remaining vectors.
Using a similar proof as for \brute{}, the algorithm thereby defined can be shown to achieve asymptotically the same averaging constant as \brute{}, in the limit $q \gg f$; but it now runs in $\mathcal O(dq)$, (requiring  however $n \geq 7f + 1$).
\end{remark}

\begin{theorem}
\label{th:lower_bound_averaging_constant}
No asynchronous algorithm can achieve better than Byzantine $\frac{h+2f-q}{h}$-averaging agreement.
\end{theorem}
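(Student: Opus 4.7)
The plan is a two-scenario indistinguishability argument in dimension $d=1$, in the spirit of standard asynchronous Byzantine lower bounds. Throughout I assume $q \geq 2f$; the case $q < 2f$ makes the claimed bound exceed $1$ and is essentially vacuous.

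Fix a subset $S \subseteq [h]$ with $|S| = f$ and a node $j \in [h] \setminus S$ (nonempty whenever $h > f$). In \emph{Scenario A}, the honest set is $H_A = [h]$ with all honest inputs equal to $1$, and the Byzantine set is $B_A = \{h+1,\ldots,n\}$. The input diameter is $0$, so both averaging guarantees force $\expect \normtwo{\bar y - 1}^2 = 0$ and $\expect \Delta_2(\family{}{}{y})^2 = 0$; hence $j$'s output equals $1$ almost surely. In \emph{Scenario B}, the honest set is $H_B = ([h]\setminus S) \cup \{h+1,\ldots,n\}$ and the Byzantine set is $B_B = S$. I design the scheduler and the Byzantine players so that, at every round, $j$ receives messages from a fixed set $R \subseteq [n]$ of $q$ sources with identical values in the two scenarios, partitioning $R = R_1 \sqcup R_2 \sqcup R_3$ with $R_1 \subseteq [h] \setminus S$ of size $q - 2f$ (including $j$ under the convention that $j$'s own value counts toward the $q$ values used in its local computation), $R_2 = B_A$ of size $f$, and $R_3 \subseteq S$ of size $f$. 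In B, I set the honest inputs to $1$ on $R_1$, to $0$ on $R_2$, and to $0$ on $H_B \setminus R$; in A, the Byzantines in $R_2$ send value $0$ to $j$, while in B the Byzantines in $R_3$ send value $1$ to $j$. Every source in $R$ then carries the identical value to $j$ in the two executions.

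The hardest step is propagating this single-round matching into identical transcripts of $j$ across \emph{all} rounds of the protocol, since honest-in-both senders update their states from their own views, which could a priori diverge between the two executions. My plan is a standard coupling: inductively, every honest-in-both node can be given an identical view in every round, because any source whose honest send would differ between A and B must be Byzantine in exactly one of the two scenarios, and can there be instructed to echo the other scenario's honest value. Under this coupling with shared randomness, $j$'s view is identical in the two executions, so $j$'s output in B also equals $1$ almost surely.

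It remains to compute. In Scenario B the honest average is
\[
    \bar x_B \;=\; \frac{(q - 2f) \cdot 1 \;+\; f \cdot 0 \;+\; (h + f - q) \cdot 0}{h} \;=\; \frac{q - 2f}{h},
\]
while the input diameter is $1$. The diameter guarantee yields $\expect (\bar y_B - \indexvar{j}{B}{y})^2 \leq \expect \Delta_2(\family{}{}{y})^2 \leq 1/4^{\aggrparameter}$, so $|\expect \bar y_B - 1| \leq 2^{-\aggrparameter}$ by Jensen; the averaging guarantee gives $|\expect \bar y_B - \bar x_B| \leq C$. Combining via the triangle inequality,
\[
    \frac{h + 2f - q}{h} \;=\; |1 - \bar x_B| \;\leq\; 2^{-\aggrparameter} + C,
\]
and since $\aggrparameter$ may be taken arbitrarily large (the guarantee must hold for every $\aggrparameter$), I conclude $C \geq \frac{h+2f-q}{h}$.
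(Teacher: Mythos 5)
Your plan is, at bottom, the paper's own argument: your Scenario-A step is exactly the paper's quasi-unanimity lemma phrased as a two-world indistinguishability claim, your Scenario B is (up to swapping the roles of $0$ and $1$) the paper's hard instance $(0 \star (q-2f),\, 1 \star (h+2f-q))$ with $f$ Byzantines impersonating one block and the scheduler suppressing the rest, and the closing computation --- pinning one honest output at an extreme value, using asymptotic agreement to drag $\bar y$ to it, and comparing with $\bar x = (q-2f)/h$ before letting $\aggrparameter \to \infty$ --- matches the paper's; your handling of the expected-square guarantees via Jensen is in fact slightly more careful than the paper's own limit statement.

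The gap is in the coupling step you yourself flag as the hardest. The justification you give --- ``any source whose honest send would differ between A and B must be Byzantine in exactly one of the two scenarios'' --- is false under your input assignment: every node of $([h]\setminus S)\setminus R_1$ (there are $h+f-q$ of them, a positive number whenever $q < h+f$) is honest in \emph{both} scenarios yet holds input $1$ in A and input $0$ in B. Such a node's honest sends differ between the two executions, it cannot be ``instructed'' to do anything, and its view can certainly not be made identical across the two runs, since its own input is part of its view; so the inductive claim that \emph{every} honest-in-both node gets an identical view collapses. The argument is repairable, but by a different mechanism than the one you state: run the induction only over the nodes of $R_1$ (whose inputs agree across the two scenarios), and have the scheduler ensure that every node of $R_1$ --- not merely $j$ --- assembles its quorum of $q$ messages exclusively from $R = R_1 \cup R_2 \cup R_3$ in every round, delaying all messages from the conflicting honest nodes until after quorum formation (late messages are ignored in the round model), while the Byzantines of each scenario replay toward $R_1$ the honest behaviour of the corresponding nodes in the other scenario. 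This is what the paper's construction achieves implicitly by having the Byzantines delay every message from the nodes in $[q-f+1,h]$ to the nodes in $[q-f]$, so that the honest nodes whose inputs would have to differ between the two worlds never enter any relevant transcript.
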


The proof is given in the supplementary material.
It relies on the quasi-unanimity lemma, which shows that if a node receives at least $q -f$ identical vectors $x$, then it must output $x$.
We then construct an instance where, because of this and of Byzantines, honest nodes cannot agree.
Note that, as a corollary, in the regime $q=h \rightarrow \infty$ and $f = o(h)$,
\brute{} achieves asymptotically the best-possible averaging constant, up to a multiplicative constant equal to $3/2$.

\subsection{Optimal Byzantine resilience}
\label{sec:icwtm_alg}

Our second algorithm makes use of reliable broadcast\footnote{Note that \brute{} can also be straightforwardly upgraded into \textsc{RB-MDA} to gain Byzantine resilience.}:  each Byzantine node broadcasts only a single vector (the uniqueness property of reliable broadcast in \cite{abraham2004optimal}).
We denote $\family{}{}{w} \in \mathbb R^{d \cdot n}$ the family of vectors proposed by all nodes.
For each $j \in [h]$, $\indexvar{j}{}{w} = \indexvar{j}{}{x}$ is the vector of an honest node, while a Byzantine node proposes $\indexvar{j}{}{w}$ for each $j \in [h+1,n]$.
Moreover, \cite{abraham2004optimal} showed the existence of a multi-round algorithm which, by using reliable broadcast and a witness mechanism, guarantees that any two honest nodes $j$ and $k$ will collect at least $q$ similar inputs.
Formally, denoting $\quorums{j}{} \subseteq [n]$ the set of nodes whose messages were successfully delivered to node $j$ (including through relays), the algorithm by \cite{abraham2004optimal} guarantees that $\card{\quorums{j}{} \cap \quorums{k}{}} \geq q$ for any two honest nodes $j,k \in [h]$. At each iteration of our \icwtm{} algorithm, each node $j$ exploits the same reliable broadcast and witness mechanism techniques to collect other nodes' vectors.
Now, given its set $\quorums{j}{}$ of collected messages, each node $j$ applies coordinate-wise trimmed mean, denoted \cwtm{}, as follows. For each coordinate $i$, it discards the $f$ smallest $i$-th coordinates it collected, as well as the $f$ largest. We denote $\family{j}{}{z} = \family{Q^{(j)}}{}{w}$ the subfamily received by node $j$, and $S(\family{j}{}{z}[i]) \subset[n]$ the subset of nodes whose $i$-th coordinates remain after trimming. Node $j$ then computes the average $\indexvar{j}{}{y}$ of the $i$-th coordinates of this subset, i.e.
\begin{equation}
    \indexvar{j}{}{y}[i] \triangleq \frac{1}{\card{\quorums{j}{}}-2f} \sum_{k \in S(\family{j}{}{z}[i])} \indexvar{k}{}{w}[i].
\end{equation}

\icwtm{} consists of iterating \cwtm{}, on vectors received from other nodes at each communication round.
Namely, given input $\aggrparameter \in \setn$, \icwtm{} iterates \cwtm{} $T_{\icwtm{}}(\aggrparameter)$ times, where
\begin{equation}
    T_{\icwtm{}}(\aggrparameter) \triangleq \left\lceil \frac{(\aggrparameter+1) \ln 2 + \ln \sqrt{h}}{ \tilde \varepsilon} \right\rceil.
\end{equation}
The correctness of \icwtm{} can then be guaranteed under the following assumption.

\begin{assumption}[Assumption for analysis of \icwtm{}]
\label{ass:query}
There is $\varepsilon > 0$ such that $n \geq (3+\varepsilon) f$. We then set $q = n - f$, and define $\tilde \varepsilon \triangleq \frac{\varepsilon}{1+\varepsilon}$.
\end{assumption}

\begin{theorem}
\label{th:cwtm}
Under Assumption \ref{ass:query}, \icwtm{} guarantees Byzantine $\frac{4f}{\sqrt h}$-averaging agreement.
This is optimal in terms of Byzantine resilience.
Indeed, for $n \leq 3f$, no algorithm can achieve Byzantine averaging agreement.
\end{theorem}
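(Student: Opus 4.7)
The plan is to split the theorem into the positive direction (\icwtm{} achieves $(4f/\sqrt{h})$-averaging agreement under Assumption \ref{ass:query}) and the matching lower bound (no averaging agreement when $n \leq 3f$). The positive direction reduces to a single-iteration analysis of \cwtm{}, amplified by the fact that \icwtm{} iterates \cwtm{} exactly $T_{\icwtm{}}(\aggrparameter)$ times. The lower bound is a standard three-group asynchronous indistinguishability argument.

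Fix a coordinate $i$ and let $m_i, M_i$ be the min and max of the honest values $\{\indexvar{l}{}{x}[i]\}_{l\in[h]}$. I would first establish two structural properties of coordinate-wise trimmed mean: (i) for each honest $j$, $\indexvar{j}{}{y}[i] \in [m_i, M_i]$; and (ii) $\indexvar{j}{}{y}[i]$ admits a representation as a convex combination $\sum_{l\in[h]}\alpha_l^{(j,i)}\indexvar{l}{}{x}[i]$ of honest values, obtained by substituting each of the at most $f$ surviving Byzantine values with an honest value that was trimmed from the same side. The reliable-broadcast witness mechanism guarantees $|\quorums{j}{}\cap\quorums{k}{}| \geq q = n - f$ for any two honest $j,k$, so at least $q - 2f = n - 3f \geq \varepsilon f$ common honest values survive trimming with equal weight in both expansions; a convexity argument then yields the per-coordinate contraction
\begin{equation*}
  |\indexvar{j}{}{y}[i] - \indexvar{k}{}{y}[i]| \;\leq\; (1-\tilde\varepsilon)(M_i - m_i),\qquad \tilde\varepsilon = \varepsilon/(1+\varepsilon).
\end{equation*}
Iterating for $T = T_{\icwtm{}}(\aggrparameter)$ rounds, $(1-\tilde\varepsilon)^T \leq e^{-\tilde\varepsilon T} \leq 1/(2^{\aggrparameter+1}\sqrt{h})$ by the definition of $T_{\icwtm{}}$. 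To pass to the $\ell_2$ diameter, observe that $M_i - m_i \leq 2\max_{l\in[h]}|\indexvar{l}{}{x}[i] - \bar x[i]|$, hence $\sum_i (M_i - m_i)^2 \leq 4\sum_l \normtwo{\indexvar{l}{}{x} - \bar x}^2 \leq 4h\,\Delta_2(\family{}{}{x})^2$ using that $\normtwo{\indexvar{l}{}{x} - \bar x} \leq \Delta_2(\family{}{}{x})$ for the centroid $\bar x$. Combining gives $\Delta_2(\family{}{}{y})^2 \leq (1-\tilde\varepsilon)^{2T}\cdot 4h\,\Delta_2(\family{}{}{x})^2 \leq \Delta_2(\family{}{}{x})^2/4^{\aggrparameter}$.

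For the averaging constant, I would bound $\normtwo{\bar y - \bar x}$ by tracking the convex-combination weights across iterations. From (ii), the average of honest outputs satisfies $\bar y = \sum_l \bar\alpha_l \indexvar{l}{}{x}$ with $\bar\alpha_l = \frac{1}{h}\sum_j \alpha_l^{(j,i)}$ and $\sum_l \bar\alpha_l = 1$; the deviation of each $\alpha_l^{(j,i)}$ from the uniform weight $1/h$ is concentrated on the at most $f$ surviving Byzantine substitutions and is of order $O(f/h)$ in $\ell_1$. Telescoping across iterations using both the per-coordinate $(1-\tilde\varepsilon)$ decay and the same centroid-based coordinate-wise-to-$\ell_2$ conversion, I obtain $\normtwo{\bar y - \bar x}^2 \leq (4f/\sqrt{h})^2\,\Delta_2(\family{}{}{x})^2$, which is the claimed $(4f/\sqrt{h})$-averaging constant.

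For the impossibility when $n \leq 3f$, I would use the classical three-group asynchronous indistinguishability argument. Let $n = 3f$ and partition the nodes into three disjoint sets $A, B, C$ of size $f$ each; pick any $v_1 \neq v_2 \in \setr^d$. In Scenario~1, $C$ is Byzantine and $A \cup B$ honestly input $v_1$ (so by the zero-diameter input guarantee, all honest outputs equal $v_1$); in Scenario~2, $A$ is Byzantine and $B \cup C$ input $v_2$ (outputs all equal $v_2$); in Scenario~3, $B$ is Byzantine while $A$ inputs $v_1$ and $C$ inputs $v_2$, and the adversary uses asynchrony to delay all $A \leftrightarrow C$ messages until each side has halted, simulating in $B$ the Scenario-1 behavior towards $A$ and the Scenario-2 behavior towards $C$. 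Nodes in $A$ cannot distinguish Scenario~3 from Scenario~1 and therefore output $v_1$; symmetrically, $C$'s nodes output $v_2$. The honest output diameter in Scenario~3 is then $\normtwo{v_1 - v_2}$, equal to the input diameter, contradicting $\Delta_2(\family{}{}{y})^2 \leq \Delta_2(\family{}{}{x})^2/4^{\aggrparameter}$ for $\aggrparameter \geq 1$. The hardest step I expect is pinning the averaging constant to exactly $4f/\sqrt{h}$: the per-coordinate Byzantine weight asymmetry naturally contributes $O(f/h)$ and the $\sqrt{h}$-factor comes from the coordinate-wise-to-$\ell_2$ conversion, but combining these over $T$ iterations without accumulating a spurious $\tilde\varepsilon^{-1}$ factor requires careful telescoping of the weight drifts rather than a naive geometric sum of per-iteration displacements.
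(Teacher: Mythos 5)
Your contraction analysis (per-coordinate $(1-\tilde\varepsilon)$ contraction from $\card{\quorums{j}{}\cap\quorums{k}{}}\geq q$, iterated $T_{\icwtm{}}(\aggrparameter)$ times, then converted to $\ell_2$ via $\Delta_2 \leq \Delta^{cw}_2 \leq 2\sqrt{h}\,\Delta_2$) matches the paper, and your three-group partition/indistinguishability argument for $n \leq 3f$ is a valid alternative to the paper's quasi-unanimity argument (the paper instead lets the $f$ Byzantines impersonate a quasi-unanimous set of size $q-f\leq f$ and force an arbitrary output). However, there is a genuine gap in the part you yourself flag as hardest: the $\frac{4f}{\sqrt h}$ averaging constant. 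Your plan is to bound $\normtwo{\bar y - \bar x}$ by telescoping per-iteration weight drifts of order $f/h$, damped by the $(1-\tilde\varepsilon)$ decay of the diameter. Any argument of that shape sums per-round displacements of size $\frac{2f}{h}\Delta^{cw}_2$ of the \emph{current} family over a geometric sequence, and therefore produces $\frac{2f}{h\tilde\varepsilon}\Delta^{cw}_2(\family{}{}{x}) \leq \frac{4f}{\tilde\varepsilon\sqrt h}\Delta_2(\family{}{}{x})$ --- exactly the spurious $\tilde\varepsilon^{-1}$ you worry about (and which genuinely appears in the \brute{} analysis, Theorem~\ref{th:brute}). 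You assert the factor can be removed by ``careful telescoping of the weight drifts,'' but you give no mechanism for it, and tracking convex-combination weights alone will not do it: over $T_{\icwtm{}}(\aggrparameter)$ rounds the cumulative deviation of the weights from uniform is not $O(f/h)$, so the constant cannot come out of a weight-accounting argument by itself.

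The missing idea in the paper is a \emph{one-shot, iteration-independent} bound rather than a telescoped one. Sorting the honest values on each coordinate, a sandwich argument shows that after a single \cwtm{} step every honest output satisfies $\absv{\indexvar{j}{}{y}[i] - \bar x[i]} \leq \frac{h-q_j+2f}{h}\Delta^{cw}(\family{}{}{x})[i] \leq \frac{2f}{h}\Delta^{cw}(\family{}{}{x})[i]$ (using $q_j \geq q = n-f = h$ from Assumption~\ref{ass:query}), i.e., all round-one outputs lie in the fixed box
\begin{equation*}
Y_0 \;=\; \bar x + \tfrac{2f}{h}\prod_{i \in [d]} \bigl[-\Delta^{cw}(\family{}{}{x})[i],\, +\Delta^{cw}(\family{}{}{x})[i]\bigr].
\end{equation*}
Then, because every non-trimmed value always lies within the current coordinate-wise honest range (the analogue of your property (i)), this box is \emph{invariant} under all subsequent \cwtm{} rounds despite Byzantine inputs. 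Hence $\normtwo{\bar x_t - \bar x} \leq \frac{2f}{h}\Delta^{cw}_2(\family{}{}{x}) \leq \frac{4f}{\sqrt h}\Delta_2(\family{}{}{x})$ uniformly in $t$, with no geometric sum and no $\tilde\varepsilon^{-1}$. Without this invariance argument (or an equivalent), your proposal establishes asymptotic agreement and the lower bound, but only a $\frac{4f}{\tilde\varepsilon\sqrt h}$-type averaging constant, not the stated $\frac{4f}{\sqrt h}$.
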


The proof is provided in the supplementary material.
The correctness relies on a coordinate-wise analysis, and on the study of a so-called \emph{coordinate-wise diameter}, and its relation with the $\ell_2$ diameter.
The lower bound exploits the quasi-unanimity lemma.
Note that while \icwtm{} tolerates more Byzantine nodes, its averaging constant is larger than that of \brute{} by a factor of $\mathcal{O}(\sqrt{h})$.

\section{Empirical Evaluation}
\label{sec:eval}

We implemented our collaborative learning algorithms
using \emph{Garfield} library~\cite{guerraoui2021garfield}
and PyTorch~\cite{paszke2019pytorch}.
Each agreement algorithm comes in two variants: one assuming i.i.d. data (See supplementary material) and one tolerating non-i.i.d. data (Algorithm \ref{alg:learn}).
In each case, the first variants require fewer communications.
We report below on the empirical evaluation of the overhead of our four variants when compared to a non--robust collaborative learning approach.
Our baseline is indeed a vanilla fully decentralized implementation in which all nodes
share their updates with each other and then aggregate these updates by \emph{averaging} (a deployment that cannot tolerate even one Byzantine node).

We focus on throughput, measuring the number of updates the system performs per second. As we consider an asynchronous network, we report on the fastest node in each experiment.
We consider image classification tasks, using
MNIST~\cite{mnist} and CIFAR-10~\cite{cifar} datasets.
MNIST is a dataset of handwritten digits with 70,000 $28\times28$ images in 10 classes.
CIFAR-10 consists of 60,000 $32\times32$ colour images in 10 classes.
We use batches of size 100, and we
experimented with 5 models with different sizes ranging from simple models like small convolutional neural network (\emph{MNIST\_CNN} and \emph{Cifarnet}), training a few thousands of parameters, to big models like ResNet-50 with around 23M parameters.
Our experimental platform is Grid5000~\cite{g5k}.
We always employ nodes from the same cluster, each having 2 CPUs (Intel Xeon E5-2630 v4) with 14 cores, 768~GiB RAM, 2$\times$10~Gbps Ethernet, and 2 Nvidia Tesla P100
GPUs.
We set $f=1$, except when deploying our vanilla baseline.

\begin{figure}
\centering
\begin{minipage}{.49\textwidth}
  \centering
  \includegraphics[width=\textwidth]{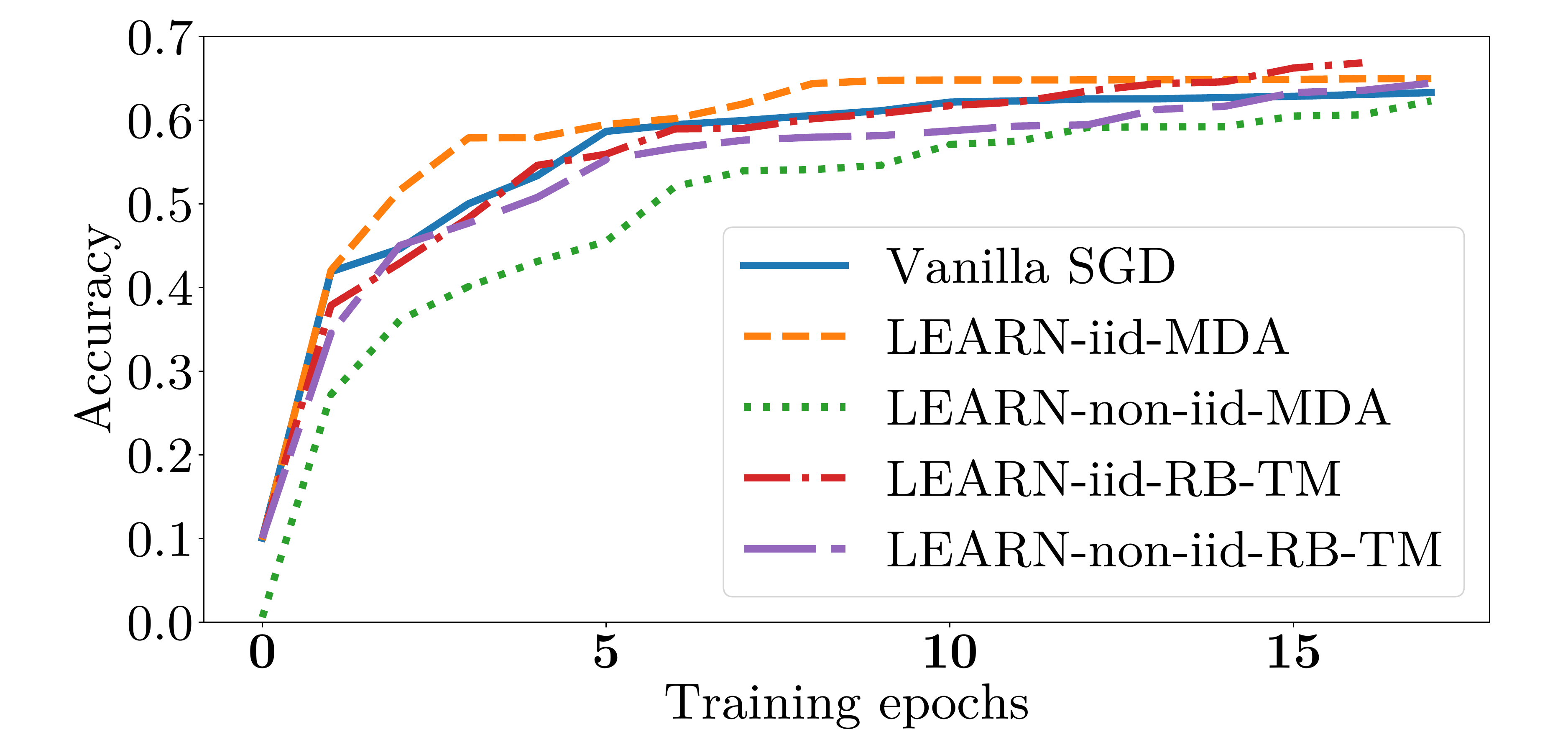}
  \captionof{figure}{Convergence of our algorithms\\ and the vanilla baseline.}
  \label{fig:convergence}
\end{minipage}
\begin{minipage}{.49\textwidth}
  \centering
  \includegraphics[width=\textwidth]{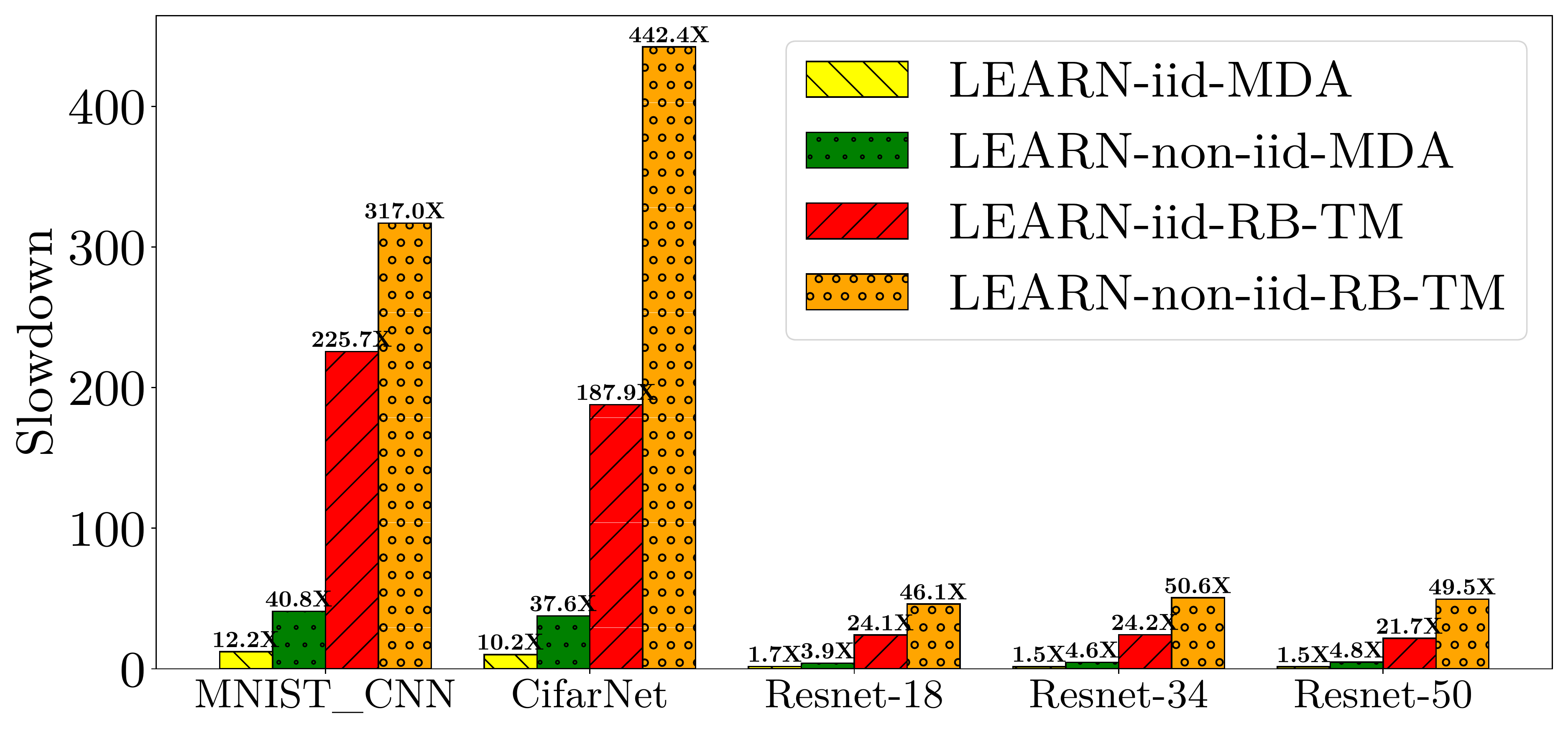}
  \captionof{figure}{Slowdown of our algorithms\\ normalized to the vanilla baseline throughput.}
  \label{fig:slowdown}
\end{minipage}
\end{figure}

Figure~\ref{fig:convergence} compares the convergence of our algorithms to the vanilla baseline w.r.t. the training epochs.
We use 7 nodes in this experiment, and  we train Resnet--18 with CIFAR10.
We verify from this figure that our algorithms can follow the same convergence trajectory as the vanilla baseline.
It is clear from the figure that the \emph{i.i.d.} versions outperform the \emph{non-i.i.d.} ones.

Figure~\ref{fig:slowdown} depicts the throughput overhead of our algorithms (with both i.i.d. and non-i.i.d. data)
compared to our vanilla baseline, with 10 nodes from the same cluster.
Three observations from this figure are in order.
First, the \brute{}--based algorithm performs better than the \icwtm{} one.
The reason is that the latter incurs much more communication messages than the former as the latter uses reliable broadcast and a witness mechanism.
Second, tolerating Byzantine nodes with i.i.d. data is much cheaper than the non-i.i.d. case.
The reason is that it is harder to detect Byzantine behavior when data is not identically distributed on the nodes, which translates into more communication steps.
Third, the slowdown is much higher with small models (i.e.,\ \emph{MNIST\_CNN} and \emph{Cifarnet}).
This is because the network bandwidth is not saturated by the small models in the vanilla case, where it gets congested with the many communication rounds required by our algorithms.
On the other hand, with the larger models, the vanilla deployment saturates the network bandwidth, making the extra communication messages account only for linear overhead.

Finally, it is important to notice that our evaluation is by no means exhaustive and our implementation has not been optimized. Our goal was to give an overview of the relative overheads. With proper optimizations, we believe the actual throughput could be increased for all implementations.
\section{Conclusion}
\label{sec:conc}

We defined and solved collaborative learning in a fully decentralized, Byzantine, heterogeneous, asynchronous and non-convex setting.
We proved that the problem is equivalent to a new abstract form of agreement, which we call averaging agreement.
We then described two solutions to averaging agreement, inducing two original solutions to collaborative learning.
Each solution is optimal along some dimension.
In particular, our lower bounds for the averaging agreement problem provide lower bounds on what any collaborative learning algorithm can achieve.
Such impossibility results would have been challenging to obtain without our reduction.
Our algorithms and our impossibility theorems are very general but can also be adapted for specific settings, such as the presence of a trusted parameter server, the assumption of i.i.d. data or a synchronous context\footnote{In the synchronous case of \brute, with $q = n$, $n \geq \frac{4+2\varepsilon}{1-\varepsilon}f$ is sufficient to guarantee $q \geq \frac{1+\varepsilon}{2} h + \frac{5+3\varepsilon}{2} f$ in Assumption \ref{ass:query_brute}. Also, note that no synchronous algorithm can achieve better than $\frac{f}{h}$-averaging agreement.}.
In the latter case for instance, our two algorithms would only require
$n \geq 4f+1$ and $n \geq 2f+1$, respectively.

\paragraph{{\color{black} Limitations and }potential negative social impacts.} Like all Byzantine learning algorithms, we essentially filter out outliers.
In practice, this may discard minorities with vastly diverging views.
Future research should aim to address this fundamental trade-off between inclusivity and robustness.
We also note that the computation time of $\brute$ grows exponentially with $q$, when $f$ is a constant fraction of $q$.

\begin{ack}
We thank Rafaël Pinot and Nirupam Gupta for their useful comments.
This work has been supported in part by the Swiss National Science Foundation projects: 200021\_182542, Machine learning and 200021\_200477, Controlling the spread of Epidemics.
Most experiments presented in this paper were carried out using the Grid'5000 testbed, supported by a scientific interest group hosted by Inria and including CNRS, RENATER and several Universities as well as other organizations (see \url{https://www.grid5000.fr}).
\end{ack}

\bibliographystyle{plain}
\bibliography{bibliography}
\newpage

\newpage
\appendix
\begin{center}
    \LARGE \bf {Supplementary Material}
\end{center}
In the proofs, we call the first and second properties of averaging agreement ``asymptotic agreement'' and ``C-averaging'' guarantee respectively (see Equation (\ref{eq:avg_agreement}) in the paper).

Moreover, we denote by $\byzfamily{j}{}(\family{}{}{x}) = \left( \indexvar{j,1}{}{\byz}(\family{}{}{x}), \ldots, \indexvar{j,q}{}{\byz}(\family{}{}{x}) \right)$ the family of inputs (of size $q$) collected by node $j$.
There thus exists a bijection $\tau : \quorums{j}{t} \rightarrow [q]$,
where $\quorums{j}{t}$ is the set of nodes that successfully delivered messages to $j$ at round $t$,
such that $\indexvar{j,\tau(k)}{}{\byz} = \indexvar{k}{}{x}$ for all honest nodes $k \in [h]$.

\section{The Equivalence}
\subsection*{Preliminary lemmas}
\begin{lemma}
\label{lemma:diameter_square_unbalanced_inequality}
For any $\alpha>0$ and any two vectors $u$ and $v$, we have
\begin{equation}
    \normtwo{u+v}^2 \leq (1+\alpha^{-1}) \normtwo{u}^2 + (1+\alpha) \normtwo{v}^2.
\end{equation}
As an immediate corollary, for any two families $\family{}{}{u}$ and $\family{}{}{v}$ of vectors, we have
\begin{equation}
    \Delta_2\left(\family{}{}{u}+\family{}{}{v}\right)^2 \leq (1+\alpha^{-1}) \Delta_2\left(\family{}{}{u}\right)^2 + (1+\alpha) \Delta_2\left(\family{}{}{v}\right)^2.
\end{equation}
\end{lemma}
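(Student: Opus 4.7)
The plan is to treat this as a standard Young-type inequality. First I would expand the left-hand side of the scalar inequality as $\normtwo{u+v}^2 = \normtwo{u}^2 + 2\langle u,v\rangle + \normtwo{v}^2$, which reduces the claim to showing $2\langle u,v\rangle \leq \alpha^{-1}\normtwo{u}^2 + \alpha \normtwo{v}^2$. I would prove this cleanly by observing that
\begin{equation}
0 \leq \normtwo{\alpha^{-1/2} u - \alpha^{1/2} v}^2 = \alpha^{-1}\normtwo{u}^2 - 2\langle u, v\rangle + \alpha \normtwo{v}^2,
\end{equation}
and rearranging. Adding $\normtwo{u}^2 + \normtwo{v}^2$ to both sides of the resulting inequality $2\langle u,v\rangle \leq \alpha^{-1}\normtwo{u}^2 + \alpha \normtwo{v}^2$ then yields the first claim. (An alternative route would be Cauchy--Schwarz followed by AM--GM, but the square-expansion argument is the most direct and leaves no room for sign issues.)

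For the corollary, I would unfold the definition of the $\ell_2$-diameter. For any two honest indices $j,k$,
\begin{equation}
\normtwo{(\indexvar{j}{}{u}+\indexvar{j}{}{v}) - (\indexvar{k}{}{u}+\indexvar{k}{}{v})}^2 = \normtwo{(\indexvar{j}{}{u}-\indexvar{k}{}{u}) + (\indexvar{j}{}{v}-\indexvar{k}{}{v})}^2.
\end{equation}
I would then apply the scalar inequality to the pair of difference vectors $\indexvar{j}{}{u}-\indexvar{k}{}{u}$ and $\indexvar{j}{}{v}-\indexvar{k}{}{v}$, and upper-bound each resulting term by $\Delta_2(\family{}{}{u})^2$ and $\Delta_2(\family{}{}{v})^2$ respectively. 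Taking the maximum over $j,k \in [h]$ on the left then gives exactly the claimed inequality.

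The statement is completely elementary, so there is no genuine obstacle here; the only thing to be careful about is the direction of the weighting (which side of the inequality gets $1+\alpha^{-1}$ versus $1+\alpha$), and making sure the proof works for all $\alpha>0$, which the square-expansion argument handles automatically since $\alpha^{\pm 1/2}$ is real and positive. This lemma will presumably be applied later with a judiciously chosen $\alpha$ (e.g.\ balancing a dominant term against a smaller perturbation in the averaging-agreement error analysis), so I would state it in this clean asymmetric form rather than specializing $\alpha=1$.
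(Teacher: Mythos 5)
Your proof is correct and is essentially the paper's own argument: both rest on the identity $\alpha^{-1}\normtwo{u}^2 + \alpha\normtwo{v}^2 - 2\,u\cdot v = \normtwo{\alpha^{-1/2}u - \alpha^{1/2}v}^2 \geq 0$, with the paper simply writing the difference of the two sides directly rather than isolating the cross term first. Your handling of the corollary (applying the scalar bound to pairwise differences and maximizing over $j,k$) is exactly the step the paper leaves implicit as ``immediate.''
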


\begin{proof}
  We have the following inequalities:
  \begin{align}
      (1+\alpha^{-1}) \normtwo{u}^2 + (1+\alpha) \normtwo{v}^2 - \normtwo{u+v}^2
      &= \alpha^{-1} \normtwo{u}^2 + \alpha \normtwo{v}^2 - 2 u \cdot v \\
      &= \normtwo{\alpha^{-1/2} u - \alpha^{1/2} v}^2 \geq 0.
  \end{align}
  Rearranging the terms yields the lemma.
\end{proof}

\begin{lemma}
\label{lemma:diameter_square_balanced_inequality}
For any vector family $u_1, \ldots, u_N$, we have
\begin{equation}
    \normtwo{\sum_{j \in [N]} u_j}^2 \leq N \sum_{j \in [N]} \normtwo{u_j}^2.
\end{equation}
As an immediate corollary, for any family of vector families $\family{}{}{u_1}, \ldots, \family{}{}{u_N}$, we have
\begin{equation}
    \Delta_2\left(\sum_{j \in [N]} \family{}{}{u_j} \right)^2 \leq N \sum_{j \in [N]} \Delta_2\left(\family{}{}{u_j}\right)^2.
\end{equation}
\end{lemma}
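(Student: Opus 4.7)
The plan is to prove the main vector inequality by a direct application of Cauchy--Schwarz, and then derive the diameter corollary by applying the main inequality coordinate-of-family by coordinate-of-family.

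For the first inequality, I would first invoke the triangle inequality to write $\normtwo{\sum_{j \in [N]} u_j} \leq \sum_{j \in [N]} \normtwo{u_j}$, and then apply the Cauchy--Schwarz inequality to the two real-valued vectors $(1, \ldots, 1) \in \setr^N$ and $(\normtwo{u_1}, \ldots, \normtwo{u_N}) \in \setr^N$ to obtain $\sum_{j \in [N]} \normtwo{u_j} \leq \sqrt{N} \bigl( \sum_{j \in [N]} \normtwo{u_j}^2 \bigr)^{1/2}$. Squaring the composed bound yields exactly the desired inequality. An alternative route, which would work equally well, is to expand $\normtwo{\sum_j u_j}^2 = \sum_{j,k} \langle u_j, u_k \rangle$ and use the elementary bound $\langle u_j, u_k \rangle \leq \tfrac{1}{2}(\normtwo{u_j}^2 + \normtwo{u_k}^2)$; summing over the $N^2$ pairs gives the factor $N$ on the right-hand side.

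For the corollary on diameters, I would fix any two indices $i, k$ of the vector families and observe that, by definition of the sum of families, the $i$-th entry of $\sum_{j \in [N]} \family{}{}{u_j}$ is $\sum_{j \in [N]} \indexvar{i}{}{u_j}$. Therefore
\begin{equation}
    \normtwo{\textstyle\sum_{j \in [N]} \indexvar{i}{}{u_j} - \sum_{j \in [N]} \indexvar{k}{}{u_j}}^2
    = \normtwo{\textstyle\sum_{j \in [N]} \bigl( \indexvar{i}{}{u_j} - \indexvar{k}{}{u_j}\bigr)}^2
    \leq N \sum_{j \in [N]} \normtwo{\indexvar{i}{}{u_j} - \indexvar{k}{}{u_j}}^2,
\end{equation}
where the inequality is the main bound applied to the $N$ vectors $\indexvar{i}{}{u_j} - \indexvar{k}{}{u_j}$. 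Upper-bounding each term on the right by $\Delta_2(\family{}{}{u_j})^2$ and then taking the maximum over $i, k$ on the left yields the corollary.

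There is no real obstacle here: the only mild subtlety is simply being careful that the bound on the right-hand side no longer depends on $i, k$ before taking the maximum, so that the supremum can be moved inside. Both statements are deterministic and pointwise, so no probabilistic manipulations are needed and the proof is essentially one display plus the observation that the diameter is the max of pairwise norms.
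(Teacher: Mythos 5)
Your proposal is correct. The only difference with the paper lies in how the main inequality is obtained: the paper applies convexity of $u \mapsto \normtwo{u}^2$ (Jensen's inequality) to the average $\frac{1}{N}\sum_{j \in [N]} u_j$ and multiplies by $N^2$, whereas you go through the triangle inequality followed by Cauchy--Schwarz on $(1,\ldots,1)$ and $(\normtwo{u_1},\ldots,\normtwo{u_N})$ (or, equivalently, the expansion of the squared norm with the bound $\langle u_j,u_k\rangle \leq \tfrac12(\normtwo{u_j}^2+\normtwo{u_k}^2)$). These are interchangeable one-line arguments for the same elementary fact, so neither buys anything substantive over the other; if anything, the convexity route is marginally shorter, while your inner-product expansion makes it transparent where the factor $N$ comes from. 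For the diameter corollary, the paper simply declares it immediate, and your write-up supplies exactly the missing bookkeeping: apply the main bound to the differences $\indexvar{i}{}{u_j}-\indexvar{k}{}{u_j}$, bound each term by $\Delta_2(\family{}{}{u_j})^2$ so the right-hand side no longer depends on $i,k$, and only then take the maximum over the pair. That is the correct and complete way to fill in the step the paper leaves implicit.
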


\begin{proof}
  Notice that $u \mapsto \normtwo{u}^2$ is a convex function. As a result,
  \begin{equation}
      \normtwo{\frac{1}{N} \sum_{j \in [N]} u_j}^2 \leq \frac{1}{N} \sum_{j \in [N]} \normtwo{u_j}^2.
  \end{equation}
  Multiplying both sides by $N^2$ allows to conclude.
\end{proof}

\begin{lemma}
\label{lemma:diameter_of_small_vectors}
For any vector family $\family{}{}{u} \in \mathbb R^{d \cdot h}$, we have
\begin{equation}
    \Delta_2(\family{}{}{u}) \leq 2 \max_{j \in [h]} \normtwo{\indexvar{j}{}{u}}.
\end{equation}
\end{lemma}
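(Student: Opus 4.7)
The plan is to apply the triangle inequality directly to the definition of $\Delta_2$. Recall that $\Delta_2(\family{}{}{u}) = \max_{j,k \in [h]} \normtwo{\indexvar{j}{}{u} - \indexvar{k}{}{u}}$, so bounding each pairwise difference suffices to bound the maximum.

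First I would fix arbitrary indices $j, k \in [h]$ and write $\indexvar{j}{}{u} - \indexvar{k}{}{u} = \indexvar{j}{}{u} + (-\indexvar{k}{}{u})$, then apply the triangle inequality of the $\ell_2$ norm to get $\normtwo{\indexvar{j}{}{u} - \indexvar{k}{}{u}} \leq \normtwo{\indexvar{j}{}{u}} + \normtwo{\indexvar{k}{}{u}}$. Each of the two terms on the right is bounded above by $\max_{i \in [h]} \normtwo{\indexvar{i}{}{u}}$, so the sum is at most $2 \max_{i \in [h]} \normtwo{\indexvar{i}{}{u}}$.

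Finally I would take the maximum over $j, k \in [h]$ on the left-hand side; since the right-hand side is independent of $j$ and $k$, the bound is preserved, yielding $\Delta_2(\family{}{}{u}) \leq 2 \max_{j \in [h]} \normtwo{\indexvar{j}{}{u}}$, as claimed.

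There is no real obstacle here: the lemma is an immediate consequence of the triangle inequality, and I expect it to be used later as a convenient bookkeeping device for bounding diameters of families (for instance, families of noise or gradient perturbation vectors) by the largest individual norm in the family.
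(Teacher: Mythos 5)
Your proof is correct and follows essentially the same argument as the paper: apply the triangle inequality to each pairwise difference $\normtwo{\indexvar{j}{}{u} - \indexvar{k}{}{u}} \leq \normtwo{\indexvar{j}{}{u}} + \normtwo{\indexvar{k}{}{u}}$ and bound both terms by the maximum norm over the family. No gap or difference in approach.
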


\begin{proof}
  We have the inequalities
  \begin{align}
      \Delta_2(\family{}{}{u}) &= \max_{j,k \in [h]} \normtwo{\indexvar{j}{}{u} - \indexvar{k}{}{u}}
      \leq \max_{j,k \in [h]} \normtwo{\indexvar{j}{}{u}} + \normtwo{\indexvar{k}{}{u}} \\
      &= \max_{j \in [h]} \normtwo{\indexvar{j}{}{u}} + \max_{k \in [h]} \normtwo{\indexvar{k}{}{u}}
      = 2 \max_{j \in [h]} \normtwo{\indexvar{j}{}{u}},
  \end{align}
  which is the lemma.
\end{proof}

We now prove that \learn{} solves collaborative learning. Note that all the proofs depend on some quantity $\alpha_t$, which will eventually be defined as $\alpha_t \triangleq \max \left\lbrace 1/\sqrt{t}, \sigma_t \right\rbrace$. Note also that we then have $\alpha_t \leq \bar \alpha \triangleq \max \left\lbrace 1, \sigma \right\rbrace$ since $b_t \geq 1$. We also define
\begin{equation}
\label{eq:error_def}
  \noises{j}{t} \triangleq \grads{j}{t} - \nabla \loss^{(j)}(\params{j}{t}),
\end{equation}
the gradient estimation error (noise) of node $j$ at round $t$, whose norm is bounded by $\sigma_t$ (see Section \ref{sec:assumptions}).

\begin{lemma}
\label{lemma:guanyu_gradient_bound}
  Under assumptions (\ref{ass:lipschitz}, \ref{ass:variance}), for any $0 < \alpha_t \leq \bar \alpha$,
  we have the following bound on the expected $\ell_2$ diameter of gradients:
  \begin{equation}
    \expect_{\family{}{t}{\noise} | \family{}{t}{\param}} \Delta_2 \left( \family{}{t}{\grad} \right)^2
    \leq (1+\alpha_t) \bgd^2 + 16 \bar \alpha \alpha_t^{-1} \left( \lipschitz^2 \Delta_2 ( \family{}{t}{\param} )^2 +  h \sigma_t^2 \right).
  \end{equation}
\end{lemma}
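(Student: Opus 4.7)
The plan is to decompose each stochastic gradient into three parts whose diameters are controlled by different quantities, and then combine the two diameter inequalities (Lemmas above) with the right trade-off parameter $\alpha_t$.

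First I would write, for each honest $j \in [h]$,
\begin{equation}
    \grads{j}{t} = \underbrace{\localgrad{j}{\avgparam{t}}}_{A_j} \;+\; \underbrace{\localgrad{j}{\params{j}{t}} - \localgrad{j}{\avgparam{t}}}_{B_j} \;+\; \underbrace{\noises{j}{t}}_{C_j},
\end{equation}
so that $\family{}{t}{\grad} = \vec A + \vec B + \vec C$ as families. The $\vec A$ part has deterministic diameter controlled by the heterogeneity constant $\bgd$ (by definition of $\bgd$, $\Delta_2(\vec A) \leq \bgd$). The $\vec B$ part is a ``small'' family controlled by smoothness, and $\vec C$ is a ``small'' family controlled by the noise assumption. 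The asymmetry of the roles in the final bound (with the $(1+\alpha_t)\bgd^2$ coefficient on $\vec A$) tells me exactly how to apply the unbalanced lemma.

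Next I would invoke Lemma~\ref{lemma:diameter_square_unbalanced_inequality} with $\alpha=\alpha_t$, assigning $\vec A$ to the coefficient $(1+\alpha)$ and $\vec B+\vec C$ to the coefficient $(1+\alpha^{-1})$, giving
\begin{equation}
    \Delta_2(\family{}{t}{\grad})^2 \leq (1+\alpha_t)\,\Delta_2(\vec A)^2 + (1+\alpha_t^{-1})\,\Delta_2(\vec B + \vec C)^2 \leq (1+\alpha_t)\bgd^2 + (1+\alpha_t^{-1})\,\Delta_2(\vec B + \vec C)^2.
\end{equation}
Then I would apply Lemma~\ref{lemma:diameter_square_balanced_inequality} with $N=2$ to split $\Delta_2(\vec B+\vec C)^2 \leq 2\Delta_2(\vec B)^2 + 2\Delta_2(\vec C)^2$. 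For $\vec B$, Lemma~\ref{lemma:diameter_of_small_vectors} combined with $\lipschitz$-smoothness (Assumption~\ref{ass:lipschitz}) and the fact that $\avgparam{t}$ lies in the convex hull of $\{\params{j}{t}\}_{j \in [h]}$ so $\max_j \normtwo{\params{j}{t}-\avgparam{t}} \leq \Delta_2(\family{}{t}{\param})$ gives $\Delta_2(\vec B)^2 \leq 4\lipschitz^2 \Delta_2(\family{}{t}{\param})^2$. For $\vec C$, Lemma~\ref{lemma:diameter_of_small_vectors} yields $\Delta_2(\vec C)^2 \leq 4\max_j \normtwo{\noises{j}{t}}^2 \leq 4\sum_{j\in[h]} \normtwo{\noises{j}{t}}^2$, and taking conditional expectation over the gradient noises (using Assumption~\ref{ass:variance}) gives $\expect\Delta_2(\vec C)^2 \leq 4h\sigma_t^2$.

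Combining, I obtain
\begin{equation}
    \expect_{\family{}{t}{\noise}|\family{}{t}{\param}} \Delta_2(\family{}{t}{\grad})^2 \leq (1+\alpha_t)\bgd^2 + 8(1+\alpha_t^{-1})\bigl(\lipschitz^2\Delta_2(\family{}{t}{\param})^2 + h\sigma_t^2\bigr).
\end{equation}
The last step is cosmetic: I would absorb the $(1+\alpha_t^{-1})$ into $2\bar\alpha\alpha_t^{-1}$ using $\alpha_t \leq \bar\alpha$ and $\bar\alpha \geq 1$ (so both $1 \leq \bar\alpha\alpha_t^{-1}$ and $\alpha_t^{-1} \leq \bar\alpha\alpha_t^{-1}$), turning $8(1+\alpha_t^{-1}) \leq 16\bar\alpha\alpha_t^{-1}$, which matches the stated bound.

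The main (minor) obstacle is bookkeeping the constants: choosing the right role assignment in the unbalanced inequality so that $\bgd^2$ carries the factor $(1+\alpha_t)$ rather than $(1+\alpha_t^{-1})$, and then showing that the factor $8(1+\alpha_t^{-1})$ absorbs cleanly into $16\bar\alpha\alpha_t^{-1}$ using only $\alpha_t \leq \bar\alpha$ and $\bar\alpha \geq 1$. Everything else is a routine application of the three preliminary lemmas together with Assumptions~\ref{ass:lipschitz} and~\ref{ass:variance}.
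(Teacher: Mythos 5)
Your proposal is correct and follows essentially the same route as the paper's proof: the same decomposition into the gradient at the average parameter plus a smoothness term plus noise, the same unbalanced/balanced diameter inequalities with the $(1+\alpha_t)$ weight on the heterogeneity term, the same bounds $\Delta_2(\vec B)^2 \leq 4\lipschitz^2 \Delta_2(\family{}{t}{\param})^2$ and $\expect \Delta_2(\family{}{t}{\noise})^2 \leq 4h\sigma_t^2$, and the same absorption $1+\alpha_t^{-1} \leq 2\bar\alpha\alpha_t^{-1}$. No gaps.
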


\begin{proof}
  Note that we have
  \begin{align}
      \grads{j}{t} &= \localgrad{j}{\params{j}{t}} + \noises{j}{t} \\
      &=  \localgrad{j}{\avgparam{t}} +\left( \left(\localgrad{j}{\params{j}{t}} - \localgrad{j}{\avgparam{t}} \right) +  \noises{j}{t} \right).
  \end{align}
  Applying Lemma \ref{lemma:diameter_square_unbalanced_inequality} with $\alpha = \alpha_t^{-1}$, and then Lemma \ref{lemma:diameter_square_balanced_inequality} to the last terms then yields
  \begin{align}
      \Delta_2 \left( \family{}{t}{\grad} \right)^2
      &\leq (1+\alpha_t) \Delta_2 \left( \localgradfamily \left(\avgparam{t}\right) \right)^2 \\
      &+ (1+\alpha_t^{-1}) \left( 2\Delta_2 \left( \localgradfamily{}\left( \family{}{t}{\param} \right) - \localgradfamily{} \left( \avgparam{t} \right) \right)^2 + 2 \Delta_2 \left( \family{}{t}{\noise} \right)^2 \right).
  \end{align}
  Note that
  \begin{align}
      \Delta_2 \left( \localgradfamily \left(\avgparam{t}\right) \right)^2
      &= \max_{j,k \in [h]} \normtwo{
      \localgrad{j}{\avgparam{t}} - \localgrad{k}{\avgparam{t}} }^2 \leq K^2.
  \end{align}
  The second term can be controlled using Lemma \ref{lemma:diameter_of_small_vectors}, which yields
  \begin{align}
      \Delta_2 \left( \localgradfamily{} \left(\family{}{t}{\param}\right) - \localgradfamily{} \left( \avgparam{t} \right) \right)
      &\leq 2 \max_{j \in [h]} \normtwo{\localgrad{j}{\params{j}{t}} - \localgrad{j}{\avgparam{t}}} \\
      &\leq 2 \max_{j \in [h]} \lipschitz \normtwo{\params{j}{t} - \avgparam{t}} \leq 2 \lipschitz \Delta_2(\family{}{t}{\param}).
  \end{align}
  To bound the third term, first note that Lemma \ref{lemma:diameter_of_small_vectors} implies that $\Delta_2(\family{}{t}{\noise}) \leq 2 \max_{j \in [h]} \normtwo{\noises{j}{t}}$.
  Thus,
  \begin{align}
      \expect_{\family{}{t}{\noise} | \family{}{t}{\param}} \Delta_2(\family{}{t}{\noise})^2
      &\leq 4 \expect_{\family{}{t}{\noise} | \family{}{t}{\param}} \max_{j \in [h]} \normtwo{\noises{j}{t}}^2
      \leq 4 \expect_{\family{}{t}{\noise} | \family{}{t}{\param}} \sum_{j \in [h]} \normtwo{\noises{j}{t}}^2 \\
      &= 4h \expect_{\family{}{t}{\noise} | \family{}{t}{\param}} \normtwo{\noises{j}{t}}^2
      \leq 4h \sigma_t^2.
  \end{align}
  Combining it all, and using $1+\alpha_t^{-1} \leq (\bar \alpha+1) \alpha_t^{-1} \leq 2\bar \alpha \alpha_t^{-1}$ for $\alpha_t \leq \bar \alpha$ and $\bar \alpha = \max \left\lbrace 1, \sigma \right\rbrace \geq 1$, yields the result.
\end{proof}

\begin{definition}
The (stochastic) effective gradient $\effectgrads{j}{t}$ of node $j$ is defined by
\begin{equation}
  \effectgrads{j}{t} = \frac{\params{j}{t} - \params{j}{t+1}}{\eta}.
\end{equation}
\end{definition}
In particular, we shall focus on the effective gradient of the average parameter, which turns out to also be the average of the effective gradients, that is,
\begin{equation}
\label{eq:avg_eff_grad}
    \avgeffectgrad{t} \triangleq \frac{1}{h} \sum_{j \in [h]} \effectgrads{j}{t}
    = \frac{\avgparam{t} - \avgparam{t+1}}{\eta}.
\end{equation}

\begin{lemma}
\label{lemma:guanyu_effective_gradient_square}
  Under assumptions (\ref{ass:lipschitz}, \ref{ass:variance}), for any $0 < \alpha_t \leq \bar \alpha$, the expected discrepancy between the average effective gradient and the true gradient at the average parameter is bounded as follows:
  \begin{align}
    &\expect_{\family{}{t}{\noise}, \aggr{}|\family{}{t}{\param}} \normtwo{\avgeffectgrad{t} - \avglocalgrad{\avgparam{t}}}^2 \leq
    (1+\alpha_t) C^2 \expect_{\family{}{t}{\noise}|\family{}{t}{\param}} \Delta_2 \left(\family{}{t}{\grad}\right)^2 \nonumber \\
    &\qquad \qquad \qquad + 6\bar \alpha \alpha_t^{-1} \left[ \frac{\sigma_t^2}{h} + \left( \lipschitz^2 + \frac{2 C^2}{\eta^2} \right) \Delta_2\left( \family{}{t}{\param} \right)^2 + \frac{2 C^2}{t^2} \expect_{\family{}{t}{\noise}|\family{}{t}{\param}} \Delta_2 \left(\family{}{t}{\grad}\right)^2  \right].
  \end{align}
\end{lemma}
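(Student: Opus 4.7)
The plan is to decompose $\avgeffectgrad{t} - \avglocalgrad{\avgparam{t}}$ into four additive pieces that exactly match the four summands on the right-hand side, then bound each one using Lemmas~\ref{lemma:diameter_square_unbalanced_inequality}~and~\ref{lemma:diameter_square_balanced_inequality} together with the two guarantees of $\aggr{}$. Using the updates $\params{j}{t+1/2} = \params{j}{t} - \eta\cwtmgrads{j}{t}$ and $\params{j}{t+1} = \aggr{}_1(\family{}{t+1/2}{\param})$, Equation~(\ref{eq:avg_eff_grad}) gives $\avgeffectgrad{t} = \avgcwtmgrad{t} + \eta^{-1}(\avgparam{t+1/2} - \avgparam{t+1})$. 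Subtracting $\avglocalgrad{\avgparam{t}}$ and splitting $\avggrad{t} - \avglocalgrad{\avgparam{t}}$ via (\ref{eq:error_def}) yields the decomposition
\begin{equation*}
\avgeffectgrad{t} - \avglocalgrad{\avgparam{t}}
= \underbrace{(\avgcwtmgrad{t} - \avggrad{t})}_{A}
+ \underbrace{\tfrac{1}{h}\!\!\sum_{j \in [h]}\!\!\left[\localgrad{j}{\params{j}{t}} - \localgrad{j}{\avgparam{t}}\right]}_{B}
+ \underbrace{\tfrac{1}{h}\!\!\sum_{j \in [h]}\!\!\noises{j}{t}}_{N}
+ \underbrace{\eta^{-1}(\avgparam{t+1/2} - \avgparam{t+1})}_{D}.
\end{equation*}

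Next I would apply Lemma~\ref{lemma:diameter_square_unbalanced_inequality} with parameter $\alpha_t$ to separate $A$ (whose squared norm controls the dominant term) from $B+N+D$. This yields the $(1+\alpha_t)$ coefficient in front of $\normtwo{A}^2$ and a coefficient $1+\alpha_t^{-1} \leq 2\bar\alpha\alpha_t^{-1}$ in front of $\normtwo{B+N+D}^2$. Applying Lemma~\ref{lemma:diameter_square_balanced_inequality} to this three-term sum produces an extra factor of $3$, combining into the announced $6\bar\alpha\alpha_t^{-1}$ multiplier. It then remains to bound each of $\normtwo{A}^2$, $\normtwo{B}^2$, $\normtwo{N}^2$, $\normtwo{D}^2$ in expectation.

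The bounds on $A$, $B$, and $N$ are short. For $A$, the $C$-averaging guarantee of $\aggr{}_{\aggrparameter(t)}$ applied to the inputs $\family{}{t}{\grad}$ directly gives $\expect\normtwo{A}^2 \leq C^2 \Delta_2(\family{}{t}{\grad})^2$. For $B$, Jensen's inequality combined with Assumption~\ref{ass:lipschitz} and the elementary bound $\normtwo{\params{j}{t} - \avgparam{t}} \leq \Delta_2(\family{}{t}{\param})$ yields $\normtwo{B}^2 \leq \lipschitz^2 \Delta_2(\family{}{t}{\param})^2$. For $N$, the independence of the $\noises{j}{t}$ across honest nodes together with Assumption~\ref{ass:variance} collapses the variance of the average to $\expect\normtwo{N}^2 \leq \sigma_t^2/h$.

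The delicate step, and the one I expect to require the most care, is the contraction term $D$. Applying the $C$-averaging property of $\aggr{}_1$ to its input $\family{}{t+1/2}{\param}$ bounds $\expect\normtwo{D}^2$ by $\eta^{-2}C^2 \Delta_2(\family{}{t+1/2}{\param})^2$, but this diameter must itself be unfolded: since $\family{}{t+1/2}{\param} = \family{}{t}{\param} - \eta\family{}{t}{\cwtmgrad}$, a second application of Lemma~\ref{lemma:diameter_square_unbalanced_inequality} with $\alpha=1$ produces $\Delta_2(\family{}{t+1/2}{\param})^2 \leq 2\Delta_2(\family{}{t}{\param})^2 + 2\eta^2\Delta_2(\family{}{t}{\cwtmgrad})^2$. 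Finally, the asymptotic agreement guarantee of $\aggr{}_{\aggrparameter(t)}$ (via the choice $\aggrparameter(t) = \lceil \log_2 t\rceil$, see (\ref{eq:cwtmgrad_guarantee})) bounds $\expect\Delta_2(\family{}{t}{\cwtmgrad})^2$ by $\Delta_2(\family{}{t}{\grad})^2/t^2$, yielding exactly the $2C^2/\eta^2$ and $2C^2/t^2$ factors that appear in the statement. The main bookkeeping obstacle is that these bounds live under different conditional expectations (first over $\family{}{t}{\noise}$ given $\family{}{t}{\param}$, then over the randomness of both invocations of $\aggr{}$ given $\family{}{t}{\grad}$), so the last step is to apply the tower rule carefully and collect the resulting coefficients into the stated inequality.
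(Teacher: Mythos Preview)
Your proposal is correct and matches the paper's proof essentially step for step: the same four-term decomposition of $\avgeffectgrad{t}-\avglocalgrad{\avgparam{t}}$, the same use of Lemma~\ref{lemma:diameter_square_unbalanced_inequality} with parameter $\alpha_t$ to isolate $A$ from $B+N+D$, the same factor-$3$ split via Lemma~\ref{lemma:diameter_square_balanced_inequality}, and the same treatment of $D$ through the $C$-averaging guarantee of $\aggr{}_1$, the two-term split of $\Delta_2(\family{}{t+1/2}{\param})^2$, and the asymptotic-agreement bound on $\Delta_2(\family{}{t}{\cwtmgrad})^2$. The only cosmetic difference is that the paper invokes Lemma~\ref{lemma:diameter_square_balanced_inequality} (with $N=2$) rather than Lemma~\ref{lemma:diameter_square_unbalanced_inequality} with $\alpha=1$ for that last split, which of course yields the identical factor~$2$.
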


\begin{proof}
  Note that
  \begin{align}
   \avgparam{t+1} - \avgparam{t} &= (\avgparam{t+1} - \avgparam{t+1/2}) + (\avgparam{t+1/2}-\avgparam{t}) \\
   &= (\avgparam{t+1} - \avgparam{t+1/2}) - \eta \avgcwtmgrad{t}.
  \end{align}
  As a result $\avgeffectgrad{t} = \avggrad{t} + \left( \avgcwtmgrad{t} - \avggrad{t} \right) - \frac{1}{\eta} (\avgparam{t+1} - \avgparam{t+1/2})$. Moreover, we have
  \begin{align}
    \avggrad{t} &= \frac{1}{h} \sum_{j \in [h]} \localgrad{j}{\params{j}{t}} + \frac{1}{h} \sum_{j \in [h]} \noises{j}{t} \\
    &= \realgrad{\avgparam{t}} + \frac{1}{h} \sum_{j \in [h]} \left(\localgrad{j}{\params{j}{t}} - \localgrad{j}{\avgparam{t}} \right) + \frac{1}{h} \sum_{j \in [h]} \noises{j}{t},
  \end{align}
  where $\realgrad{\avgparam{t}} = \frac{1}{h} \sum_{j \in [h]} \localgrad{j}{\avgparam{t}}$ is the average gradient at the average parameter. This then yields:
\begin{align}
  \avgeffectgrad{t} - \realgrad{\avgparam{t}} &= \frac{1}{h} \sum_{j \in [h]} \left( \localgrad{j}{\params{j}{t}} - \localgrad{j}{\avgparam{t}} \right) + \frac{1}{h} \sum_{j \in [h]} \noises{j}{t} \nonumber \\
  &\qquad \qquad \qquad  \qquad + \frac{1}{\eta} \left( \avgparam{t+1/2} - \avgparam{t+1} \right) + \left( \avgcwtmgrad{t} - \avggrad{t} \right).
\end{align}
Applying Lemma \ref{lemma:diameter_square_unbalanced_inequality} for $\alpha = \alpha_t$ (by isolating the first three terms), and then Lemma \ref{lemma:diameter_square_balanced_inequality} to the first three terms then yields
\begin{align}
  \normtwo{\avgeffectgrad{t} - \realgrad{\avgparam{t}}}^2 &\leq 3(1+\alpha_t^{-1}) \normtwo{\frac{1}{h} \sum_{j \in [h]} \left( \localgrad{j}{\params{j}{t}} - \localgrad{j}{\avgparam{t}} \right)}^2 \nonumber \\
  &+ 3(1+\alpha_t^{-1}) \normtwo{\frac{1}{h} \sum_{j \in [h]} \noises{j}{t}}^2
  + \frac{3(1+\alpha_t^{-1})}{\eta^2} \normtwo{ \avgparam{t+1/2} - \avgparam{t+1} }^2  \nonumber \\
  &+ (1+\alpha_t) \normtwo{ \avgcwtmgrad{t} - \avggrad{t} }^2.
\end{align}

We now note that the expectation of each term can be bounded. Indeed,
\begin{align}
    &\normtwo{\frac{1}{h} \sum_{j \in [h]} \left( \localgrad{j}{\params{j}{t}} - \localgrad{j}{\avgparam{t}} \right)} \leq \frac{1}{h} \sum_{j \in [h]} \normtwo{\localgrad{j}{\params{j}{t}} - \localgrad{j}{\avgparam{t}}} \\
    &\qquad \qquad \qquad \leq \frac{1}{h} \sum_{j \in [h]} \lipschitz \normtwo{\params{j}{t} - \avgparam{t}}
    \leq \frac{1}{h} \sum_{j \in [h]} \lipschitz \Delta_2(\family{}{t}{\param}) = \lipschitz \Delta_2(\family{}{t}{\param}).
\end{align}
Moreover, using the conditional non-correlation of $\noises{j}{t}$, we have
\begin{align}
    \expect_{\noises{}{t}|\family{}{t}{\param}} \normtwo{\frac{1}{h} \sum_{j \in [h]} \noises{j}{t}}^2
    &= \expect_{\noises{}{t}|\family{}{t}{\param}} \frac{1}{h^2} \sum_{j,k \in [h]} \noises{j}{t} \cdot \noises{k}{t}
    = \frac{1}{h^2} \sum_{j,k \in [h]} \expect_{\noises{}{t}|\family{}{t}{\param}} \noises{j}{t} \cdot \noises{k}{t} \\
    &= \frac{1}{h^2} \sum_{j \in [h]} \expect_{\noises{}{t}|\family{}{t}{\param}} \normtwo{\noises{j}{t}}^2
    \leq \frac{1}{h^2} \sum_{j \in [h]} \sigma_t^2 = \frac{\sigma_t^2}{h}.
\end{align}
For the third term, we use the $C$-averaging guarantee of \aggr{} to obtain
\begin{equation}
    \expect_{\aggr{}|\family{}{t+1/2}{\param}} \normtwo{\avgparam{t+1} - \avgparam{t+1/2}}^2 \leq C^2 \Delta_2(\family{}{t+1/2}{\param})^2.
\end{equation}
Since $\family{}{t+1/2}{\param} = \family{}{t}{\param} - \eta \family{}{t}{\cwtmgrad}$, using Lemma \ref{lemma:diameter_square_balanced_inequality} and taking the expectation over $\family{}{t}{\noise}$, we then have
\begin{align}
    \expect_{\family{}{t}{\noise},\aggr{}|\family{}{t}{\param}} \normtwo{\avgparam{t+1} - \avgparam{t+1/2}}^2
    &\leq 2 C^2 \Delta_2(\family{}{t}{\param})^2 +  2C^2  \eta^2 \expect_{\family{}{t}{\noise},\aggr{}|\family{}{t}{\param}} \Delta_2(\family{}{t}{\cwtmgrad})^2 \\
    &\leq 2 C^2  \Delta_2(\family{}{t}{\param})^2 + 2C^2 \eta^2 \frac{\expect_{\family{}{t}{\noise}|\family{}{t}{\param}}\Delta_2(\family{}{t}{\grad})^2}{t^2}.
\end{align}
Finally, for the last term, we again use the $C$-averaging guarantee of the aggregation \aggr{} and take the expectation over $\family{}{t}{\noise}$:
\begin{equation}
    \expect_{\family{}{t}{\noise},\aggr{}|\family{}{t}{\param}} \normtwo{\avgcwtmgrad{t} - \avggrad{t}}^2 \leq C^2 \expect_{\family{}{t}{\noise}|\family{}{t}{\param}} \Delta_2(\family{}{t}{\grad})^2.
\end{equation}
Combining it all, and using $1+\alpha_t^{-1} \leq 2 \bar \alpha \alpha_t^{-1}$ finally yields the lemma.
\end{proof}

\begin{lemma}
\label{lemma:guanyu_discrepancy_effective_gradient}
  Under assumptions (\ref{ass:lipschitz}, \ref{ass:variance}), for any $0 < \alpha_t \leq \bar \alpha$ and $\alpha_t \geq 1/\sqrt{t}$, there exist constants $A$ and $B$ which can be computed explicitly given $\bar \alpha$, $C$, $\lipschitz$, $h$ and $\eta$, such that
  \begin{equation}
      \expect_{\family{}{t}{\noise},\aggr{}|\family{}{t}{\param}} \normtwo{\avgeffectgrad{t} - \avglocalgrad{\avgparam{t}}}^2
      \leq (1+\alpha_t)^2 \left( 1+ \kappa_t \right) C^2 K^2 + \alpha_t^{-1} \left( A \Delta_2(\family{}{t}{\param})^2 + B \sigma_t^2 \right),
  \end{equation}
  where $\kappa_t \leq \frac{12 \bar \alpha}{ t^{(3/2)}}$.
\end{lemma}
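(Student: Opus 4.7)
The plan is to substitute the bound from Lemma \ref{lemma:guanyu_gradient_bound} on $\expect \Delta_2(\family{}{t}{\grad})^2$ into both places where this quantity appears on the right-hand side of Lemma \ref{lemma:guanyu_effective_gradient_square}, and then reorganize the resulting expression. The leading $K^2$ contribution comes from the first term $(1+\alpha_t) C^2 \expect \Delta_2(\family{}{t}{\grad})^2$ of Lemma \ref{lemma:guanyu_effective_gradient_square}, which contributes exactly $(1+\alpha_t)^2 C^2 K^2$ after substitution, matching the claimed leading term up to the factor $1+\kappa_t$. Every other contribution will either enrich this leading coefficient by a small amount (producing $\kappa_t$), or be absorbable into the residual $\alpha_t^{-1}\bigl(A\Delta_2(\family{}{t}{\param})^2 + B\sigma_t^2\bigr)$.

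The only other $K^2$ contribution arises when Lemma \ref{lemma:guanyu_gradient_bound} is substituted into the $\frac{2C^2}{t^2}\expect \Delta_2(\family{}{t}{\grad})^2$ piece inside the bracket of Lemma \ref{lemma:guanyu_effective_gradient_square}. This produces an additional $K^2$ coefficient equal to $\frac{12\bar{\alpha}(1+\alpha_t) C^2}{\alpha_t t^2}$. Factoring it with respect to $(1+\alpha_t)^2 C^2$ yields $\kappa_t = \frac{12\bar{\alpha}}{(1+\alpha_t)\alpha_t t^2}$. Using $1+\alpha_t \geq 1$ and the hypothesis $\alpha_t \geq 1/\sqrt{t}$, I then bound $\kappa_t \leq \frac{12\bar{\alpha}}{\alpha_t t^2} \leq \frac{12\bar{\alpha}\sqrt{t}}{t^2} = \frac{12\bar{\alpha}}{t^{3/2}}$, exactly as claimed.

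All remaining contributions involve either $\Delta_2(\family{}{t}{\param})^2$ or $\sigma_t^2$. They come in three shapes: the direct $\alpha_t^{-1}$ terms already present in Lemma \ref{lemma:guanyu_effective_gradient_square}; the terms $16\bar{\alpha}(1+\alpha_t)C^2 \alpha_t^{-1}\bigl(\lipschitz^2 \Delta_2(\family{}{t}{\param})^2 + h\sigma_t^2\bigr)$ obtained from substituting Lemma \ref{lemma:guanyu_gradient_bound} into the first term; and double-substitution terms scaling like $\bar{\alpha}^2 C^2 / (\alpha_t^2 t^2)$ times $\lipschitz^2 \Delta_2(\family{}{t}{\param})^2 + h\sigma_t^2$. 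For the last shape, using again $\alpha_t \geq 1/\sqrt{t}$, I observe that $\alpha_t^{-2}/t^2 \leq \alpha_t^{-1}/t^{3/2} \leq \alpha_t^{-1}$ whenever $t \geq 1$, so this shape collapses to an $\alpha_t^{-1}$-scaled term as well. Uniformizing the prefactors by $1+\alpha_t \leq 1+\bar{\alpha}$ then merges everything into a single $\alpha_t^{-1}\bigl(A\Delta_2(\family{}{t}{\param})^2 + B\sigma_t^2\bigr)$ bound, with $A$ and $B$ computable explicitly from $\bar{\alpha}$, $C$, $\lipschitz$, $h$ and $\eta$. The main obstacle is purely bookkeeping: the only nontrivial input beyond elementary algebra is the conversion of $1/(\alpha_t t^2)$ and $1/(\alpha_t^2 t^2)$ into quantities of order $t^{-3/2}$ and $\alpha_t^{-1}$ respectively, both of which follow from the single hypothesis $\alpha_t \geq 1/\sqrt{t}$.
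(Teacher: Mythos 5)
Your proposal is correct and follows essentially the same route as the paper: substitute the gradient-diameter bound of Lemma \ref{lemma:guanyu_gradient_bound} into both occurrences of $\expect \Delta_2(\family{}{t}{\grad})^2$ in Lemma \ref{lemma:guanyu_effective_gradient_square}, identify $\kappa_t = \frac{12\bar\alpha}{(1+\alpha_t)\alpha_t t^2}$ from the extra $K^2$ contribution, and absorb the remaining terms into $\alpha_t^{-1}\left(A\Delta_2(\family{}{t}{\param})^2 + B\sigma_t^2\right)$ using $\alpha_t \sqrt{t} \geq 1$ and $\alpha_t \leq \bar\alpha$. Your handling of the double-substitution terms via $\alpha_t^{-2}/t^2 \leq \alpha_t^{-1}$ matches the paper's bookkeeping exactly.
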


\begin{proof}
  Combining the two previous lemmas, this bound can be guaranteed by setting
  \begin{align}
      \kappa_t &= \frac{12 \bar \alpha}{(1+\alpha_t) \alpha_t t^2} \\
      A_t &= 16(1+\alpha_t)\bar \alpha C^2 \lipschitz^2 + 6 \bar \alpha \lipschitz^2 + \frac{12\bar \alpha C^2}{\eta^2} + \frac{192 \bar \alpha^2 C^2 \lipschitz^2}{\alpha_t t^2} \\
      B_t &= 16(1+\alpha_t) \bar \alpha C^2 h + \frac{6\bar \alpha}{h} + \frac{192 \bar \alpha^2 C^2 h}{\alpha_t t^2}.
  \end{align}
  Assumptions $0 < \alpha_t \leq \bar \alpha$ (which implies $1+\alpha_t \leq 2 \bar \alpha$) and $ \alpha_t \sqrt{t} \geq 1$ allow to conclude, with
  \begin{align}
       \kappa_t &\leq \frac{12 \bar \alpha}{ t^{(3/2)}}\\
       A &= 32\bar \alpha^2  C^2 \lipschitz^2 + 6\bar \alpha \lipschitz^2 + \frac{12\bar\alpha C^2}{\eta^2} + 192 \bar \alpha^2 C^2 \lipschitz^2 \\
       B &= 32\bar \alpha^2 C^2 h + \frac{6\bar \alpha}{h} + 192 \bar \alpha^2 C^2 h.
  \end{align}
  This shows in particular that $A$ and $B$ can indeed be computed from the different constants of the problem.
\end{proof}

\begin{lemma}
\label{lemma:guanyu_drift}
We have the following bound on parameter drift:
\begin{equation}
    \expect_{\aggr{} | \family{}{t}{\param}, \family{}{t}{g}} \Delta_2 (\family{}{t+1}{\param})^2
    \leq \frac{1}{2} \Delta_2 (\family{}{t}{\param})^2
    + \frac{\eta^2}{2 t^2} \Delta_2 (\family{}{t}{\grad})^2.
\end{equation}
\end{lemma}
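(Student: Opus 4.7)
The plan is to unroll the two consecutive applications of \aggr{} that occur between iterations $t$ and $t+1$ and to combine them via the tower property of conditional expectation. First I would condition on $\family{}{t+1/2}{\param}$ and apply the asymptotic agreement guarantee of $\aggr{}_1$ (Definition \ref{def:averaging} with $N=1$) to the parameter-aggregation step (line 8 of Algorithm \ref{alg:learn}), which gives
\begin{equation*}
  \expect_{\aggr{} \mid \family{}{t+1/2}{\param}} \Delta_2\bigl(\family{}{t+1}{\param}\bigr)^2 \leq \frac{1}{4} \Delta_2\bigl(\family{}{t+1/2}{\param}\bigr)^2.
\end{equation*}

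Next, recalling that $\family{}{t+1/2}{\param} = \family{}{t}{\param} - \eta \family{}{t}{\cwtmgrad}$, I would invoke Lemma~\ref{lemma:diameter_square_unbalanced_inequality} with the balanced choice $\alpha = 1$ to split the diameter into a parameter part and a gradient part:
\begin{equation*}
  \Delta_2\bigl(\family{}{t+1/2}{\param}\bigr)^2 \leq 2 \Delta_2\bigl(\family{}{t}{\param}\bigr)^2 + 2 \eta^2 \Delta_2\bigl(\family{}{t}{\cwtmgrad}\bigr)^2.
\end{equation*}
Then I would take the outer expectation over the randomness of the gradient-aggregation step (line 6), conditioning on $\family{}{t}{\param}$ and $\family{}{t}{\grad}$, and apply the asymptotic agreement guarantee of $\aggr{}_{\aggrparameter(t)}$, already recorded in Equation (\ref{eq:cwtmgrad_guarantee}), which yields $\expect \Delta_2(\family{}{t}{\cwtmgrad})^2 \leq \Delta_2(\family{}{t}{\grad})^2 / t^2$. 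Combining these three ingredients via the tower property gives exactly the claimed bound, with the factor $1/4$ absorbing the factor $2$ from the unbalanced inequality to yield the $1/2$ prefactors.

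I do not expect any real obstacle here: the statement is essentially a one-line consequence of the $N=1$ asymptotic agreement guarantee at the parameter step composed with the $\aggrparameter(t)$ guarantee at the gradient step, plus a textbook convexity-style inequality. The only point to be careful about is the conditioning: the randomness of the two calls to \aggr{} at round $t$ must be handled by first integrating out the parameter-aggregation noise given $\family{}{t+1/2}{\param}$ and only then integrating out the gradient-aggregation noise given $\family{}{t}{\grad}$, so that no independence assumption between the two \aggr{} calls is needed.
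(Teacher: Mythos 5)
Your proposal is correct and follows essentially the same route as the paper's proof: the asymptotic agreement guarantee of $\aggr{}_1$ applied to the parameter-contraction step, Lemma~\ref{lemma:diameter_square_unbalanced_inequality} with $\alpha = 1$ to split $\Delta_2(\family{}{t+1/2}{\param})^2$, and the asymptotic agreement guarantee of $\aggr{}_{\aggrparameter(t)}$ for $\Delta_2(\family{}{t}{\cwtmgrad})^2$, combined to give the $1/2$ prefactors. Your explicit handling of the conditioning via the tower property is a slightly more careful rendering of the same argument.
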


\begin{proof}
  Recall that $\family{}{t+1}{\param} = \aggrfamily{}{1} \circ \byzfamily{}{t,\param} (\family{}{t+1/2}{\param})$. Thus, by the asymptotic agreement property of $\aggr{}_1$, we know that
  \begin{equation}
    \expect_{\aggr{} | \family{}{t}{\param}, \family{}{t}{g}} \Delta_2(\family{}{t+1}{\param})^2\leq \frac{1}{4} \Delta_2 (\family{}{t+1/2}{\param})^2.
  \end{equation}
  Now recall that $\family{}{t+1/2}{\param} = \family{}{t}{\param}-\eta \family{}{t}{\cwtmgrad}$. Applying Lemma \ref{lemma:diameter_square_unbalanced_inequality} for $\alpha = 1$ thus yields
  \begin{equation}
      \Delta_2(\family{}{t+1/2}{\param})^2 \leq 2 \Delta_2(\family{}{t}{\param})^2 + 2 \eta^2 \Delta_2(\family{}{t}{\cwtmgrad})^2.
  \end{equation}
  We now use the asymptotic agreement property of $\aggr{}_{\aggrparameter(t)}$, which yields
  \begin{equation}
      \expect_{\aggr{} | \family{}{t}{\param}, \family{}{t}{g}} \Delta_2(\family{}{t}{\cwtmgrad})^2 \leq \frac{\Delta_2(\family{}{t}{\grad})^2}{t^2}.
  \end{equation}
  Combining it all then yields the result.
\end{proof}

\begin{lemma}
\label{lemma:guanyu_vanishing_diameter}
  Under assumptions (\ref{ass:lipschitz}, \ref{ass:variance}), $0 < \alpha_t \leq \bar \alpha$ and $\alpha_t = \max \left\lbrace 1/\sqrt{t}, \sigma_t \right\rbrace$, there exists a constant $D$ such that
  \begin{equation}
      \expect_{\family{}{1:t}{\noise}} \Delta_2(\family{}{t}{\param})^2
      \leq D/t^2.
  \end{equation}
  Note that the constant $D$ can be computed from the constants $\lipschitz$, $\eta$, $\bgd$, $h$ and the functions $\sigma_t$ and $\alpha_t$.
\end{lemma}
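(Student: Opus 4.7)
The plan is to set up a recursion on $u_t \triangleq \expect \Delta_2(\family{}{t}{\param})^2$ using Lemma~\ref{lemma:guanyu_drift} and Lemma~\ref{lemma:guanyu_gradient_bound}, and then observe that the recursion is contractive with a forcing term of order $1/t^2$, which yields the desired decay. Recall from the initialization step of Algorithm~\ref{alg:learn} that all honest nodes begin with the same vector from a common seed, so $u_1 = 0$.

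First, I would combine Lemma~\ref{lemma:guanyu_drift} and Lemma~\ref{lemma:guanyu_gradient_bound} via the tower property. Conditioning on $\family{}{t}{\param}$ and taking expectations over both $\family{}{t}{\noise}$ and the aggregation randomness gives
\begin{equation}
  \expect\!\left[ \Delta_2(\family{}{t+1}{\param})^2 \,\big|\, \family{}{t}{\param} \right]
  \;\leq\; \left( \frac{1}{2} + \frac{8\bar\alpha \alpha_t^{-1} \lipschitz^2 \eta^2}{t^2} \right) \Delta_2(\family{}{t}{\param})^2
  \;+\; \frac{\eta^2}{2t^2}\Big[(1+\alpha_t)\bgd^2 + 16\bar\alpha \alpha_t^{-1} h \sigma_t^2\Big].
\end{equation}
Taking full expectation over $\family{}{1:t-1}{\noise}$ (and past aggregation randomness) yields a scalar recursion of the form $u_{t+1} \leq (\tfrac{1}{2} + a_t)\, u_t + b_t/t^2$.

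Second, I would show that $a_t \to 0$ fast enough, and that $b_t$ stays bounded. The choice $\alpha_t = \max\{1/\sqrt t, \sigma_t\}$ gives $\alpha_t^{-1} \leq \sqrt t$, hence $a_t \leq 8\bar\alpha \lipschitz^2 \eta^2 / t^{3/2}$, so there is a threshold $t_0$ beyond which $a_t \leq 1/4$ and the effective contraction is at most $3/4$. For the forcing term, $\alpha_t \geq \sigma_t$ implies $\alpha_t^{-1}\sigma_t^2 \leq \sigma_t \leq \bar\alpha$, so
\begin{equation}
    b_t \;\leq\; \frac{\eta^2}{2}\Big[(1+\bar\alpha)\bgd^2 + 16\bar\alpha^2 h\Big] \;\triangleq\; B,
\end{equation}
a constant depending only on the stated parameters.

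Third, I would solve the recursion $u_{t+1} \leq \tfrac{3}{4} u_t + B/t^2$ valid for $t \geq t_0$. The standard route is induction: assume $u_t \leq D/t^2$ and check that $u_{t+1} \leq D/(t+1)^2$. This reduces to the inequality $\tfrac{3}{4} D + B \leq D \, t^2/(t+1)^2$, which for $t \geq t_0$ is satisfied as soon as $D$ is taken sufficiently large in terms of $B$ and $t_0$ (explicitly, $D \geq 8B$ works for $t$ large enough, and one increases $D$ further to absorb the finite initial segment $t < t_0$, using that $u_t$ is bounded on that finite segment by unrolling the recursion from $u_1 = 0$). This provides an explicit formula for $D$ in terms of $\lipschitz, \eta, \bgd, h, \sigma_t$ and $\alpha_t$.

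\textbf{Main obstacle.} The main bookkeeping difficulty is ensuring that the coefficient $\tfrac{1}{2} + a_t$ remains strictly below $1$ uniformly for all $t$, and bounding the transient phase $t < t_0$ where the contraction may be weaker. This is handled by noting that $a_t$ decays like $1/t^{3/2}$ (crucially using $\alpha_t \geq 1/\sqrt t$), so a finite $t_0$ suffices, and by crudely upper-bounding $u_t$ on $\{1,\ldots,t_0\}$ using the recursion with $u_1 = 0$, then enlarging $D$ to dominate this initial segment. All the resulting constants depend only on quantities the lemma claims to allow.
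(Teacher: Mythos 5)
Your proposal is correct and sets up exactly the same recursion as the paper: combining Lemmas \ref{lemma:guanyu_gradient_bound} and \ref{lemma:guanyu_drift} gives $u_{t+1} \leq \rho_t u_t + \delta_t$ with the same coefficients, the same use of $\alpha_t \geq 1/\sqrt{t}$ to get a threshold $t_0$ with contraction factor $3/4$, and the same bound $\sigma_t \leq \alpha_t \leq \bar \alpha$ on the forcing term. The only divergence is the last step: the paper unrolls the recursion and splits the resulting geometric sum at the midpoint $t_1 = \left\lfloor (t+t_0)/2 \right\rfloor$ to extract the $1/t^2$ rate, whereas you prove $u_t \leq D/t^2$ directly by induction (checking $\tfrac34 D + B \leq D\, t^2/(t+1)^2$ for $D \geq 8B$ and $t$ large, and enlarging $D$ to absorb the finite initial segment, bounded by unrolling from $u_1 = 0$ just as the paper does with its auxiliary sequence $v_t$); both routes are valid and yield a constant $D$ computable from $\lipschitz$, $\eta$, $\bgd$, $h$, $\sigma_t$ and $\alpha_t$, with your induction being slightly more elementary and the paper's splitting argument generalizing more readily to non-monotone forcing terms.
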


\begin{proof}
  Denote $u_t = \expect_{\family{}{1:t}{\noise}} \Delta_2(\family{}{t}{\param})^2$. Combining Lemmas \ref{lemma:guanyu_gradient_bound} and \ref{lemma:guanyu_drift} yields
  \begin{equation}
      u_{t+1} \leq \rho_t u_t + \delta_t,
  \end{equation}
  where $\rho_t$ and $\delta_t$ are given by
  \begin{align}
      \rho_t &\triangleq \frac{1}{2} + \frac{8  \bar \alpha \lipschitz^2 \eta^2}{\alpha_t t^2} \\
      \delta_t &\triangleq \frac{\eta^2}{2 t^2} \left( (1+\alpha_t) \bgd^2 + 16 \bar \alpha \alpha_t^{-1} h \sigma_t^2 \right).
  \end{align}
  Given that $\alpha_t \geq 1/\sqrt{t}$, we know that $\rho_t \leq \frac{1}{2} + 8\bar \alpha \lipschitz^2 \eta^2 t^{-3/2}$. Thus, for $t \geq t_0 \triangleq \left( 32 \bar \alpha \lipschitz^2 \eta^2 \right)^{2/3}$, we know that $\rho_t \leq \rho \triangleq 3/4$. Moreover, using now $\sigma_t \leq \alpha_t \leq \bar \alpha$, we know that
  \begin{align}
      \delta_t &\leq \frac{\eta^2}{2t^2} \left( (1+\bar \alpha) K^2 + 16 \bar \alpha h \sigma_t \right) \leq \frac{\eta^2}{2t^2} \left( (1+\bar \alpha) \bgd^2 + 16 h \bar \alpha^2 \right) \triangleq \delta_t^+.
  \end{align}
  Now note that $\delta_t^+$ is decreasing. In particular, for $t \geq t_0$ we now have
  \begin{equation}
      u_{t+1} \leq \rho u_t + \delta^+_t.
  \end{equation}
  By induction we see that, for $t \geq 1$, we have
  \begin{equation}
      u_{t+t_0} \leq \rho^{t} u_{t_0} + \sum_{s = 0}^{t-1} \rho^s \delta^+_{t+t_0-s-1}.
  \end{equation}
  We now separate the sum into two parts. Calling $t_1$ the separation point for $t_1 \geq 1$, and using the fact that $\delta^+_t$ is decreasing yields
  \begin{align}
      u_{t+t_0} &\leq \rho^{t} u_{t_0} + \sum_{s = 0}^{t_1-1} \rho^s \delta^+_{t+t_0-s-1}  + \sum_{s = t_1}^{t-1} \rho^s \delta^+_{t+t_0-s-1} \\
      &\leq \rho^{t} u_{t_0} + \delta^+_{t+t_0 - t_1} \sum_{s = 0}^{t_1-1} \rho^s + \delta^+_{t_0} \sum_{s = t_1}^{t-1} \rho^s \\
      &\leq \rho^t u_{t_0} + \delta^+_{t+t_0 - t_1} \sum_{s = 0}^{\infty} \rho^s + \delta^+_{t_0} \sum_{s = t_1}^{\infty} \rho^s \\
      &\leq \rho^t u_{t_0} + \frac{\delta^+_{t+t_0 - t_1}}{1-\rho} + \frac{\rho^{t_1} \delta^+_{t_0}}{1-\rho} \\
      &= \rho^t u_{t_0} + 4 \delta^+_{t+t_0 - t_1} + 4  \rho^{t_1} \delta^+_{t_0}.
  \end{align}
  We now take $t_1 = \left\lfloor \frac{t+t_0}{2} \right\rfloor$. As a result,
  \begin{equation}
  \label{eq:implicit_definition_of_D}
      \delta^+_{t+t_0-t_1} = \delta^+_{ \left\lceil \frac{t+t_0}{2} \right\rceil} \leq \frac{2 \eta^2}{(t+t_0)^2} \left( (1+\bar \alpha) \bgd^2 + 16 h\bar \alpha^2 \right).
  \end{equation}
  Now define $v_t$ by $v_0 = 0$ and $v_{t+1} = \rho_t v_t + \delta_t$. Note that $u_{t_0}$ can be upper-bounded given $\lipschitz$, $\eta$, $\alpha_{1:t_0}$, $\bgd$, $h$ and $\sigma_{1:t}$, by computing $v_{t_0}$. Indeed, by induction we then clearly have $u_{t_0} \leq v_{t_0}$, and thus the bound
  \begin{equation}
      u_{t+t_0} \leq \frac{8 \eta^2 \left((1+\bar \alpha) \bgd^2 + 16 h\bar \alpha^2 \right)}{(t+t_0)^2} + \rho^t v_{t_0} + 4 \rho^{t_1} \delta_{t_0}^+,
  \end{equation}
  where the right-hand side is perfectly computable given the constants of the problem. Given that $\rho^{t_1} = \mathcal{O}(1/t^2)$ and $\rho^{t} = \mathcal{O}(1/t^2)$, we can also compute a constant $D$ from these constants, such that for all iterations $t$, we have $u_t \leq D/t^2$.
\end{proof}

\subsection{Reduction from collaborative learning to averaging agreement}
Now we proceed with the proof of our theorem.

\begin{proof}[Proof of Theorem \ref{th:gradient_near_convergence}]
  At any iteration $t$, Taylor's theorem implies the existence of $\lambda \in [0,1]$ such that
  \begin{align}
      \avgloss{\avgparam{t+1}} &= \avgloss{\avgparam{t} - \eta \avgeffectgrad{t}} \\
      &= \avgloss{\avgparam{t}} - \eta \avgeffectgrad{t} \cdot \avglocalgrad{\avgparam{t}} + \frac{1}{2} \left( \eta \avgeffectgrad{t} \right)^T \nabla^2 \avgloss{\avgparam{t} - \lambda \eta \avgeffectgrad{t}} \left( \eta \avgeffectgrad{t} \right).
  \end{align}
  Lipschitz continuity of the gradient implies that $\nabla^2 \avgloss{\avgparam{t} - \lambda \eta \avgeffectgrad{t}} \preceq \lipschitz I$, which thus implies
  \begin{align}
      \avgloss{\avgparam{t+1}}
      &\leq \avgloss{\avgparam{t}} - \eta \avgeffectgrad{t} \cdot \avglocalgrad{\avgparam{t}} + \frac{\lipschitz \eta^2}{2} \normtwo{\avgeffectgrad{t}}^2.
      \label{eq:taylor_bound}
  \end{align}
  For the second term, using the inequality $2 u \cdot v \geq - \normtwo{u}^2 - \normtwo{v}^2$, note that
  \begin{align}
      \avgeffectgrad{t} \cdot \avglocalgrad{\avgparam{t}}
      &= \left( \avgeffectgrad{t} - \avglocalgrad{\avgparam{t}} + \avglocalgrad{\avgparam{t}} \right) \cdot \avglocalgrad{\avgparam{t}} \\
      &= \left( \avgeffectgrad{t} - \avglocalgrad{\avgparam{t}} \right) \cdot \avglocalgrad{\avgparam{t}} + \normtwo{\avglocalgrad{\avgparam{t}}}^2 \\
      &\geq - \frac{1}{2} \normtwo{ \avgeffectgrad{t} - \avglocalgrad{\avgparam{t}}}^2 - \frac{1}{2} \normtwo{\avglocalgrad{\avgparam{t}}}^2 + \normtwo{\avglocalgrad{\avgparam{t}}}^2 \\
      &= - \frac{1}{2} \normtwo{ \avgeffectgrad{t} - \avglocalgrad{\avgparam{t}}}^2 + \frac{1}{2} \normtwo{\avglocalgrad{\avgparam{t}}}^2.
  \end{align}
  For the last term, we use $\normtwo{a+b}^2 \leq 2\normtwo{a}^2+2\normtwo{b}^2$ to derive
  \begin{align}
      \normtwo{\avgeffectgrad{t}}^2
      &= \normtwo{ \avgeffectgrad{t} - \avglocalgrad{\avgparam{t}} + \avglocalgrad{\avgparam{t}} }^2 \\
      &\leq 2 \normtwo{\avgeffectgrad{t} - \avglocalgrad{\avgparam{t}}}^2 + 2 \normtwo{\avglocalgrad{\avgparam{t}}}^2.
  \end{align}
  Combining the two above bounds into Equation (\ref{eq:taylor_bound}) yields
  \begin{equation}
      \avgloss{\avgparam{t+1}}
      \leq \avgloss{\avgparam{t}} - \left( \frac{\eta}{2} - \lipschitz \eta^2 \right) \normtwo{\avglocalgrad{\avgparam{t}}}^2 + \left( \frac{\eta}{2} + \lipschitz \eta^2 \right) \normtwo{\avgeffectgrad{t} - \avglocalgrad{\avgparam{t}}}^2.
  \end{equation}
  Rearranging the terms then yields
  \begin{equation}
      \left( \frac{\eta}{2} - \lipschitz \eta^2 \right) \normtwo{\avglocalgrad{\avgparam{t}}}^2 \leq \avgloss{\avgparam{t}} - \avgloss{\avgparam{t+1}} + \left( \frac{\eta}{2} + \lipschitz \eta^2 \right) \normtwo{\avgeffectgrad{t} - \avglocalgrad{\avgparam{t}}}^2.
  \end{equation}
  We now use the fact that $\eta \leq \delta/12\lipschitz$. Denoting $\nu \triangleq \delta/6$, this implies that $\frac{\eta}{2} - \lipschitz \eta^2 \geq (1-\nu) \eta/2$ and $\frac{\eta}{2} + \lipschitz \eta^2 \leq (1+\nu) \eta/2$. As a result,
  \begin{equation}
      \normtwo{\avglocalgrad{\avgparam{t}}}^2 \leq \frac{4}{\eta} \left( \avgloss{\avgparam{t}} - \avgloss{\avgparam{t+1}}\right) + \frac{1+\nu}{1-\nu} \normtwo{\avgeffectgrad{t} - \avglocalgrad{\avgparam{t}}}^2.
  \end{equation}
  Taking the expectation and the average over $t \in [T]$ yields
  \begin{equation}
      \expect_{\family{}{1:T}{\noise}} \frac{1}{T} \sum_{t \in [T]}  \normtwo{\avglocalgrad{\avgparam{t}}}^2 \leq \frac{4 (\avgloss{\avgparam{1}}- \avgloss{\avgparam{T+1}})}{\eta T} + \frac{1+\nu}{1-\nu} \frac{1}{T} \sum_{t \in [T]} \expect_{\family{}{1:T}{\noise}} \normtwo{\avgeffectgrad{t} - \avglocalgrad{\avgparam{t}}}^2.
  \end{equation}
  Note that $\expect_{\family{}{1:T}{\noise}} \frac{1}{T} \sum_{t \in [T]}  \normtwo{\avglocalgrad{\avgparam{t}}}^2 = \expect \normtwo{\avglocalgrad{\avgparam{*}}}^2$, since the second term is obtained by taking uniformly randomly one of the values averaged in the first term. Using also the fact that $\avgloss{\avgparam{T+1}} \geq \inf_{\param} \avgloss{\param} \geq 0$ (Assumption \ref{ass:nonnegative}) and $\loss^{(j)}(\indexvar{}{1}{\param}) \leq \loss_{max}$, we then obtain
  \begin{equation}
  \label{eq:guanyu_first_bound}
      \expect \normtwo{\avglocalgrad{\avgparam{*}}}^2 \leq \frac{4 \loss_{max}}{\eta T} + \frac{1+\nu}{1-\nu} \frac{1}{T} \sum_{t \in [T]} \expect_{\family{}{1:T}{\noise}} \normtwo{\avgeffectgrad{t} - \avglocalgrad{\avgparam{t}}}^2.
  \end{equation}
  Now recall that all nodes started with the same value $\param_1$, and thus know $\avgparam{1}$. As a result, each node $j$ can compute $\loss^{(j)}(\avgparam{1})$, but it cannot compute $\avgloss{\avgparam{1}}$.

  Let us focus on the last term. Taking Lemma \ref{lemma:guanyu_discrepancy_effective_gradient} and averaging over all noises $\family{}{1:t}{\noise}$ yields
  \begin{equation}
      \expect_{\family{}{1:t}{\noise}} \normtwo{\avgeffectgrad{t} - \avglocalgrad{\avgparam{t}}}^2
      \leq (1+\alpha_t)^2 \left( 1+ \kappa_t \right) C^2 K^2 + \alpha_t^{-1} \left( A \expect_{\family{}{1:t}{\noise}} \Delta_2(\family{}{t}{\param})^2 + B \sigma_t^2 \right).
  \end{equation}
  Recall that, by Lemma \ref{lemma:guanyu_vanishing_diameter}, $\expect_{\family{}{1:t}{\noise}} \Delta_2(\family{}{t}{\param})^2 \leq D/t^2$. Recall also that $\alpha_t \triangleq \max \left\lbrace 1/\sqrt{t}, \sigma_t \right\rbrace$ and $\kappa_t \leq {12 \bar \alpha}/{ t^{(3/2)}}$. Then we obtain
  \begin{equation}
      \expect_{\family{}{1:t}{\noise}} \normtwo{\avgeffectgrad{t} - \avglocalgrad{\avgparam{t}}}^2
      \leq \left( 1+ \alpha_t \right)^2 \left( 1+\frac{12 \bar \alpha}{ t^{3/2}} \right) C^2 K^2 + \frac{AD}{t^{3/2}} + B\sigma_t.
  \end{equation}
  Now recall that we regularly increase the batch size (see Section \ref{sec:assumptions}). Thus, there exists some iteration $T_1$, such that $\sigma_{T_1} \leq \min \left\lbrace \nu, \nu C^2 \bgd^2 /8B \right\rbrace = \mathcal O(\delta)$.
  Defining $T_2 \triangleq 1/\nu^2$, $T_3 \triangleq (12\bar\alpha/\nu)^{2/3}$ and $T_4 \triangleq (8AD/\nu C^2 \bgd^2)^{2/3}$, for $t \geq T_5 \triangleq \max \left\lbrace T_1, T_2, T_3, T_4 \right\rbrace$, we have
  \begin{align}
      \expect_{\family{}{1:t}{\noise}} \normtwo{\avgeffectgrad{t} - \avglocalgrad{\avgparam{t}}}^2
      &\leq (1+\nu)^3 C^2 K^2 + \nu C^2 \bgd^2/8 + \nu C^2 \bgd^2/8 \\
      &\leq (1+5\nu) C^2 K^2,
  \end{align}
  using the inequality $\nu \leq 1/2$ to show that $(1+\nu)^3 \leq 1+3\nu + 3\nu^2 + \nu^3 \leq 1+3\nu+3\nu/2 + \nu/4$. If we now average this quantity over $t$ from $1$ to $T$, assuming $T \geq T_5$, we can separate the sum from $1$ to $T_5$, and the sum from $T_5+1$ to $T$. This yields
  \begin{equation}
      \frac{1}{T} \sum_{t \in T} \expect_{\family{}{1:t}{\noise}} \normtwo{\avgeffectgrad{t} - \avglocalgrad{\avgparam{t}}}^2
      \leq \frac{T_5 E}{T} +
      \frac{T-T_5}{T} (1+5\nu) C^2 \bgd^2,
  \end{equation}
  where $E = (1+\bar \alpha)^2 (1+12\bar\alpha) C^2 K^2 + AD + B \sigma$. Now consider $T_6 \triangleq T_5 E / \nu C^2 \bgd^2$. For $T \geq T_6$, we then have
  \begin{equation}
      \frac{1}{T} \sum_{t \in T} \expect_{\family{}{1:t}{\noise}} \normtwo{\avgeffectgrad{t} - \avglocalgrad{\avgparam{t}}}^2
      \leq (1+6\nu) C^2 \bgd^2.
  \end{equation}
  Plugging this into Equation (\ref{eq:guanyu_first_bound}), and using $1/(1-\nu) \leq 1+2\nu$ for $0 < \nu \leq 1/2$ then yields, for $T \geq T_6$,
  \begin{equation}
      \expect \normtwo{\avglocalgrad{\avgparam{*}}}^2 \leq \frac{4 \loss_{max}}{\eta T} + (1+\nu)(1+2\nu)(1+6\nu) C^2 \bgd^2.
  \end{equation}
  Now note that $(1+\nu)(1+2\nu) \leq 1+4\nu$ for $\nu \leq 1/2$. Now consider $T_7 \triangleq \loss_{max}/(\nu^2 \eta C^2 \bgd^2)$.
  Then for $T \geq T_8 \triangleq \max \left\lbrace T_6, T_7 \right\rbrace$, we have the guarantee
  \begin{equation}
      \expect \normtwo{\avglocalgrad{\avgparam{*}}}^2 \leq \left((1+4 \nu)(1+6 \nu) + \nu^2)\right) C^2 \bgd^2 \leq (1+6\nu)^2 C^2 \bgd^2.
  \end{equation}
  Now recall that $\nu = \delta/6$, and consider $T_9 \triangleq D \tau^2 /24 \delta^2$ and\footnote{Here, $\tau \approx 6.2832$ is the ratio of the circumference of the circle by its radius.} $T = T_\learn{}(\delta) \triangleq \max \left\lbrace T_8, T_9 \right\rbrace$. Note that we have
  \begin{align}
      \expect \Delta_2\left( \family{}{*}{\param}\right)^2
      &= \frac{1}{T} \sum_{t \in [T]} \expect \Delta_2\left( \family{}{t}{\param}\right)^2
      \leq \frac{1}{T} \sum_{t \in [T]} \frac{D}{t^2} \\
      &\leq \frac{D}{T} \sum_{t=1}^\infty \frac{1}{t^2} = \frac{D}{T} \frac{\tau^2}{24} \leq \delta^2.
  \end{align}
  Combining it all yields
    \begin{equation}
        \expect \Delta_2\left( \family{}{*}{\param}\right)^2 \leq \delta^2 \quad \text{and}\quad \expect \normtwo{ \avglocalgrad{\avgparam{*}}}^2 \leq (1+\delta)^2 C^2 \bgd^2,
    \end{equation}
    which corresponds to saying that \learn{} solves collaborative learning.
\end{proof}

\subsection{Reduction from averaging agreement to collaborative learning}

\begin{proof}[Proof of Theorem \ref{th:inverse_redcution}]
Without loss of generality, assume $0 < \delta \leq 1$. Let $\family{}{}{x} \in \mathbb R^{d \cdot h}$ be a family of vectors.
For any honest node $j \in [h]$, consider the losses defined by $\loss^{(j)}(\param) \triangleq \frac{1}{2} \normtwo{\param - \indexvar{j}{}{x}}^2$. Note that we thus have $\localgrad{j}{\param} = \param-\indexvar{j}{}{x}$.
As a result,
\begin{equation}
    \normtwo{\localgrad{j} {\param} - \localgrad{k} {\param}} = \normtwo{\indexvar{j}{}{x} - \indexvar{k}{}{x}} \leq \Delta_2(\family{}{}{x}),
\end{equation}
which corresponds to saying that local losses satisfy the definition of collaborative learning (Definition \ref{def:collab_learning}) with $K = \Delta_2(\family{}{}{x})$.
Now consider a Byzantine-resilient $C$-collaborative learning algorithm \learn{}. For any $\aggrparameter \in \mathbb N$, we run \learn{} with parameter $\delta \triangleq \min \left\{ 1, \Delta_2(\family{}{}{x})/2^\aggrparameter \right\}$, which outputs $\family{}{\aggrparameter}{x} \triangleq \family{}{*}{\param}$.

We then have the guarantee $\expect \Delta_2(\family{}{\aggrparameter}{x})^2 = \expect \Delta_2(\family{}{*}{\param})^2 \leq \delta^2 \leq \Delta_2(\family{}{}{x})^2 /4^\aggrparameter$, which corresponds to asymptotic agreement. Moreover, we notice that
\begin{equation}
    \avglocalgrad{\avgparam{*}} = \frac{1}{h} \sum_{j \in [h]} (\avgparam{*} - \indexvar{j}{}{x}) = \avgparam{*} - \bar x = \bar x_\aggrparameter - \bar x.
\end{equation}
The second collaborative learning guarantee of algorithm \learn{} (Equation (\ref{eq:constructive_learning_convergence}) in the paper)
then yields
\begin{equation}
    \expect \normtwo{ \bar x_\aggrparameter - \bar x }^2
    = \expect \normtwo{ \avglocalgrad{\avgparam{*}} }^2
    \leq (1+\delta)^2 C^2 \bgd^2
    = \left((1+\delta) C\right)^2 \Delta_2(\family{}{}{x})^2.
\end{equation}
This shows $(1+\delta)C$-averaging, and concludes the proof.
\end{proof}

\section{Efficient i.i.d. algorithm}
\label{sec:identical_losses}
 From a practical viewpoint, a disadvantage of Algorithm \ref{alg:learn} is that it requires a large number of communication rounds due to the use of two instances of the averaging algorithm \aggr{}, one of which requires a stronger agreement (which induces more communication rounds) as the iteration number $t$ grows. This is because when the data distributions of nodes vary from each other, data heterogeneity enables the Byzantines to bias the models in their favor and induce model drift more easily.

 In this section, we show that we can perform better in the homogeneous (i.i.d.) setting, where the ability of Byzantines is more limited. More specifically, we present a simpler algorithm (Algorithm \ref{alg:hom_learn}) that uses the averaging algorithm \aggr{} only once at each iteration with a fixed parameter ($N = 1$). This results in much lower communication/computation overhead compared to Algorithm \ref{alg:learn}. We evaluate the throughput overhead of our two algorithms compared to the non-robust vanilla implantation in Section \ref{sec:eval}, which, not surprisingly, shows the superiority of the i.i.d. algorithm.

 Note that in the homogeneous setting, all local losses are equal, i.e., $\localloss{j}{\cdot} = \loss(\cdot)$ for all honest nodes $j \in [h]$.

\begin{algorithm}[H]
\label{alg:hom_learn}
 \caption{\hlearn{} execution on an honest node.}
    \SetKwFunction{GradientOracle}{GradientOracle}
    \SetKwFunction{QueryGradients}{QueryGradients}
    \SetKwFunction{QueryParameters}{QueryParameters}
    \SetKwFunction{Broadcast}{Broadcast}
    \KwData{Global loss gradient oracle}
    \KwResult{Model parameters $\params{}{t}$}
    \BlankLine
    \BlankLine
    Initialize local parameters $\params{}{1}$ using a fixed seed $s$\;
    Fix learning rate $\eta \leq \frac{1}{2\lipschitz}$\;
    Fix number of rounds $T \triangleq T_\hlearn{}(\delta)$\;
    \For{$t \leftarrow 1, \ldots, T$}{
        $\grads{}{t} \leftarrow \GradientOracle(\params{}{t})$\;
        $\params{}{t+1/2} \leftarrow \params{}{t} - \eta \grads{}{t}$\;
        $\params{}{t+1} \leftarrow \aggr{}_1 \circ \byz{}_{t,\param} \left(\family{}{t+1/2}{\param}  \right)$  \tcp{Vulnerable to Byzantine attacks}
    }
   Draw $* \sim \mathcal U([T])$\;
    Return $\param_{*}$\;
\end{algorithm}

\subsection{Proof of Theorem \ref{th:homo}}
\noindent Before proving our theorem, we prove some preliminary lemmas.

\begin{lemma}
\label{lemma:variance}
  Under Assumption \ref{ass:variance}, we have
  \begin{equation}
    \mathbb E_{\family{}{t}{\noise}|\family{}{t}{\param}} \Delta_2 (\family{}{t}{\noise})^2 \leq 4 \sigma_t^2 h,
  \end{equation}
  where $\noises{j}{t}$ is the gradient estimation error of node $j$ at round $t$ defined in (\ref{eq:error_def}).
\end{lemma}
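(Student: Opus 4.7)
The plan is to follow exactly the same chain of inequalities that already appeared inside the proof of Lemma \ref{lemma:guanyu_gradient_bound}, now presented as a standalone statement. First, I would apply Lemma \ref{lemma:diameter_of_small_vectors} to the noise family $\family{}{t}{\noise}$, which gives
\begin{equation}
    \Delta_2(\family{}{t}{\noise})^2 \leq 4 \max_{j \in [h]} \normtwo{\noises{j}{t}}^2.
\end{equation}
Since the maximum of non-negative quantities is bounded by their sum, this upgrades to
\begin{equation}
    \Delta_2(\family{}{t}{\noise})^2 \leq 4 \sum_{j \in [h]} \normtwo{\noises{j}{t}}^2.
\end{equation}

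Next, I would take the conditional expectation given $\family{}{t}{\param}$ and push it inside the sum by linearity. By definition (\ref{eq:error_def}), each $\noises{j}{t}$ is the gradient estimation error of node $j$, obtained as the average of $b_t$ i.i.d. samples, and Assumption \ref{ass:variance} combined with the batch-size reduction preceding the assumption gives
\begin{equation}
    \expect_{\family{}{t}{\noise}|\family{}{t}{\param}} \normtwo{\noises{j}{t}}^2 \leq \sigma_t^2
\end{equation}
for every honest node $j \in [h]$. Summing over $j \in [h]$ and multiplying by $4$ yields the claimed bound $4 h \sigma_t^2$.

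\textbf{Main obstacle.} There is essentially no obstacle: this lemma is a clean restatement of a sub-computation already carried out in Lemma \ref{lemma:guanyu_gradient_bound}. The only mild subtlety is being explicit that the per-node variance bound used is $\sigma_t^2$ (the batch-averaged variance), not the single-sample bound $\sigma^2$, and that conditioning on $\family{}{t}{\param}$ does not affect the variance bound because the gradient oracle noise is independent of the current parameter state once the parameters are fixed.
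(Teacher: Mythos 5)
Your proposal is correct and follows essentially the same route as the paper's own proof: apply Lemma \ref{lemma:diameter_of_small_vectors} to the noise family, bound the maximum by the sum, and conclude by linearity of conditional expectation together with the per-node batch-averaged variance bound $\sigma_t^2$ from Assumption \ref{ass:variance}. No gaps.
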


\begin{proof}

  \begin{align}
      \expect_{\family{}{t}{\noise}|\family{}{t}{\param}} \Delta_2 (\family{}{t}{\noise})^2 &= \expect_{\family{}{t}{\noise}|\family{}{t}{\param}} { \max_{j,k \in [h]} \normtwo{\noises{j}{t} - \noises{k}{t}}^2} \\
      &\leq  4 { \expect_{\family{}{t}{\noise}|\family{}{t}{\param}} \max_{j \in [h]} \normtwo{\noises{j}{t}}^2},
  \end{align}
  where we used Lemma \ref{lemma:diameter_of_small_vectors}.
  We now use the fact that the maximum over nodes $j \in [h]$ is smaller than the sum over nodes $j \in [h]$, yielding
  \begin{align}
      \expect_{\family{}{t}{\noise}|\family{}{t}{\param}} \Delta_2 (\family{}{t}{\noise})^2 &\leq 4 { \expect_{\family{}{t}{\noise}|\family{}{t}{\param}} \sum_{j \in [h]} \normtwo{\noises{j}{t}}^2}
      = 4 {\sum_{j \in [h]}  \expect_{\family{}{t}{\noise}|\family{}{t}{\param}} \normtwo{\noises{j}{t}}^2} \\
      &\leq 4 {\sum_{j \in [h]} \sigma_t^2} = 4\sigma_t^2 h,
  \end{align}
  where the last inequality uses Assumption \ref{ass:variance}.
\end{proof}

\begin{lemma}
\label{lemma:gradient_diameter}
Under assumptions (\ref{ass:lipschitz},\ref{ass:variance}), the expected $\ell_2$ diameter between honest gradient estimations is upper-bounded as follows
\begin{equation}
  \expect_{\family{}{t}{\noise}|\family{}{t}{\param}} \Delta_2 \left(\family{}{t}{\grad}\right)^2 \leq 2 \lipschitz^2 \Delta_2 \left( \family{}{t}{\param} \right)^2 + 8\sigma_t^2h.
\end{equation}
\end{lemma}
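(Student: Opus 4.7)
The plan is to decompose each honest stochastic gradient into its deterministic Lipschitz part plus the zero-mean noise, then bound the diameter of the sum by the diameters of each summand, and finally apply the previous lemma (Lemma~\ref{lemma:variance}) to the noise diameter. Crucially, we are in the homogeneous setting of Section~\ref{sec:identical_losses}, so $\nabla \localloss{j}{\cdot} = \nabla \loss(\cdot)$ for every honest $j$, which means differences of true local gradients are controlled solely by Lipschitz smoothness applied to the parameter gap.

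First I would write, for every pair $j,k \in [h]$,
\begin{equation}
\grads{j}{t} - \grads{k}{t} = \bigl( \nabla \loss(\params{j}{t}) - \nabla \loss(\params{k}{t}) \bigr) + \bigl( \noises{j}{t} - \noises{k}{t} \bigr),
\end{equation}
using the definition $\noises{j}{t} = \grads{j}{t} - \nabla \loss(\params{j}{t})$ from Equation~(\ref{eq:error_def}) together with homogeneity. Then I apply Lemma~\ref{lemma:diameter_square_unbalanced_inequality} with $\alpha = 1$ (equivalently, the elementary $\|a+b\|_2^2 \le 2\|a\|_2^2 + 2\|b\|_2^2$) to obtain
\begin{equation}
\normtwo{\grads{j}{t} - \grads{k}{t}}^2 \le 2 \normtwo{\nabla \loss(\params{j}{t}) - \nabla \loss(\params{k}{t})}^2 + 2 \normtwo{\noises{j}{t} - \noises{k}{t}}^2.
\end{equation}
Taking the maximum over $j,k \in [h]$ on both sides and using the definition of $\Delta_2$ yields
\begin{equation}
\Delta_2(\family{}{t}{\grad})^2 \le 2 \max_{j,k \in [h]} \normtwo{\nabla \loss(\params{j}{t}) - \nabla \loss(\params{k}{t})}^2 + 2 \Delta_2(\family{}{t}{\noise})^2.
\end{equation}

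Next, I invoke Assumption~\ref{ass:lipschitz} on the common loss $\loss$, so that the first term is at most $2\lipschitz^2 \max_{j,k} \normtwo{\params{j}{t} - \params{k}{t}}^2 = 2 \lipschitz^2 \Delta_2(\family{}{t}{\param})^2$, which is deterministic given $\family{}{t}{\param}$. I then take the conditional expectation over $\family{}{t}{\noise}$ given $\family{}{t}{\param}$ on both sides and apply Lemma~\ref{lemma:variance}, which gives $\expect_{\family{}{t}{\noise}|\family{}{t}{\param}} \Delta_2(\family{}{t}{\noise})^2 \le 4\sigma_t^2 h$. Combining these two bounds produces
\begin{equation}
\expect_{\family{}{t}{\noise}|\family{}{t}{\param}} \Delta_2(\family{}{t}{\grad})^2 \le 2\lipschitz^2 \Delta_2(\family{}{t}{\param})^2 + 8\sigma_t^2 h,
\end{equation}
which is exactly the claim.

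There is no real obstacle here: the statement is a routine consequence of the $(a+b)^2$ inequality, Lipschitz smoothness of the (common) loss, and the preceding variance lemma. The only subtlety worth flagging is the need for the homogeneity assumption $\nabla\localloss{j}{\cdot} = \nabla\loss(\cdot)$, since in the heterogeneous setting a $\bgd^2$ term from the difference of local true gradients would also appear (as in Lemma~\ref{lemma:guanyu_gradient_bound}).
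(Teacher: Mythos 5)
Your proof is correct and follows essentially the same route as the paper's: decompose each stochastic gradient into the true (common) gradient plus noise, apply the elementary $\normtwo{u+v}^2 \leq 2\normtwo{u}^2 + 2\normtwo{v}^2$ bound at the level of diameters, control the first term via Assumption~\ref{ass:lipschitz}, and the second via Lemma~\ref{lemma:variance}. Your explicit remark about needing homogeneity ($\localgrad{j}{\cdot} = \localgrad{}{\cdot}$), without which a $\bgd^2$ term would appear as in Lemma~\ref{lemma:guanyu_gradient_bound}, is accurate and consistent with the paper's setting for this lemma.
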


\begin{proof}
  By Lemma \ref{lemma:diameter_square_balanced_inequality}, we know that
  \begin{equation}
    \Delta_2 \left(\family{}{t}{\grad}\right)^2 \leq 2 \Delta_2 \left(\localgrad{}{\family{}{t}{\param}}\right)^2 + 2 \Delta_2 \left( \family{}{t}{\noise} \right)^2.
  \end{equation}
  Assumption \ref{ass:lipschitz} then guarantees that
  \begin{align}
    \Delta_2 \left(\localgrad{}{\family{}{t}{\param}}\right)^2
    &= \max_{j,k \in [h]} \normtwo{\localgrad{}{\params{j}{t}} - \localgrad{}{\params{k}{t}}}^2 \\
    &\leq \max_{j,k \in [h]} \lipschitz^2\, \normtwo{\params{j}{t} - \params{k}{t}}^2 \\
    &= \lipschitz^2 \max_{j,k \in [h]} \normtwo{\params{j}{t} - \params{k}{t}}^2
    = \lipschitz^2 \Delta_2 \left( \family{}{t}{\param} \right)^2.
  \end{align}
 Combining this with the previous lemmas completes the proof.
\end{proof}

\begin{lemma}
\label{lemma:diameter_params1/2}
Under assumptions (\ref{ass:lipschitz}, \ref{ass:variance}), we have
\begin{equation}
  \expect_{\family{}{t}{\noise}|\family{}{t}{\param}} \Delta_2 \left( \family{}{t+1}{\param} \right)^2 \leq \left( \frac{1}{2}+\lipschitz^2 \eta^2 \right) \Delta_2 \left( \family{}{t}{\param} \right)^2 + 4 \sigma_t^2 \eta^2 h.
\end{equation}
\end{lemma}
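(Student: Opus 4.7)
The plan is to chain three results already established: the asymptotic agreement property of $\aggr{}_1$ applied at Step~7 of \hlearn{}, the unbalanced triangle-type inequality of Lemma~\ref{lemma:diameter_square_unbalanced_inequality}, and the gradient-diameter bound of Lemma~\ref{lemma:gradient_diameter}. The only non-routine observation is conditioning bookkeeping, since the agreement guarantee is conditioned on $\family{}{t+1/2}{\param}$ while the gradient bound is conditioned on $\family{}{t}{\param}$; the tower property will let us fold them together.

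First I would invoke the asymptotic agreement guarantee of $\aggr{}_1$ from Definition~\ref{def:averaging} (the case $N=1$ giving the contraction factor $1/4$) applied to the family $\family{}{t+1/2}{\param}$ that is fed into $\aggr{}_1$ at line~7 of Algorithm~\ref{alg:hom_learn}. Conditioning on $\family{}{t+1/2}{\param}$, this yields
\begin{equation}
    \expect_{\aggr{} \mid \family{}{t+1/2}{\param}} \Delta_2(\family{}{t+1}{\param})^2 \leq \tfrac{1}{4}\, \Delta_2(\family{}{t+1/2}{\param})^2.
\end{equation}

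Next I would expand $\family{}{t+1/2}{\param} = \family{}{t}{\param} - \eta\, \family{}{t}{\grad}$ and apply Lemma~\ref{lemma:diameter_square_unbalanced_inequality} with $\alpha = 1$ (so that both prefactors equal $2$) to get
\begin{equation}
    \Delta_2(\family{}{t+1/2}{\param})^2 \leq 2\, \Delta_2(\family{}{t}{\param})^2 + 2\eta^2\, \Delta_2(\family{}{t}{\grad})^2.
\end{equation}
Taking expectation over $\family{}{t}{\noise}$ conditioned on $\family{}{t}{\param}$ and using Lemma~\ref{lemma:gradient_diameter} on the second summand gives
\begin{equation}
    \expect_{\family{}{t}{\noise} \mid \family{}{t}{\param}} \Delta_2(\family{}{t+1/2}{\param})^2 \leq \bigl(2 + 4\lipschitz^2 \eta^2\bigr) \Delta_2(\family{}{t}{\param})^2 + 16\, \sigma_t^2 \eta^2 h.
\end{equation}

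Finally, combining the two estimates via the tower property — taking the expectation of the $\aggr{}_1$ bound over $\family{}{t}{\noise}$ conditioned on $\family{}{t}{\param}$, and then substituting the above display — divides the right-hand side by $4$ and yields exactly
\begin{equation}
    \expect_{\family{}{t}{\noise}, \aggr{} \mid \family{}{t}{\param}} \Delta_2(\family{}{t+1}{\param})^2 \leq \Bigl(\tfrac{1}{2} + \lipschitz^2 \eta^2\Bigr) \Delta_2(\family{}{t}{\param})^2 + 4\, \sigma_t^2 \eta^2 h,
\end{equation}
which is the stated inequality (the notation in the statement suppresses the extra $\aggr{}$ conditioning, matching the algorithm's randomness). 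No step looks delicate: the whole proof is a few lines of algebra once the correct lemmas are chained, and I anticipate no substantive obstacle beyond making sure the $\alpha=1$ choice in Lemma~\ref{lemma:diameter_square_unbalanced_inequality} is what produces the advertised constants.
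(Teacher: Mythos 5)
Your proposal is correct and follows essentially the same route as the paper's proof: bound $\Delta_2(\family{}{t+1/2}{\param})^2$ by $2\Delta_2(\family{}{t}{\param})^2 + 2\eta^2\Delta_2(\family{}{t}{\grad})^2$, apply Lemma~\ref{lemma:gradient_diameter}, and then the $1/4$ contraction of $\aggr{}_1$. The only cosmetic difference is that you invoke Lemma~\ref{lemma:diameter_square_unbalanced_inequality} with $\alpha=1$ where the paper uses Lemma~\ref{lemma:diameter_square_balanced_inequality} with $N=2$ (the two give the identical bound), and your explicit tower-property bookkeeping for the $\aggr{}$ randomness is a slightly more careful rendering of what the paper does implicitly.
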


\begin{proof}
  We first bound the diameter of $\params{}{t+1/2}$, using Lemma \ref{lemma:diameter_square_balanced_inequality} and the bound of Lemma \ref{lemma:gradient_diameter}. This yields
  \begin{align}
    \expect_{\family{}{t}{\noise}|\family{}{t}{\param}} \Delta_2 \left( \params{}{t+1/2} \right)^2 &= \expect_{\family{}{t}{\noise}|\family{}{t}{\param}} \Delta_2 \left( \family{}{t}{\param} - \eta \family{}{t}{\grad} \right)^2
    \leq 2 \Delta_2 \left( \family{}{t}{\param} \right)^2 + 2\Delta_2 \left( \eta \family{}{t}{\grad} \right)^2 \\
    &= 2\Delta_2 \left( \family{}{t}{\param} \right)^2 + 2\eta^2 \Delta_2 \left( \family{}{t}{\grad} \right)^2
    \leq \left( 2 + 4\lipschitz^2 \eta^2 \right) \Delta_2 \left( \family{}{t}{\param} \right)^2 + 16 \eta^2 \sigma_t^2 h.
  \end{align}
  We now apply the asymptotic agreement guarantee of \aggr{}, which yields
  \begin{align}
    \expect_{\family{}{t}{\noise}|\family{}{t}{\param}} \Delta_2 \left( \family{}{t+1}{\param} \right)^2 &\leq \frac{1}{4} \expect_{\family{}{t}{\noise}|\family{}{t}{\param}} \Delta_2 \left( \family{}{t+1/2}{\param} \right)^2 \\
    &\leq \frac{1}{4} \left( \left( 2 + 4\lipschitz^2 \eta^2 \right) \Delta_2 \left( \family{}{t}{\param} \right)^2 + 16 \eta^2 \sigma_t^2 h \right) \\
    &\leq \left(\frac{1}{2}+ \lipschitz^2 \eta^2\right) \Delta_2 \left( \family{}{t}{\param} \right)^2 + 4\eta^2 \sigma_t^2 h,
  \end{align}
  which is the lemma.
\end{proof}

\begin{lemma}
\label{lemma:parameter_diameter}
Under assumptions (\ref{ass:lipschitz}, \ref{ass:variance}), the diameter of the parameters is upper-bounded, i.e., there exists a constant $D$ such that for any $t\geq1$ we have
\begin{equation}
  \expect_{\noises{}{1:t}}  \Delta_2 \left( \family{}{t}{\param} \right)^2 \leq D.
\end{equation}

Moreover, for any $\varepsilon>0$, there exists an iteration $T^*(\varepsilon)$, such that for all $t\geq T^*(\varepsilon)$, we have
\begin{equation}
  \expect_{\noises{}{1:t}}  \Delta_2 \left( \family{}{t}{\param} \right)^2 \leq \varepsilon.
\end{equation}

\end{lemma}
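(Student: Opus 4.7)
\textbf{Proof plan for Lemma \ref{lemma:parameter_diameter}.}
The plan is to turn the one-step bound of Lemma \ref{lemma:diameter_params1/2} into a contraction-plus-noise recursion on $u_t \triangleq \expect_{\noises{}{1:t}} \Delta_2(\family{}{t}{\param})^2$, and then analyse it via the standard trick of unrolling a geometric recursion with a vanishing forcing term. Concretely, taking the total expectation in Lemma \ref{lemma:diameter_params1/2} and using the tower property gives
\begin{equation}
    u_{t+1} \leq \rho\, u_t + c\, \sigma_t^2, \qquad \rho \triangleq \tfrac{1}{2} + \lipschitz^2 \eta^2, \qquad c \triangleq 4\eta^2 h.
\end{equation}
Since by hypothesis $\eta \leq 1/(2\lipschitz)$, we have $\lipschitz^2 \eta^2 \leq 1/4$, so $\rho \leq 3/4 < 1$; this is the crucial ingredient, as it means the recursion is strictly contracting. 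Since all honest nodes are initialised from the same seed, $u_1 = 0$.

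For the first (uniform) bound, I would unroll the recursion and use Assumption \ref{ass:variance} in the form $\sigma_s \leq \sigma$, to obtain
\begin{equation}
    u_{t+1} \leq c \sum_{s=1}^{t} \rho^{t-s} \sigma_s^2 \leq c\, \sigma^2 \sum_{s=0}^{\infty} \rho^s = \frac{c\, \sigma^2}{1-\rho}.
\end{equation}
Setting $D \triangleq c\sigma^2/(1-\rho) = 16 \eta^2 h \sigma^2$ then proves the first claim.

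For the second claim, I would exploit the fact that, under the batch-size schedule described in Section \ref{sec:assumptions} (and made explicit in Remark \ref{remark:batch_size}), $\sigma_t$ can be made arbitrarily small for $t$ large enough. Given $\varepsilon > 0$, first choose $T_0$ so large that $\sigma_s^2 \leq \varepsilon(1-\rho)/(2c)$ for all $s \geq T_0$; then split the unrolled sum at $T_0$:
\begin{equation}
    u_{t+1} \leq c \sum_{s=1}^{T_0} \rho^{t-s} \sigma_s^2 + c \sum_{s=T_0+1}^{t} \rho^{t-s} \sigma_s^2
    \leq \rho^{\,t-T_0} \cdot T_0\, c\, \sigma^2 + \frac{\varepsilon}{2}.
\end{equation}
The first term is geometrically small in $t$, so choosing $T^*(\varepsilon)$ large enough that $\rho^{T^*(\varepsilon)-T_0} T_0\, c\, \sigma^2 \leq \varepsilon/2$ finishes the argument.

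The only subtle point is the second part: it implicitly relies on $\sigma_t \to 0$, which in turn relies on the batch size $b_t$ being allowed to grow unboundedly. If instead $b_t$ plateaus at some finite value, one only gets eventual smallness up to the plateau variance rather than genuine $\varepsilon$-smallness. I would therefore state the second claim under the convention (already in force in Section \ref{sec:assumptions}) that $\sigma_t$ can be driven below any prescribed threshold by increasing $b_t$; otherwise the geometric-unrolling argument itself is entirely routine.
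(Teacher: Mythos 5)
Your proposal is correct and follows essentially the same route as the paper: the same recursion $u_{t+1} \leq \tfrac{3}{4} u_t + 4\eta^2 h \sigma_t^2$ from Lemma \ref{lemma:diameter_params1/2} with $\eta \leq 1/2\lipschitz$, the same geometric unrolling to get $D = 16\eta^2 h \sigma^2$ (the paper writes $\sigma_1$ in place of $\sigma$, an immaterial difference since $\sigma_1 \leq \sigma$), and the same split of the unrolled sum at the iteration where $\sigma_t^2$ falls below a threshold proportional to $\varepsilon$. Your caveat about the second claim is also consistent with the paper, which likewise invokes the batch-size schedule to assert the existence of $T_1(\varepsilon)$ with $\sigma_t^2 \leq \varepsilon/32\eta^2 h$ for $t \geq T_1(\varepsilon)$.
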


\begin{proof}
  We know that   $\eta \leq 1/2\lipschitz$. Denoting $u_t \triangleq \expect \Delta_2 \left( \family{}{t}{\param} \right)^2$, by Lemma \ref{lemma:diameter_params1/2}, we have
  \begin{equation}
  \label{eqn:param_bound}
    u_{t+1} \leq \frac{3}{4} u_t + 4 \sigma_t^2 \eta^2 h.
  \end{equation}
  By induction, we observe that, for all $t \geq 1$,
  \begin{equation}
  \label{eq:diam_bound}
    u_{t+1} \leq 4\eta^2 h \sum_{\tau = 0}^{t-1}   \left(\frac{3}{4}\right)^{\tau} \sigma^2_{t-\tau}.
  \end{equation}
  Now recall that $\sigma_t$ is decreasing (see Section \ref{sec:assumptions}), thus, for all $t\geq 1$, we know $\sigma_t\leq\sigma_1$. Therefore,
  \begin{equation}
    u_{t+1} \leq  4\eta^2 h \sigma_1^2 \sum_{\tau = 0}^{t-1} \left(\frac{3}{4}\right)^{\tau} \leq  4\eta^2 h \sigma_1^2 \sum_{\tau = 0}^{\infty} \left(\frac{3}{4}\right)^{\tau} =   16\eta^2 h \sigma_1^2 \triangleq D.
  \end{equation}
  For the second part of the lemma, recall also from Section \ref{sec:assumptions} that we regularly increase the batch size. Thus, there is an iteration $T_1(\varepsilon)$ such that for all $t\geq T_1(\varepsilon)$ we have $\sigma_t^2 \leq \frac{\varepsilon}{32\eta^2h}$.
  By (\ref{eq:diam_bound}), we then have
  \begin{align}
     u_{t+1} &\leq 4\eta^2 h \sum_{\tau = 0}^{t-T_1(\varepsilon)}   \left(\frac{3}{4}\right)^{\tau} \sigma^2_{t-\tau} + 4\eta^2 h \sum_{\tau = t-T_1(\varepsilon)+1}^{t-1}   \left(\frac{3}{4}\right)^{\tau} \sigma^2_{t-\tau} \\
     &\leq \frac{4\eta^2h\varepsilon}{32\eta^2h} \sum_{\tau = 0}^{t-T_1(\varepsilon)}   \left(\frac{3}{4}\right)^{\tau}+4\eta^2h \sigma_1^2 \sum_{\tau = t-T_1(\varepsilon)+1}^{t-1}   \left(\frac{3}{4}\right)^{\tau} \\
     &\leq \frac{\varepsilon}{8} \sum_{\tau = 0}^{\infty}   \left(\frac{3}{4}\right)^{\tau} + 4\eta^2h \sigma_1^2 \left(\frac{3}{4}\right)^{t-T_1(\varepsilon)+1} \sum_{\tau =0}^{\infty}   \left(\frac{3}{4}\right)^{\tau} \\
     & \leq \frac{\varepsilon}{2} + 16 \eta^2h \sigma_1^2 \left(\frac{3}{4}\right)^{t-T_1(\varepsilon)+1}.
  \end{align}
  Defining $T^*(\varepsilon) \triangleq T_1(\varepsilon) -1 + \frac{\ln(32\eta^2h\sigma_1^2/\varepsilon)}{\ln(4/3)}$, we then have $u_{t+1} \leq \varepsilon$ for $t\geq T^*(\varepsilon)$, which is what we wanted.
\end{proof}

\begin{lemma}
\label{lemma:bounded_grad}
  Under assumptions (\ref{ass:lipschitz}, \ref{ass:variance}), there exist constants $A$ and $B$, such that for all $t\geq1$, we have
  \begin{equation}
    \expect_{\family{}{t}{\noise}|\family{}{t}{\param}} \normtwo{\avgeffectgrad{t} - \localgrad{}{\avgparam{t}}}^2 \leq A \sigma_t^2+B \Delta_2 (\family{}{t}{\param})^2,
  \end{equation}
  where $\avgeffectgrad{t}$ is the average of the effective gradients of the nodes defined in (\ref{eq:avg_eff_grad}).
\end{lemma}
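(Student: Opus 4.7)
The plan is to expand $\avgeffectgrad{t} - \localgrad{}{\avgparam{t}}$ into three controllable pieces and then bound each separately. Starting from the update rule of \hlearn{}, we have $\avgparam{t+1/2} = \avgparam{t} - \eta \avggrad{t}$, so that
\begin{equation}
\avgeffectgrad{t} - \localgrad{}{\avgparam{t}}
= \frac{1}{h}\sum_{j \in [h]} \bigl(\localgrad{}{\params{j}{t}} - \localgrad{}{\avgparam{t}}\bigr)
+ \frac{1}{h}\sum_{j \in [h]} \noises{j}{t}
+ \frac{1}{\eta}\bigl(\avgparam{t+1/2} - \avgparam{t+1}\bigr),
\end{equation}
where I have used that in the i.i.d. setting $\localloss{j}{\cdot} = \loss(\cdot)$ for all honest $j$, so $\grads{j}{t} = \localgrad{}{\params{j}{t}} + \noises{j}{t}$. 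Applying Lemma~\ref{lemma:diameter_square_balanced_inequality} with $N=3$ reduces the goal to separately bounding the squared norms of these three terms in expectation.

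\textbf{Bounding each term.} For the first term, convexity of $u \mapsto \normtwo{u}^2$ and Assumption~\ref{ass:lipschitz} give $\normtwo{(1/h)\sum_j (\localgrad{}{\params{j}{t}} - \localgrad{}{\avgparam{t}})}^2 \leq \lipschitz^2 \Delta_2(\family{}{t}{\param})^2$, since $\normtwo{\params{j}{t} - \avgparam{t}} \leq \Delta_2(\family{}{t}{\param})$ for every $j$. For the second term, the noises $\noises{j}{t}$ across honest nodes are conditionally independent (Assumption~\ref{ass:variance}, combined with centeredness $\expect[\noises{j}{t}|\family{}{t}{\param}]=0$), so the cross-terms vanish and $\expect_{\family{}{t}{\noise}|\family{}{t}{\param}} \normtwo{(1/h)\sum_j \noises{j}{t}}^2 \leq \sigma_t^2/h$. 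For the third term, the $C$-averaging property of $\aggr{}_1$ applied at step 7 of Algorithm~\ref{alg:hom_learn} yields $\expect \normtwo{\avgparam{t+1/2} - \avgparam{t+1}}^2 \leq C^2 \Delta_2(\family{}{t+1/2}{\param})^2$; expanding $\family{}{t+1/2}{\param} = \family{}{t}{\param} - \eta \family{}{t}{\grad}$ via Lemma~\ref{lemma:diameter_square_balanced_inequality} and then invoking Lemma~\ref{lemma:gradient_diameter} gives
\begin{equation}
\expect \Delta_2(\family{}{t+1/2}{\param})^2 \leq 2\Delta_2(\family{}{t}{\param})^2 + 2\eta^2 \bigl(2\lipschitz^2 \Delta_2(\family{}{t}{\param})^2 + 8\sigma_t^2 h\bigr).
\end{equation}

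\textbf{Collecting constants.} Dividing by $\eta^2$ converts the third term's bound into contributions of order $\sigma_t^2$ and $\Delta_2(\family{}{t}{\param})^2$. Summing the three bounds (with the factor of $3$ from the application of Lemma~\ref{lemma:diameter_square_balanced_inequality}) and collecting terms gives the desired form with explicit constants $A = 3/h + 48 C^2 h$ and $B = 3\lipschitz^2 + 6 C^2/\eta^2 + 12 C^2 \lipschitz^2$ (exact numerical values are immaterial; what matters is that they depend only on $\lipschitz$, $\eta$, $h$, $C$). I expect no serious obstacle here: the homogeneous setting removes the heterogeneity term $\bgd^2$ that made the analogous Lemma~\ref{lemma:guanyu_discrepancy_effective_gradient} delicate, and the key structural input (the $C$-averaging guarantee on $\avgparam{t+1/2} - \avgparam{t+1}$) is applied only once since $\aggr{}_1$ is used with fixed parameter $N=1$. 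The only subtle point is to make sure all three terms get absorbed into a bound that is linear in $\sigma_t^2$ and $\Delta_2(\family{}{t}{\param})^2$, which follows by routine rearrangement.
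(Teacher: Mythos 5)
Your proposal is correct and follows essentially the same route as the paper: the same three-term decomposition of $\avgeffectgrad{t} - \localgrad{}{\avgparam{t}}$, the same Lipschitz and independent-noise bounds, and the same use of the $C$-averaging guarantee on $\avgparam{t+1}-\avgparam{t+1/2}$. The only difference is bookkeeping in the third term (you split $\family{}{t+1/2}{\param}$ two ways and invoke Lemma~\ref{lemma:gradient_diameter}, while the paper splits three ways and invokes Lemma~\ref{lemma:variance}), which just yields immaterially different constants $A$ and $B$.
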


\begin{proof}
  Note that
  \begin{align}
   \avgparam{t+1} - \avgparam{t} &= (\avgparam{t+1} - \avgparam{t+1/2}) + (\avgparam{t+1/2}- \avgparam{t}) \\
   &= (\avgparam{t+1} - \avgparam{t+1/2}) - \eta \avggrad{t}.
  \end{align}
  As a result, $\avgeffectgrad{t} = \avggrad{t} - \frac{1}{\eta} (\avgparam{t+1} - \avgparam{t+1/2})$. Moreover, we have
  \begin{align}
    \avggrad{t} &= \frac{1}{h} \sum_{j \in [h]} \localgrad{}{\params{j}{t}} + \frac{1}{h} \sum_{j \in [h]} \noises{j}{t} \\
    &= \localgrad{}{\avgparam{t}} + \frac{1}{h} \sum_{j \in [h]} \left(\localgrad{}{\params{j}{t}} - \localgrad{}{\avgparam{t}} \right) + \frac{1}{h} \sum_{j \in [h]} \noises{j}{t}.
  \end{align}
  This then yields:
\begin{align}
  \avgeffectgrad{t} - \localgrad{}{\avgparam{t}} &= \frac{1}{h} \sum_{j \in [h]} \left( \localgrad{}{\params{j}{t}} - \localgrad{}{\avgparam{t}} \right) + \frac{1}{h} \sum_{j \in [h]} \noises{j}{t} + \frac{\avgparam{t+1/2} - \avgparam{t+1}}{\eta}.
\end{align}
Taking the $\ell_2$ norm on both sides and invoking Lemma \ref{lemma:diameter_square_balanced_inequality} then implies that

\begin{align}
    \normtwo{\avgeffectgrad{t} - \localgrad{}{\avgparam{t}}}^2 &\leq 3\normtwo{\frac{1}{h} \sum_{j \in [h]} \left( \localgrad{}{\params{j}{t}} - \localgrad{}{\avgparam{t}} \right)}^2 \nonumber \\&+ 3\normtwo{\frac{1}{h} \sum_{j \in [h]} \noises{j}{t}}^2 + 3\normtwo{\frac{\avgparam{t+1/2} - \avgparam{t+1}}{\eta}}^2.
\end{align}
We now note that the expectation of each term can be bounded. Indeed,
\begin{align}
    &\normtwo{\frac{1}{h} \sum_{j \in [h]} \left( \localgrad{}{\params{j}{t}} - \localgrad{}{\avgparam{t}} \right)} \leq \frac{1}{h} \sum_{j \in [h]} \normtwo{\localgrad{}{\params{j}{t}} - \localgrad{}{\avgparam{t}}} \\
    &\qquad \qquad \qquad \leq \frac{1}{h} \sum_{j \in [h]} \lipschitz \normtwo{\params{j}{t} - \avgparam{t}}
    \leq \frac{1}{h} \sum_{j \in [h]} \lipschitz \Delta_2(\family{}{t}{\param}) = \lipschitz \Delta_2(\family{}{t}{\param}).
\end{align}
Moreover,
\begin{align}
    \expect \normtwo{\frac{1}{h} \sum_{j \in [h]} \noises{j}{t}}^2
    &= \expect \frac{1}{h^2} \sum_{j,k \in [h]} \noises{j}{t} \cdot \noises{k}{t}
    = \frac{1}{h^2} \sum_{j,k \in [h]} \expect \noises{j}{t} \cdot \noises{k}{t} \\
    &= \frac{1}{h^2} \sum_{j \in [h]} \expect \normtwo{\noises{j}{t}}^2
    \leq \frac{1}{h^2} \sum_{j \in [h]} \sigma_t^2 = \frac{\sigma_t^2}{h},
\end{align}
using the fact that the noises are independent to move from one line to the other.
For the last term, we use the $C$-averaging guarantee of \aggr{}, yielding
\begin{equation}
    \normtwo{\avgparam{t+1} - \avgparam{t+1/2}}^2 \leq C^2 \Delta_2 (\family{}{t+1/2}{\param})^2.
\end{equation}
To bound the right-hand side, note that we have $\family{}{t+1/2}{\param} = \family{}{t}{\param} - \eta \family{}{t}{\grad} = \family{}{t}{\param} - \eta \overrightarrow{\nabla \loss} (\family{}{t}{\param}) - \eta \family{}{t}{\noise}$, where $\overrightarrow{\nabla \loss} (\family{}{t}{\param}) = \left( \localgrad{}{\indexvar{1}{t}{\param}}, \ldots, \localgrad{}{\indexvar{h}{t}{\param}} \right)$. Lemma \ref{lemma:diameter_square_balanced_inequality} then implies
\begin{equation}
    \normtwo{\frac{\avgparam{t+1} - \avgparam{t+1/2}}{\eta}}^2
    \leq \frac{3C^2}{\eta^2} \Delta_2 (\family{}{t}{\param})^2
    + 3 C^2 \lipschitz^2 \Delta_2 (\family{}{t}{\param})^2
    + 3 C^2 \Delta_2  (\family{}{t}{\noise})^2.
\end{equation}
Combining it all, applying Lemma \ref{lemma:variance} and defining $B \triangleq \frac{9C^2}{\eta^2} +  3\lipschitz^2 + 9 C^2 \lipschitz^2$, then yields
\begin{align}
    \expect_{\family{}{t}{\noise} | \family{}{t}{\param}} \normtwo{\avgeffectgrad{t} - \localgrad{}{\avgparam{t}}}^2
    &\leq \frac{3\sigma_t^2}{h} + \left( \frac{9C^2}{\eta^2} +  3\lipschitz^2 + 9C^2 \lipschitz^2 \right) \Delta_2 (\family{}{t}{\param})^2 + 9 C^2 \expect_{\family{}{t}{\noise} | \family{}{t}{\param}} \Delta_2 (\family{}{t}{\noise})^2 \\
    &\leq \left( \frac{3}{h}+36 C^2h \right) \frac{\sigma_t^2}{h} +B \Delta_2 (\family{}{t}{\param})^2.
\end{align}
Defining $A \triangleq \frac{3}{h}+36 C^2h$ then yields the desired result.
\end{proof}

We now proceed with the proof of our theorem.

\begin{proof}[Proof of Theorem \ref{th:homo}]
  A direct consequence of Lipschitz continuity of the gradient of the loss function (Assumption \ref{ass:lipschitz}) is that for all $\phi, \psi \in \setr^d$, we have
  \begin{equation}
      \loss(\psi) \leq \loss(\phi) + (\psi-\phi) \cdot \localgrad{}{\phi} + \frac{\lipschitz}{2} \normtwo{\psi-\phi}^2.
  \end{equation}
  Therefore, by the definition of the effective gradient, we have
  \begin{equation}
      \loss(\avgparam{t+1})
      \leq \loss(\avgparam{t}) - \eta \avgeffectgrad{t} \cdot \localgrad{}{\avgparam{t}} + \frac{\lipschitz \eta^2}{2} \normtwo{\avgeffectgrad{t}}^2.
  \end{equation}
  The fact that $\eta \leq 1/\lipschitz$ then implies
  \begin{align}
      \loss(\avgparam{t+1})
      &\leq \loss(\avgparam{t}) - \eta \avgeffectgrad{t} \cdot \localgrad{}{\avgparam{t}} + \frac{\eta}{2} \normtwo{\avgeffectgrad{t}}^2 \\
      &= \loss(\avgparam{t}) - \frac{\eta}{2}\normtwo{\localgrad{}{\avgparam{t}}}^2+\frac{\eta}{2} \left(\normtwo{\avgeffectgrad{t}}^2- 2\avgeffectgrad{t} \cdot \localgrad{}{\avgparam{t}}+\normtwo{\localgrad{}{\avgparam{t}}}^2 \right)\\
      &= \loss(\avgparam{t}) - \frac{\eta}{2} \normtwo{\localgrad{}{\avgparam{t}}}^2+\frac{\eta}{2} \normtwo{\avgeffectgrad{t}-\localgrad{}{\avgparam{t}}}^2.
  \end{align}
  By rearranging the terms, we then have
  \begin{equation}
      \normtwo{\localgrad{}{\avgparam{t}}}^2 \leq \frac{2}{\eta} \left( \loss(\avgparam{t}) - \loss(\avgparam{t+1})\right)+\normtwo{\avgeffectgrad{t}-\localgrad{}{\avgparam{t}}}^2.
  \end{equation}
  Now taking the expectation over all of the stochastic noises and averaging over $t$, yields
  \begin{align}
      \frac{1}{T}  \sum_{t \in [T]}  \expect_{\family{}{1:T}{\noise}} \normtwo{\localgrad{}{\avgparam{t}}}^2  &\leq \frac{2}{\eta T}\left(\loss(\avgparam{1})-\loss(\avgparam{T+1})\right)  + \frac{1}{T} \sum_{t \in [T]} \expect_{\family{}{1:T}{\noise}} \normtwo{\avgeffectgrad{t} - \localgrad{}{\avgparam{t}}}^2 \\ &\leq \frac{2 \loss_{max}}{\eta T} + \frac{A}{T} \sum_{t \in [T]} \sigma_t^2+  \frac{B}{T} \sum_{t \in [T]} \expect_{\family{}{1:T}{\noise}} \Delta_2 (\family{}{t}{\param})^2,
  \end{align}
  where in the last inequality we used Lemma \ref{lemma:bounded_grad}. The regular increase of the batch size (Section \ref{sec:assumptions}) then implies the existence of an iteration $T_1$, after which we have $\sigma_t^2\leq \frac{\delta^2}{8A}$. Therefore, for $T\geq T_2\triangleq \max \left\{\frac{8 \loss_{max}}{\eta \delta^2},T_1\right\}$, we have
  \begin{align}
    \frac{1}{T}  \sum_{t \in [T]}  \expect_{\family{}{1:T}{\noise}} \normtwo{\localgrad{}{\avgparam{t}}}^2  &\leq \frac{\delta^2}{4} + \frac{A}{T} \left(T_2 \sigma_1^2 + (T-T_2)\frac{\delta^2}{8A} \right) + \frac{B}{T} \sum_{t \in [T]} \expect_{\family{}{1:T}{\noise}} \Delta_2 (\family{}{t}{\param})^2\\
    &\leq \frac{3\delta^2}{8} + \frac{T_2}{T} A\sigma_1^2 + \frac{B}{T} \sum_{t \in [T]} \expect_{\family{}{1:T}{\noise}} \Delta_2 (\family{}{t}{\param})^2.
  \end{align}
  For $T \geq T_3 \triangleq \frac{8T_2A\sigma_1^2}{\delta^2}$, we then have
  \begin{equation}
  \label{eq:grad_bound}
     \frac{1}{T}  \sum_{t \in [T]}  \expect_{\family{}{1:T}{\noise}} \normtwo{\localgrad{}{\avgparam{t}}}^2 \leq \frac{\delta^2}{2} + \frac{B}{T} \sum_{t \in [T]} \expect_{\family{}{1:T}{\noise}} \Delta_2 (\family{}{t}{\param})^2.
  \end{equation}
  We now invoke Lemma \ref{lemma:parameter_diameter} with $\varepsilon = \frac{1}{2}\min \left\{\delta^2, \delta^2/2B\right\}$. For $T\geq T^*(\varepsilon)$ we then have
  \begin{align}
    \frac{1}{T} \sum_{t \in [T]} \expect_{\family{}{1:T}{\noise}} \Delta_2 (\family{}{t}{\param})^2 &=  \frac{1}{T} \sum_{t = 1}^{T^*(\varepsilon)} \expect_{\family{}{1:T}{\noise}} \Delta_2 (\family{}{t}{\param})^2 + \frac{1}{T} \sum_{t=T^*(\varepsilon)+1}^T \expect_{\family{}{1:T}{\noise}} \Delta_2 (\family{}{t}{\param})^2 \\
    &\leq\frac{T^*(\varepsilon)}{T}D+\frac{T-T^*(\varepsilon)}{T}\varepsilon \leq \frac{T^*(\varepsilon)}{T}D + \varepsilon.
  \end{align}
  Thus, for $T\geq T_4 \triangleq \frac{DT^*(\varepsilon)}{\varepsilon}$ we have
  \begin{equation}
    \expect\Delta_2 (\family{}{*}{\param})^2=\frac{1}{T} \sum_{t \in [T]} \expect_{\family{}{1:T}{\noise}} \Delta_2 (\family{}{t}{\param})^2 \leq 2\varepsilon = \min \left\{\delta^2, \delta^2/2B\right\}.
  \end{equation}
  Combining this with (\ref{eq:grad_bound}), for $T = T_\hlearn{}(\delta) \triangleq \max \left\{ T_3,T_4\right\}$ we then have
  \begin{equation}
    \expect\Delta_2 (\family{}{*}{\param})^2 \leq \delta^2 \quad \text{and} \quad \expect \normtwo{\localgrad{}{\avgparam{*}}}^2 \leq \delta^2,
  \end{equation}
  which is the desired result.

\end{proof}

\section{\brute{} Algorithm}
\subsection{Correctness proof of \brute{}}
We first note a few important properties of \brute{}.

\begin{lemma}
\label{lemma:brute_diameter}
  The $\ell_2$ diameter of the \brute{} subfamily is upper-bounded by that of the honest vectors. In other words, for any Byzantine attack $\byzfamily{}{}$, denoting $\family{}{}{z} \triangleq \byzfamily{}{}(\family{}{}{x})$, we have
  \begin{align}
    \Delta_2 \left( \family{\brute{}}{}{z} \right)
    \leq \Delta_2 \left(\family{}{}{x}\right)
  \end{align}
\end{lemma}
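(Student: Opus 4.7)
The plan is to exploit the minimum-diameter selection rule of $\brute{}$ against an explicit witness subfamily composed entirely of honest vectors. Since each honest node collects at most $q$ messages, and the adversary controls only $f$ Byzantine nodes, the family $\family{}{}{z} = \byzfamily{}{}(\family{}{}{x})$ of size $q$ contains at most $f$ Byzantine entries and therefore at least $q-f$ entries that come directly from the honest family $\family{}{}{x}$ via the bijection $\tau$ described at the beginning of the supplementary material.

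First, I would fix a set $H \subseteq [q]$ of indices of size $q-f$ corresponding to these honest contributors, so that $\family{H}{}{z}$ is (up to re-indexing by $\tau$) a subfamily of $\family{}{}{x}$. Since the $\ell_2$ diameter is monotone under taking subfamilies, this immediately gives
\begin{equation}
    \Delta_2\left(\family{H}{}{z}\right) \leq \Delta_2\left(\family{}{}{x}\right).
\end{equation}

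Second, I would invoke the definition of $\brute{}$: $S_{\brute{}}(\family{}{}{z})$ is chosen as a size-$(q-f)$ subset of $[q]$ that minimizes the $\ell_2$ diameter. Since $H$ is itself a valid candidate of size $q-f$, the minimality property yields
\begin{equation}
    \Delta_2\left(\family{\brute{}}{}{z}\right) = \Delta_2\left(\family{S_{\brute{}}(\family{}{}{z})}{}{z}\right) \leq \Delta_2\left(\family{H}{}{z}\right) \leq \Delta_2\left(\family{}{}{x}\right),
\end{equation}
which is exactly the claim.

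There is no real obstacle here: the statement is essentially a sanity check that the filtering step of $\brute{}$ cannot make the diameter worse than that of the honest inputs, and both ingredients (existence of a large honest sub-quorum in $\family{}{}{z}$, and minimality of $S_{\brute{}}$) are immediate from the model and from the definition of $\brute{}$ given just before the lemma.
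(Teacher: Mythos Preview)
Your argument is correct and is essentially the same as the paper's: both exhibit a size-$(q-f)$ honest witness subset $H \subseteq [q]$, use monotonicity of the diameter to bound $\Delta_2(\family{H}{}{z}) \leq \Delta_2(\family{}{}{x})$, and then invoke the minimality of $S_{\brute{}}$ among size-$(q-f)$ subsets to conclude.
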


\begin{proof}
  Since $\byzfamily{}{}$ selects $q$ vectors, out of which at most $f$ are Byzantine vectors, we know that there exists a subset $H \subset [q]$ of cardinal at least $q-f$ that only contains honest vectors. But then, we have
  \begin{equation}
    \Delta_2 \left( \family{\brute{}}{}{z} \right)
    = \min_{\underset{\card{S} = q-f}{S \subset [q]}} \Delta_2 \left(\family{S}{}{z}\right)
    \leq \Delta_2 \left(\family{H}{}{z}\right) \leq \Delta_2(\family{}{}{x}),
  \end{equation}
  which is the lemma.
\end{proof}

\begin{lemma}
\label{lemma:brute_contraction}
  Under Assumption \ref{ass:query_brute}, \brute{} guarantees Byzantine asymptotic agreement. In other words, for any input $\aggrparameter \in \setn$ and any family $\family{}{}{x} \in \setr^{d \cdot h}$, denoting $\family{}{\aggrparameter}{x} \triangleq \overrightarrow{\brute{}}_\aggrparameter \circ \byzfamily{}{\aggrparameter} (\family{}{}{x})$,
  \begin{equation}
    \Delta_2 (\family{}{\aggrparameter}{x}) \leq \frac{\Delta_2 (\family{}{}{x})}{2^\aggrparameter}.
  \end{equation}
\end{lemma}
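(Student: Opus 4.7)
The plan is to establish a per-round contraction of the honest-output diameter by a factor at most $1-\tilde\varepsilon$ and then iterate. Since $\tilde\varepsilon \in (0,1)$ and $\log\!\bigl(1/(1-\tilde\varepsilon)\bigr) \geq \tilde\varepsilon$, running the $T_\brute{}(\aggrparameter) = \lceil \aggrparameter \ln 2 / \tilde\varepsilon \rceil$ rounds prescribed by $\overrightarrow{\brute{}}_\aggrparameter$ yields
\begin{equation*}
  \Delta_2(\family{}{\aggrparameter}{x}) \leq (1-\tilde\varepsilon)^{T_\brute{}(\aggrparameter)}\, \Delta_2(\family{}{}{x}) \leq e^{-\tilde\varepsilon\, T_\brute{}(\aggrparameter)}\, \Delta_2(\family{}{}{x}) \leq 2^{-\aggrparameter}\, \Delta_2(\family{}{}{x}),
\end{equation*}
which is the stated bound.

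\textbf{Per-round contraction.} Fix a single round and let $\family{}{}{x}$ denote the honest inputs at its start. For each honest node $j \in [h]$, let $Q_j \subseteq [n]$ of size $q$ be the quorum of senders that $j$ collected from, and let $S_j \subseteq Q_j$ of size $q-f$ be the min-diameter subfamily selected by \brute{}. Partition $S_j = H_j \sqcup B_j$ by honesty; because $Q_j$ contains at most $f$ Byzantines, $|H_j| \geq q-2f$, and Lemma \ref{lemma:brute_diameter} guarantees that every pair of received vectors inside $S_j$ lies within distance $\Delta_2(\family{}{}{x})$. The output is
\begin{equation*}
  \indexvar{j}{}{y} = \frac{1}{q-f}\Bigl(\sum_{l \in H_j} \indexvar{l}{}{x} + \sum_{l \in B_j} \indexvar{j,l}{}{\byz}\Bigr).
\end{equation*}
For any two honest nodes $j, k$, my plan is to decompose $\indexvar{j}{}{y} - \indexvar{k}{}{y}$ by cancelling the common honest contributors in $H_j \cap H_k$, leaving (i)~a signed sum of honest inputs indexed by $H_j \,\triangle\, H_k$, and (ii)~Byzantine-supplied vectors from $B_j$ and $B_k$. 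Using the min-diameter property I will replace each Byzantine-supplied vector by a nearby honest input at an additive cost of at most $\Delta_2(\family{}{}{x})$. The reorganised expression is then a zero-total-mass signed sum of honest vectors plus a residual Byzantine error; centering the honest sum at an arbitrary honest vector and using $\normtwo{\indexvar{l}{}{x} - \indexvar{m}{}{x}} \leq \Delta_2(\family{}{}{x})$ bounds $\normtwo{\indexvar{j}{}{y} - \indexvar{k}{}{y}}$ by $\Delta_2(\family{}{}{x})$ times a coefficient determined by $|H_j \,\triangle\, H_k|$ and $|B_j| + |B_k|$. Since $|H_j|, |H_k| \geq q-2f$ and $H_j, H_k \subseteq [h]$, we get $|H_j \cap H_k| \geq 2q - 4f - h$, and plugging in $q \geq \tfrac{1+\varepsilon}{2}h + \tfrac{5+3\varepsilon}{2}f$ yields the target contraction factor $1-\tilde\varepsilon = \tfrac{1-\varepsilon}{1+\varepsilon}$.

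\textbf{Main obstacle.} The principal technical hurdle is the tight combinatorial accounting needed to recover exactly the constant $\tilde\varepsilon$ promised by Assumption \ref{ass:query_brute}. The quorums $Q_j$, the selected subsets $S_j$, and the Byzantine-sent vectors are all adversarial; moreover, the relevant quantities $|H_j \cap H_k|$ and $|B_j|+|B_k|$ are coupled (a larger number of Byzantines inside $S_j$ shrinks $|H_j|$ and hence the overlap with $H_k$), so one has to combine the bounds in a correlated way rather than via naive worst cases in order for the quorum assumption to suffice. Once the per-round contraction is in hand, the iteration step and the final conclusion are immediate.
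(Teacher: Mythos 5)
Your overall strategy matches the paper's: prove a single-round contraction of the honest diameter by a factor $1-\tilde\varepsilon$, then iterate over the $T_\brute{}(\aggrparameter) = \lceil \aggrparameter \ln 2/\tilde\varepsilon\rceil$ rounds using $(1-\tilde\varepsilon)^{T} \leq e^{-\tilde\varepsilon T} \leq 2^{-\aggrparameter}$; that outer layer is fine. The gap is in the per-round accounting. As written, you replace each Byzantine-supplied vector in $S_j$ by \emph{some} nearby honest input of node $j$'s own selected set (cost $\Delta_2(\family{}{}{x})$ each), and then bound the residual zero-mass signed sum of honest inputs using $|H_j \cap H_k| \geq 2q-4f-h$. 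With uncoordinated replacements this yields, after cancellation, a residual honest mass of up to $(q-f)-(2q-4f-h) = h+3f-q$ per side, hence a contraction coefficient of $\frac{2f + (h+3f-q)}{q-f} = \frac{h+5f-q}{q-f}$. Plugging in the minimal quorum $q = \frac{1+\varepsilon}{2}h + \frac{5+3\varepsilon}{2}f$ of Assumption \ref{ass:query_brute} gives $\frac{1-\varepsilon}{1+\varepsilon} + \frac{2f}{(1+\varepsilon)(h+3f)}$, which is strictly larger than $1-\tilde\varepsilon$ and, when $n$ is close to $6f$ (so $\varepsilon \approx 0$), is approximately $1 + \frac{2f}{h+3f} > 1$: no contraction at all. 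So the claimed factor does not follow from the decomposition you describe, and the coupling you flag in your "main obstacle" paragraph is not the real difficulty.

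The missing idea, which is exactly where the paper spends the quorum assumption, is that the two selected honest index sets intersect: since $|H_j|, |H_k| \geq q-2f$ and $2(q-2f) > h$ under Assumption \ref{ass:query_brute}, there is a common honest node $\gamma$ whose input lies in \emph{both} selected subfamilies. The paper matches the $f$ Byzantine vectors of $S_j$ bijectively to those of $S_k$ and bounds each matched pair's distance by $2\Delta_2(\family{}{}{x})$ via the common reference $\indexvar{\gamma}{}{x}$ (using Lemma \ref{lemma:brute_diameter} in each selected set). Equivalently, in your language: you must replace every Byzantine vector on both sides by the \emph{same} honest input $\indexvar{\gamma}{}{x}$, so that the $f$ replacement copies on the positive side cancel the $f$ copies on the negative side instead of inflating the residual honest mass. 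With this coordination the coefficient becomes $\frac{2f + (2f+h-q)}{q-f} = \frac{4f+h-q}{q-f} \leq 1-\tilde\varepsilon$, and the rest of your argument goes through. Without proving the nonempty intersection and coordinating the replacements, the proof does not close.
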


\begin{proof}
  Denote $\family{1}{}{z} \triangleq \byzfamily{1}{}(\family{}{}{x})$ and $\family{2}{}{z} \triangleq \byzfamily{2}{}(\family{}{}{x})$ the results of the two Byzantine attacks, $S_1 = S_{\brute{}}(\family{1}{}{z})$ and $S_2 = S_{\brute{}}(\family{2}{}{z})$ the subsets selected by \brute{} in the two cases.

  Moreover, we write $S_1 = H_1 \cup F_1$ and $S_2 = H_2 \cup F_2$, where $H_1$ and $H_2$ are subsets of honest vectors within $S_1$ and $S_2$. Without loss of generality, we assume both $H_1$ and $H_2$ to be of cardinal $q-2f$. As a result, we know that there exist injective functions $\sigma_1 : H_1 \rightarrow [h]$ and $\sigma_2 : H_2 \rightarrow [h]$ such that $\indexvar{1,j}{}{z} = \indexvar{\sigma_1(j)}{}{x}$ and $\indexvar{2,k}{}{z} = \indexvar{\sigma_2(k)}{}{x}$, for all $j \in H_1$ and $k \in H_2$.

  Finally, we denote $\indexvar{1}{}{y} \triangleq \brute{} (\family{1}{}{z})$ and $\indexvar{2}{}{y} \triangleq \brute{} (\family{2}{}{z})$. We then have
  \begin{align}
    &(q-f) \normtwo{\indexvar{1}{}{y} - \indexvar{2}{}{y}}
    = \normtwo{\sum_{j \in S_1} \indexvar{1,j}{}{z} - \sum_{k \in S_2} \indexvar{2,k}{}{z}} \\
    &= \normtwo{\sum_{j \in F_1} \indexvar{1,j}{}{z} - \sum_{k \in F_2} \indexvar{2,k}{}{z} + \sum_{j \in \sigma_1(H_1)} \indexvar{j}{}{x} - \sum_{k \in \sigma_2(H_2)} \indexvar{k}{}{x}} \\
    &\leq \normtwo{\sum_{j \in F_1} \indexvar{1,j}{}{z} - \sum_{k \in F_2} \indexvar{2,k}{}{z}} + \normtwo{\sum_{j \in \sigma_1(H_1)-\sigma_2(H_2)} \indexvar{j}{}{x} - \sum_{k \in \sigma_2(H_2)- \sigma_1(H_1)} \indexvar{k}{}{x}}.
  \end{align}
  Note that $\card{F_1} = \card{S_1-H_1} = f = \card{S_2-H_2} = \card{F_2}$. Moreover,
  \begin{align}
    \card{\sigma_1(H_1) - \sigma_2(H_2)} &= \card{\sigma_1(H_1) \cup \sigma_2(H_2) - \sigma_2(H_2)} \\
    &= \card{\sigma_1(H_1) \cup \sigma_2(H_2)} - \card{\sigma_2(H_2)} \leq \card{[h]} - \card{H_2} = 2f+h-q,
  \end{align}
  and similarly for $\sigma_2(H_2)- \sigma_1(H_1)$. Note that $\card{\sigma_1(H_1)} , \card{\sigma_2(H_2)} \geq q - 2f$. Therefore, Assumption~\ref{ass:query_brute} implies that
  \begin{equation}
    \card{\sigma_1(H_1)} + \card{\sigma_2(H_2)} \geq 2 \left( \frac{1+\varepsilon}{2} h + \frac{5+3\varepsilon}{2} f - 2f \right) > h,
  \end{equation}
  which yields $\sigma_1(H_1) \cap \sigma_2(H_2) \neq \varnothing$. Now let $\gamma$ be an element of the intersection of $\sigma_1(H_1)$ and $\sigma_2(H_2)$, and consider any bijections $\tau_F : F_1 \rightarrow F_2$ and $\tau_H : \sigma_1(H_1)-\sigma_2(H_2) \rightarrow \sigma_2(H_2)- \sigma_1(H_1)$.
  Using triangle inequality and Lemma \ref{lemma:brute_diameter}, for any $j \in F_1$, we then have
  \begin{equation}
    \normtwo{\indexvar{1,j}{}{z}-\indexvar{2,\tau_F(j)}{}{z}} \leq \normtwo{\indexvar{1,j}{}{z}-\indexvar{\gamma}{}{x}} + \normtwo{\indexvar{\gamma}{}{x}-\indexvar{2,\tau_F(j)}{}{z}} \leq 2 \Delta_2(\family{}{}{x}).
  \end{equation}
  Combining it all, we obtain
  \begin{align}
    (q-f) \normtwo{\indexvar{1}{}{y} - \indexvar{2}{}{y}}
    &\leq \sum_{j \in F_1} \normtwo{ \indexvar{1,j}{}{z} - \indexvar{2,\tau_F(j)}{}{z}} + \sum_{j \in \sigma_1(H_1)-\sigma_2(H_2)} \normtwo{\indexvar{j}{}{x} - \indexvar{\tau_H(j)}{}{x}} \\
    &\leq 2f \Delta_2(\family{}{}{x}) + (2f+h-q) \Delta_2(\family{}{}{x}),
  \end{align}
  which implies
  \begin{equation}
    \normtwo{\indexvar{1}{}{y}-\indexvar{2}{}{y}} \leq \frac{4f+h-q}{q-f} \Delta_2(\family{}{}{x}).
  \end{equation}
  We then apply Assumption \ref{ass:query_brute}, which implies that
  \begin{align}
    \frac{4f+h-q}{q-f} &
    \leq \frac{4f+h-\frac{1+\varepsilon}{2}h-\frac{5+3\varepsilon}{2}f}{\frac{1+\varepsilon}{2}h+\frac{5+3\varepsilon}{2}f-f}
    = \frac{(1-\varepsilon)h + 3(1-\varepsilon)f}{(1+\varepsilon) h + 3(1+\varepsilon)f} \\
    &= \frac{1-\varepsilon}{1+\varepsilon} = \frac{1+\varepsilon-2\varepsilon}{1+\varepsilon} = 1- \frac{2\varepsilon}{1+\varepsilon} = 1-\tilde \varepsilon.
  \end{align}
  This shows that $\Delta_2(\family{}{}{y}) \leq (1-\tilde \varepsilon) \Delta_2(\family{}{}{x})$. In other words, one iteration of \brute{} is guaranteed to multiply the $\ell_2$ diameter of honest nodes by at most $(1-\tilde \varepsilon)$.
  It follows that $T_\brute{}(\aggrparameter) = \left\lceil \aggrparameter \ln 2/ \tilde \varepsilon\right \rceil$ iterations will multiply this diameter by at most $(1-\tilde \varepsilon)^{T_\brute{}(\aggrparameter)} \leq \exp\left( \frac{\ln(1-\tilde \varepsilon) \ln 2}{\tilde \varepsilon} \right)^\aggrparameter \leq 2^{-\aggrparameter}$, using the inequality $\ln(1-\tilde \varepsilon) \leq - \tilde \varepsilon$.
\end{proof}

\begin{remark}
Note that we can set $\varepsilon =  \frac{n-6f}{n+2f}$. Thus, in the regime $f \ll n$, we have $\varepsilon \rightarrow 1$, which implies $\tilde \varepsilon \rightarrow 1$. Thus, for any fixed value of $\aggrparameter$, given that our proof showed that $\Delta_2(\family{}{}{y}) \leq (1-\tilde \varepsilon) \Delta_2(\family{}{}{x})$, when $f$ is sufficiently small compared to $n$, \brute{} actually achieves asymptotic agreement in only one communication round.
\end{remark}

\begin{lemma}
\label{lemma:brute_average_guarantee}
  One iteration of \brute{} returns a vector close to the average of the honest vectors. Denoting $\family{}{}{y} \triangleq \overrightarrow{\brute{}} \circ \byzfamily{}{} (\family{}{}{x})$, we have
  \begin{align}
    \normtwo{\bar y - \bar x}
    \leq \frac{(2f+h-q)q + (q-2f)f}{h(q-f)} \Delta_2 \left(\family{}{}{x}\right).
  \end{align}
  In the synchronous case where $q=n=f+h$, the right-hand side becomes $\frac{2f}{h} \Delta_2 \left(\family{}{}{x}\right)$.
\end{lemma}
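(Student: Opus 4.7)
The plan is to bound $\normtwo{\bar y - \bar x}$ via an intermediate reference point, namely the average $\bar x_A \triangleq \frac{1}{\card{A}}\sum_{j \in A}\indexvar{j}{}{x}$ of the honest vectors that actually appear in the selected subfamily $\family{\brute{}}{}{z}$. Here $A \subseteq [h]$ denotes the honest indices whose vectors were retained by \brute{}, and $B \triangleq [h]\setminus A$ those that were not; writing $S_B$ for the Byzantine indices inside the selected subset, the identity $\card{A} + \card{S_B} = q-f$ together with $\card{S_B} \leq f$ gives $\card{A} \geq q - 2f$ and $\card{B} \leq h + 2f - q$. Then I would write $\bar y - \bar x = (\bar y - \bar x_A) + (\bar x_A - \bar x)$ and bound each piece in turn by triangle inequality.

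For the first piece, a direct manipulation exploiting $\card{A} + \card{S_B} = q-f$ yields
\begin{equation}
\bar y - \bar x_A = \frac{1}{q-f}\sum_{k \in S_B}\bigl(\indexvar{k}{}{z} - \bar x_A\bigr).
\end{equation}
Since the Byzantine vectors $\indexvar{k}{}{z}$ (for $k \in S_B$) and the honest vectors $\indexvar{j}{}{x}$ (for $j \in A$) all belong to $\family{\brute{}}{}{z}$, whose diameter is at most $\Delta_2(\family{}{}{x})$ by Lemma~\ref{lemma:brute_diameter}, and since $\bar x_A$ is a convex combination of the latter, each summand has norm at most $\Delta_2(\family{}{}{x})$. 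Hence $\normtwo{\bar y - \bar x_A} \leq \frac{\card{S_B}}{q-f}\Delta_2(\family{}{}{x}) \leq \frac{f}{q-f}\Delta_2(\family{}{}{x})$.

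For the second piece, an analogous rewriting gives $\bar x_A - \bar x = \frac{1}{h}\sum_{j \in B}(\bar x_A - \indexvar{j}{}{x})$; since $\bar x_A$ and each $\indexvar{j}{}{x}$ are both convex combinations of honest vectors, every summand has norm at most $\Delta_2(\family{}{}{x})$, yielding $\normtwo{\bar x_A - \bar x} \leq \frac{\card{B}}{h}\Delta_2(\family{}{}{x}) \leq \frac{h+2f-q}{h}\Delta_2(\family{}{}{x})$.

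Combining via the triangle inequality and placing the two fractions over a common denominator gives $\normtwo{\bar y - \bar x} \leq \frac{fh + (q-f)(h+2f-q)}{h(q-f)}\Delta_2(\family{}{}{x})$, and routine expansion of the numerator produces exactly $(2f+h-q)q + (q-2f)f$, matching the claim; the synchronous specialization $q = n = h+f$ then collapses to $\frac{2f}{h}\Delta_2(\family{}{}{x})$. No genuine obstacle is expected here; the only step requiring a little care is the decomposition identity of the first piece, which is a pure counting argument from $\card{A} + \card{S_B} = q-f$.
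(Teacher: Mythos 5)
Your proof is correct and reaches exactly the paper's constant, but via a slightly different decomposition. The paper expands $y - \bar x$ in a single algebraic identity over the three groups (selected honest $H$, selected Byzantine $F$, unselected honest $\bar H$), obtaining the weights $\card{H}\card{\bar H}$, $\card{F}\card{H}$, $\card{\bar H}\card{F}$ and then bounding $\normtwo{\bar z_H - \bar x_{\bar H}} \leq \Delta_2(\family{}{}{x})$, $\normtwo{\bar z_H - \bar z_F} \leq \Delta_2(\family{}{}{x})$ and $\normtwo{\bar z_F - \bar x_{\bar H}} \leq 2\Delta_2(\family{}{}{x})$; you instead pivot through $\bar x_A$ and simply add the two scalar bounds $\frac{f}{q-f}$ and $\frac{h+2f-q}{h}$. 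Both arguments rest on the same ingredients — Lemma~\ref{lemma:brute_diameter} for the diameter of the selected subfamily, and the counts $\card{A} \geq q-2f$, $\card{B} \leq h+2f-q$ — and your triangle-inequality route is not lossy here: $fh + (q-f)(h+2f-q)$ and $(2f+h-q)q + (q-2f)f$ both expand to $qh + 3fq - q^2 - 2f^2$, so the constants coincide exactly (the slack introduced by your pivot corresponds to the factor $2$ the paper needs on its $F$-versus-$\bar H$ term). Your version is arguably more modular and avoids the paper's relabeling step (``without loss of generality $\card{H} = q-2f$''), since you bound $\card{S_B} \leq f$ and $\card{B} \leq h+2f-q$ separately. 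One small point, shared with the paper's own write-up: the argument bounds the output of a single honest node, whose selected subfamily (hence $A$ and $S_B$) depends on that node; the stated bound on $\bar y$ then follows by averaging the per-node bounds, which is valid because the bound is uniform over honest nodes.
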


\begin{proof}
  Let us write $S_{\brute{}} (\family{}{}{z}) = H \cup F$, where $H$ are honest vectors and $F$ are Byzantine vectors. We know that $\card{H} \geq q-2f$ and $\card{H} + \card{F} = q-f$. In fact, without loss of generality, we can assume $\card{H} = q-2f$ (since this is equivalent to labeling honest vectors not in $H$ as Byzantine vectors).

  Let us also denote $\sigma : H \rightarrow [h]$ the injective function that maps honest vectors to the index of their node, and $\bar H = [h] - \sigma(H)$ the unqueried nodes. We have
  \begin{align}
    &\normtwo{y - \bar x}
    = \normtwo{\frac{\card{H} \indexvar{H}{}{\bar z} + \card{F} \indexvar{F}{}{\bar z}}{\card{H} + \card{F}} - \frac{\card{\sigma(H)} \indexvar{\sigma(H)}{}{\bar x} + \card{\bar H} \indexvar{\bar H}{}{\bar x}}{\card{\sigma(H)} + \card{\bar H}} } \\
    &= \normtwo{\frac{\card{H} \indexvar{H}{}{\bar z} + \card{F} \indexvar{F}{}{\bar z}}{\card{H} + \card{F}} - \frac{\card{H} \indexvar{H}{}{\bar z} + \card{\bar H} \indexvar{\bar H}{}{\bar x}}{\card{H} + \card{\bar H}} } \\
    &= \frac{\normtwo{
    \card{H} \left( \card{\bar H} - \card{F} \right) \indexvar{H}{}{\bar z} +
    \card{F} \left( \card{H} + \card{\bar H} \right) \indexvar{F}{}{\bar z} -
    \card{\bar H} \left( \card{H} + \card{F} \right) \indexvar{\bar H}{}{\bar x}
     }}{\left( \card{H} + \card{F} \right) \left( \card{H} + \card{\bar H} \right)} \\
    &= \frac{\normtwo{
     \card{H} \card{\bar H} \left(\indexvar{H}{}{\bar z} - \indexvar{\bar H}{}{\bar x}\right) +
     \card{F} \card{H} \left(\indexvar{H}{}{\bar z} - \indexvar{F}{}{\bar z}\right) +
     \card{\bar H} \card{F} \left(\indexvar{F}{}{\bar z} - \indexvar{\bar H}{}{\bar x}\right)
      }}{\left( \card{H} + \card{F} \right) \left( \card{H} + \card{\bar H} \right)} \\
    &\leq \frac{
    \card{H} \card{\bar H} \normtwo{\indexvar{H}{}{\bar z} - \indexvar{\bar H}{}{\bar x}} +
    \card{F} \card{H} \normtwo{\indexvar{H}{}{\bar z} - \indexvar{F}{}{\bar z}} +
    \card{\bar H} \card{F} \normtwo{\indexvar{F}{}{\bar z} - \indexvar{\bar H}{}{\bar x}}
    }{\left( \card{H} + \card{F} \right) \left( \card{H} + \card{\bar H} \right)}.
  \end{align}
  Now note that
  \begin{align}
    \normtwo{\indexvar{H}{}{\bar z} - \indexvar{\bar H}{}{\bar x}}
    &= \normtwo{\indexvar{\sigma(H)}{}{\bar x} - \indexvar{\bar H}{}{\bar x}} \\
    &= \frac{1}{\card{\sigma(H)} \card{\bar H}} \normtwo{\card{\bar H} \sum_{j \in \sigma(H)} \indexvar{j}{}{x} - \card{H} \sum_{k \in \bar H} \indexvar{k}{}{x}} \\
    &= \frac{1}{\card{\sigma(H)} \card{\bar H}} \normtwo{\sum_{k \in \bar H} \sum_{j \in \sigma(H)} \indexvar{j}{}{x} - \sum_{j \in \sigma(H)} \sum_{k \in \bar H} \indexvar{k}{}{x}} \\
    &= \frac{1}{\card{\sigma(H)} \card{\bar H}} \normtwo{\sum_{j \in \sigma(H)} \sum_{k \in \bar H} \left( \indexvar{j}{}{x} - \indexvar{k}{}{x}\right)} \\
    &\leq \frac{1}{\card{\sigma(H)} \card{\bar H}} \sum_{j \in \sigma(H)} \sum_{k \in \bar H} \normtwo{\indexvar{j}{}{x} - \indexvar{k}{}{x}} \\
    &\leq \frac{1}{\card{\sigma(H)} \card{\bar H}} \sum_{j \in \sigma(H)} \sum_{k \in \bar H} \Delta_2(\family{}{}{x})
    = \Delta_2(\family{}{}{x}).
  \end{align}
  Similarly, we show that $\normtwo{\indexvar{H}{}{\bar z} - \indexvar{F}{}{\bar z}} \leq \Delta_2(\family{\brute{}}{}{z}) \leq \Delta_2(\family{}{}{x})$. Finally, we use the triangle inequality to show that
  \begin{equation}
    \normtwo{\indexvar{F}{}{\bar z} - \indexvar{\bar H}{}{\bar x}} \leq \normtwo{\indexvar{F}{}{\bar z} - \indexvar{H}{}{\bar z}} + \normtwo{\indexvar{H}{}{\bar z} - \indexvar{\bar H}{}{\bar x}} \leq 2 \Delta_2(\family{}{}{x}).
  \end{equation}
  Therefore, we now have
  \begin{align}
    \normtwo{y-\bar x} &\leq \frac{\card{H} \card{\bar H} + \card{F} \card{H} + 2 \card{\bar H} \card{F}}{\left( \card{H} + \card{F} \right) \left( \card{H} + \card{\bar H} \right)} \Delta_2(\family{}{}{x}) \\
    &= \frac{(q-2f) (2f+h-q) + f (q-2f) + 2 (2f+h-q) f}{h(q-f)} \Delta_2(\family{}{}{x}) \\
    &= \frac{(2f+h-q)q + (q-2f)f}{h(q-f)} \Delta_2(\family{}{}{x}),
  \end{align}
  which is the lemma.
\end{proof}

Now we can prove our theorem.
\begin{proof}[Proof of Theorem \ref{th:brute}]
  Lemma \ref{lemma:brute_contraction} already proved asymptotic agreement. Moreover, using Lemma \ref{lemma:brute_average_guarantee}, and denoting $\alpha \triangleq \frac{(2f+h-q)q + (q-2f)f}{h(q-f)}$ and $\family{}{t}{x}$ the vector family obtained after $t$ iterations of \brute{}, we also know that
  \begin{equation}
    \normtwo{\indexvar{}{t+1}{\bar x} - \indexvar{}{t}{\bar x}} \leq \alpha \Delta_2(\family{}{t}{x}) \leq \alpha (1- \tilde \varepsilon)^t \Delta_2(\family{}{0}{x}).
  \end{equation}
  Using triangle inequality then yields, for any number $T_{\brute{}}(N)$ of iterations of \brute{},
  \begin{align}
    \normtwo{\indexvar{}{T_{\brute{}}(N)}{\bar x} - \indexvar{}{0}{\bar x}}
    &\leq \sum_{t=0}^{T_{\brute{}}(N)-1} \normtwo{\indexvar{}{t+1}{\bar x} - \indexvar{}{t}{\bar x}}
    \leq \sum_{t=0}^{T_{\brute{}}(N)-1} \alpha (1-\tilde \varepsilon)^t \Delta_2(\family{}{0}{x}) \\
    &\leq \alpha \Delta_2(\family{}{0}{x}) \sum_{t=0}^{\infty} (1-\tilde \varepsilon)^t
    = \frac{\alpha \Delta_2(\family{}{0}{x})}{\tilde \varepsilon},
  \end{align}
  which is the guarantee of the theorem.
\end{proof}

\subsection{Lower bound on the averaging constant}

We prove here a lower bound on the averaging constant that any algorithm can achieve.
Our proof requires the construction of hard instances. We use the following notation.

\begin{definition}[$\star$ notation]
We denote by $x \star h \triangleq \big( \underbrace{x, \ldots, x}_{h\text{ times}} \big)$ the repetition of a value $x$ $h$ times.
\end{definition}

\begin{lemma}[Quasi-unanimity]
\label{lemma:quasi-unanimity}
For any averaging agreement algorithm \aggr{}, whenever a node $j$ only hears from $q$ nodes, assuming $q-f$ of these nodes act like honest nodes with the same initial value $x$, then \aggr{} must make node $j$ output $x$.
\end{lemma}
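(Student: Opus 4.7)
The plan is to use a standard indistinguishability argument from the distributed-computing toolbox. Node $j$'s output is a function only of its local view (its own initial vector, together with the $q$ messages it successfully receives). So it suffices to exhibit a second execution that is indistinguishable from $j$'s viewpoint, but in which the correctness properties of \aggr{} force every honest output to equal $x$.

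More concretely, the construction I would use goes as follows. Fix the given execution $E$. Let $S \subseteq [n]$, with $|S| = q-f$, denote the set of nodes that behave towards $j$ as honest nodes initialized at $x$, and let $T$ be the other $f$ nodes from which $j$ hears. Now define an auxiliary execution $E'$ in which all $h$ honest nodes (including $j$ itself) are initialized at $x$ and the adversary controls some fixed set of $f$ nodes $B \supseteq T \setminus [h]$, with the Byzantines in $B$ scripted to send to $j$ exactly the messages that $T$ sent in $E$. All message-delivery and scheduling decisions visible to $j$ are copied from $E$ to $E'$. Then $j$'s local view is bit-for-bit identical in $E$ and $E'$, so its output is identical as well.

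In $E'$, the initial honest family $\family{}{}{x'}$ satisfies $\Delta_2(\family{}{}{x'}) = 0$ and $\bar{x}' = x$. Applying the two averaging-agreement guarantees of Definition~\ref{def:averaging} gives
\begin{equation}
    \expect \Delta_2(\family{}{N}{y})^2 \le \frac{\Delta_2(\family{}{}{x'})^2}{4^N} = 0
    \quad \text{and} \quad
    \expect \normtwo{\bar{y} - \bar{x}'}^2 \le C^2 \Delta_2(\family{}{}{x'})^2 = 0,
\end{equation}
so almost surely every honest output in $E'$ equals $x$. In particular $j$ outputs $x$ in $E'$, and by indistinguishability, $j$ outputs $x$ in $E$ as well.

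The main (and essentially only) subtlety is making sure the auxiliary execution $E'$ is actually realizable: we must verify that the set $B$ of size $f$ can be chosen disjoint from $j$ and that the Byzantine-scripted messages copied from $T$ are indeed admissible adversarial behaviour. The former holds because $|T| \le f < n$ and $j \notin T$; the latter is automatic, since the adversary in our model may send any message to any node at any time. Once this is in place, the standard ``view-equivalence implies output-equivalence'' observation for local algorithms (deterministic, or with a shared random tape fixed across $E$ and $E'$) closes the argument.
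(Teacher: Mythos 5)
Your proposal is correct and follows essentially the same route as the paper: first use both averaging-agreement guarantees on the unanimous family $x \star h$ (whose diameter is zero) to force every honest output to equal $x$, then argue that node $j$'s view is indistinguishable from such an execution, so it must output $x$. The only difference is presentational — you spell out the indistinguishable execution $E'$ explicitly (and handle the randomized case via an almost-sure statement), whereas the paper compresses this into the informal observation that $j$ ``cannot exclude'' that the initial family was $x \star h$.
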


\begin{proof}
For any agreeing initial family $\family{}{}{x} = x \star h$, we have $\Delta_2(\family{}{}{x}) = 0$ and $\bar x = x$. Then averaging agreement implies that, for any $\aggrparameter \in \setn$, we output $\family{}{\aggrparameter}{x}$ such that
\begin{equation}
    \Delta_2(\family{}{\aggrparameter}{x}) \leq \frac{\Delta_2(\family{}{}{x})}{2^\aggrparameter} = 0 \quad \text{and} \quad \normtwo{\bar x_\aggrparameter - x} \leq C \Delta_2(\family{}{}{x}) = 0.
\end{equation}
In other words, we must have $\family{}{\aggrparameter}{x} = \family{}{}{x}$.

But then, if node $j$ only hears from $q$ nodes, and if it receives $q-f$ nodes agreeing on a value $x$, then it cannot exclude the possibility that the remaining $f$ nodes come from Byzantine nodes. As a result, node $j$ cannot exclude that the initial family was $\family{}{}{x}$. To satisfy averaging agreement, node $j$ must then output $x$.
\end{proof}

\begin{proof}[Proof of Theorem \ref{th:lower_bound_averaging_constant}]
Consider the vector family defined by
\begin{equation}
    \family{}{}{x} \triangleq \left( 0 \star (q-2f), 1 \star (h+2f-q) \right).
\end{equation}
For any algorithm \aggr{} used by honest nodes, Byzantine nodes can slow down all messages from nodes in $[q-f+1,h]$ to nodes in $[q-f]$.
Thus, the first $q-f$ honest nodes would be making decisions without receiving any input from nodes in $[q-f+1,h]$. Assume now that the Byzantine nodes all act exactly like the first $q-2f$ nodes. Then, all first $q-f$ nodes would see $q-f$ nodes acting like honest nodes with initial vector $0$, and $f$ nodes acting like honest nodes with initial vector $1$. By quasi-unanimity (Lemma \ref{lemma:quasi-unanimity}), the $q-2f$ first nodes must output $0$.

But now, by asymptotic agreement, this implies that any other honest node must output a vector at distance at most $\Delta_2(\family{}{}{x})/2^\aggrparameter = 1/2^\aggrparameter$ of 0. As a result, as $\aggrparameter \rightarrow \infty$ , denoting $\family{}{\aggrparameter}{x}$ the output of \aggr{} for input $\aggrparameter \in \setn$, we must have $\bar x_\aggrparameter \rightarrow 0$.

Since $\Delta_2(\family{}{}{x}) = 1$ and $\bar x = (h+2f-q)/h$, we then have
\begin{equation}
    \lim_{\aggrparameter \rightarrow 0} \absv{ \bar x_\aggrparameter - \bar x} = \absv{0 -\bar x} = \frac{h+2f-q}{h} \geq \frac{h+2f-q}{h} \Delta_2(\family{}{}{x}).
\end{equation}
This shows that \aggr{} cannot achieve better than $\frac{h+2f-q}{h}$-averaging agreement which is equal to $\frac{2f}{h}$, for $q = h$.

Now we show \brute{} indeed achieves this bound up to a multiplicative constant. From Assumption \ref{ass:query_brute}, we can set
$\varepsilon = \frac{n-6f}{n+2f} = \frac{h-5f}{h+3f}$. In the regime $q = h$ and $f \ll h$, we thus have $\varepsilon \rightarrow 1$, which then implies $\tilde \varepsilon \rightarrow 1$.
Now notice that
\begin{align}
    \frac{(2f+h-q)q + (q-2f)f}{h(q-f) \tilde \varepsilon} &= \frac{2f/h}{(1-\frac{f}{q}) \tilde \varepsilon} +  \frac{f}{h} \frac{1-\frac{2f}{q}}{(1-\frac{f}{q}) \tilde \varepsilon}
    = \frac{3f}{h} + o(1).
\end{align}
Yet, for $q=h$, we showed that the lower bound on the averaging constant is $2f/h$. \brute{} is thus asymptotically optimal, up to the multiplicative constant $3/2$.
\end{proof}

\subsection{Note on Byzantine tolerance}

Our \brute{} algorithm tolerates a small fraction of Byzantine nodes, namely $n > 6f$.

\begin{proposition}
Assume $n \leq 6f$. For any parameter $\aggrparameter$, no matter how the number $T_\brute{}(\aggrparameter)$ of iterations is chosen, Byzantines can make \brute{} fail to achieve asymptotic agreement. As a result, \brute{} cannot guarantee averaging agreement for $n \leq 6f$.
\end{proposition}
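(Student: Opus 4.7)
The approach is to construct an adversarial initial configuration and Byzantine attack whose effect, after one iteration of \brute{}, preserves the $\ell_2$ diameter of honest nodes' vectors; iterating this construction then keeps the diameter constant across all $T_\brute{}(\aggrparameter)$ rounds, violating the first guarantee $\expect \Delta_2(\family{}{}{y})^2 \leq \Delta_2(\family{}{}{x})^2 / 4^\aggrparameter$ of averaging agreement for any $\aggrparameter \geq 1$. The hard part is isolating the borderline case $n = 6f$, where the bound $(4f+h-q)/(q-f)$ from Lemma \ref{lemma:brute_contraction} equals $1$ and so the contraction analysis just fails to close.

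\textbf{Easy regime $n \leq 5f$.} First I would handle $n \leq 5f$ (so $h \leq 4f$) by a quasi-unanimity argument in the spirit of Lemma \ref{lemma:quasi-unanimity}. Take $q \leq h$, place $q - 2f$ honest nodes at $0 \in \mathbb{R}$ and $h - q + 2f$ honest nodes at $1$ (the latter is $\geq q - 2f$ precisely because $h \leq 4f$ when $q = h$). For each honest $0$-node, the adversary lets Byzantines send $0$ and delays $f$ opposite-cluster messages, so the view contains exactly $q - f$ copies of $0$ and $f$ copies of $1$. The subset of all $q - f$ zeros is the unique minimum-diameter subset of size $q - f$ (diameter $0$), so \brute{} outputs $0$. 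A symmetric attack pins each $1$-node at $1$, and the output diameter equals the input diameter.

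\textbf{Borderline regime $5f < n \leq 6f$.} For $h \in (4f,5f]$, pure quasi-unanimity closes only one side. With $q = h$, place $q-2f$ honest at $0$ and $h-q+2f$ honest at $1$. For each $0$-node, Byzantines send $-\Delta + \epsilon$: the subset of $q-2f$ honest zeros together with these $f$ Byzantines has diameter $\Delta - \epsilon$, strictly smaller than any subset mixing honest zeros with honest ones, so \brute{} uniquely selects it and outputs $\approx -\Delta/4$. For each $1$-node, Byzantines send $1$ (matching the honest cluster), making the view look like $q-2f$ zeros and $h-q+3f$ ones; every subset of size $q-f$ mixing both values has the same diameter $\Delta$, and since \brute{}'s $\argmin$ tie-breaking is unspecified, the adversary exhibits the tie-breaking that picks the subset with $(q-f)-(h-q+3f)$ zeros, yielding output $\approx 3\Delta/4$.

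\textbf{Iteration and main obstacle.} In both regimes the honest outputs form another two-cluster configuration at separation $\Delta$ (with the same cluster-size ratio as the input, up to a global shift), so the attack can be replayed verbatim in each of the $T_\brute{}(\aggrparameter)$ rounds and the diameter stays at $\Delta$. Hence $\expect \Delta_2(\family{}{T_\brute{}(\aggrparameter)}{y})^2 \geq \Delta^2 > \Delta^2 / 4^\aggrparameter$ for every $\aggrparameter \geq 1$, which is the asymptotic agreement guarantee. The main obstacle is justifying the $1$-node step in the borderline regime: one must either argue that \brute{}'s underspecified choice within $\argmin$ legitimately permits the adversary-preferred subset, or break the tie geometrically with a small $2$D perturbation (e.g.\ moving the $1$-cluster honest values to $(1,\delta)$ for tiny $\delta > 0$ and adjusting the Byzantines accordingly) that makes the desired subset uniquely minimal while leaving all other estimates stable in $\delta$.
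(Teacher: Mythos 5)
Your easy regime ($n \leq 5f$) is fine and is a genuinely nicer argument there than the paper's: pinning both clusters by quasi-unanimity keeps the diameter exactly constant, with unique minimizers and no tie-breaking issues. But the content of the proposition lives in the borderline regime $5f < n \leq 6f$ (the paper's own instance is $h=5f$, i.e., $n=6f$), and there your argument has a genuine gap, in fact two. First, the $1$-node step rests on an adversarial resolution of the $\argmin$ tie, which is not a legitimate move for an impossibility statement: \brute{} with \emph{any} fixed tie-breaking rule must be defeated, and your fallback perturbation does not repair this. Moving the honest $1$-cluster to $(1,\delta)$ changes nothing, because in the $1$-node's view every subset of size $q-f$ still contains at least one zero and at least one honest one (the zeros number $q-2f<q-f$ and the non-zeros number at most $3f<q-f$), so \emph{all} feasible subsets have the identical diameter $\sqrt{1+\delta^2}$ and the tie persists; only their averages differ. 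A perturbation that does work is to spread the \emph{zero} cluster over distinct values in a tiny interval, so that the unique minimum-diameter subset is the one dropping the $f$ smallest zeros (hence containing all $3f$ ones) -- but that is not what you proposed, and it requires additional care to keep the clusters spread across rounds (in your attack all $0$-nodes get the same view and collapse to a single point after one round, so the tie reappears at round $2$ unless the adversary staggers the views).

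Second, even granting the favorable tie-break, your claim that ``the diameter stays at $\Delta$'' fails exactly at the critical point $n=6f$: the $0$-nodes output $-f(1-\epsilon)/(q-f)$ and the $1$-nodes output $3f/(q-f)$, so the new diameter is $f(4-\epsilon)/(q-f)$, which with $q-f=4f$ equals $(1-\epsilon/4)\Delta < \Delta$ for every $\epsilon>0$. At $n=6f$ a strict per-round contraction is unavoidable, and the whole point of the paper's proof is to make it harmless: the paper takes the family $(-1\star 2f,\,0\star f,\,1\star 2f)$, has Byzantines inject values $\mp(2-\delta)$ (which gives \emph{unique} minimum-diameter subsets at the extreme clusters, so no tie-breaking is needed where it matters), and crucially chooses $\delta = \min\{1,\,4-4\cdot 2^{-(\aggrparameter-1)/T_\brute{}(\aggrparameter)}\}$ as a function of $\aggrparameter$ and $T_\brute{}(\aggrparameter)$, so that the cumulative contraction after $T_\brute{}(\aggrparameter)$ rounds is at least $2^{-(\aggrparameter-1)} > 2^{-\aggrparameter}$, violating asymptotic agreement. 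Your proposal is missing this quantitative step (tying $\epsilon$ to $\aggrparameter$ and $T_\brute{}(\aggrparameter)$ and checking $(1-\epsilon/4)^{T_\brute{}(\aggrparameter)} > 2^{-\aggrparameter}$); with that fix and a tie-free construction, your two-cluster attack could be salvaged, but as written the borderline case is not established.
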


\begin{proof}
Define $\delta \triangleq \min\left\{ 1, 4 - 4 \cdot 2^{-(N-1)/T_{\brute{}}(N)} \right\}$, and consider the honest vector family
\begin{equation}
\family{}{}{x} \triangleq (-1 \star 2f, 0 \star f, 1 \star 2f).
\end{equation}
For the first $2f$ nodes, Byzantine nodes can block $f$ of the messages of the last $2f$ nodes, and add $f$ values equal to $-2+\delta$.
The first $2f$ honest nodes then observe
\begin{equation}
\family{}{1}{z} \triangleq ((-2+ \delta) \star f, -1 \star 2f, 0 \star f, 1 \star f).
\end{equation}
\brute{} would then remove the largest $f$ inputs, as it then achieves a diameter equal to $2-\delta < 2$. The first $2f$ nodes would then output
\begin{equation}
    \frac{(-2+\delta)f-2f}{4f} = -\frac{4f-\delta f}{4f} = -1 + \frac{\delta}{4}.
\end{equation}
For the middle $f$ nodes, Byzantine nodes can simply allow perfect communication and not intervene. By symmetry, the middle $f$ nodes would output $0$ (note that to be rigorous, we could define \brute{} as taking the average of the outputs over all subsets of inputs of minimal diameter).

For the last $2f$ nodes, Byzantine nodes can block $f$ messages of the first $2f$ nodes, and add $f$ values equal to $2-\delta$. The situation is then symmetric to the case for the first $2f$ nodes, and make the last $2f$ nodes output $1-\frac{\delta}{4}$.

As a result, one iteration of \brute{} multiplies the diameter of the honest nodes by $1-\frac{\delta}{4} \geq 2^{-(N-1)/T_\brute{}(N)}$. Byzantine nodes can use the same strategy in all other iterations. Thus, denoting  $\family{}{}{y} \triangleq \overrightarrow{\brute{}} \circ \byzfamily{}{} (\family{}{}{x})$, we have
\begin{equation}
    \Delta_2(\family{}{}{y}) = \left( 1-\frac{\delta}{4} \right)^T \Delta_2(\family{}{}{x})
    \geq \left( 2^{- (N-1)/T_\brute{}(N)} \right)^{T_\brute{}(N)} \Delta_2(\family{}{}{x}) = 2 \frac{\Delta_2(\family{}{}{x})}{2^{N}} > \frac{\Delta_2(\family{}{}{x})}{2^{N}},
\end{equation}
which shows that the final vectors obtained by \brute{} violate asymptotic agreement.
\end{proof}

\section{\icwtm{} Algorithm}
In this section, we prove that \icwtm{} solves averaging agreement under Assumption \ref{ass:query}.

\subsection{Diameters}

Before doing so, we introduce the notion of $\ell_r$-diameters and we prove a few useful lemmas.

\begin{definition}
  For any $r \in [1,\infty]$, we define the diameter along coordinate $i$ by
  \begin{equation}
    \Delta^{cw}(\family{}{}{x})[i] = \max_{j,k \in [h]} \absv{\indexvar{j}{}{x}[i] - \indexvar{k}{}{x}[i]},
  \end{equation}
  and the coordinate-wise $\ell_r$-diameters by $\Delta^{cw}_r(\family{}{}{x}) = \norm{\Delta^{cw}(\family{}{}{x})}{r}$.
\end{definition}

Interestingly, we have the following bounds between diameters.

\begin{lemma}
\label{lemma:diameter}
The $\ell_r$-diameters are upper-bounded by coordinate-wise $\ell_r$-diameters, i.e.,
\begin{equation}
  \forall r \mathsep \Delta_r \leq \Delta^{cw}_r \leq \min \left\lbrace d^{1/r}, 2 h^{1/r} \right\rbrace \Delta_r.
\end{equation}
Note that in ML applications, we usually expect $d \gg h$, in which case the more relevant right-hand side inequality is $\Delta^{cw}_r \leq 2 h^{1/r} \Delta_r$.
\end{lemma}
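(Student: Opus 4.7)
The plan is to prove the three inequalities separately, starting with the easy left-hand side and then the two competing right-hand side bounds.

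\textbf{Left inequality $\Delta_r \leq \Delta^{cw}_r$.} This is immediate from a coordinate-wise comparison. For any pair $j, k \in [h]$, I would write $\normtwo{\indexvar{j}{}{x} - \indexvar{k}{}{x}}_r^r = \sum_i |\indexvar{j}{}{x}[i] - \indexvar{k}{}{x}[i]|^r \leq \sum_i \Delta^{cw}(\family{}{}{x})[i]^r = (\Delta^{cw}_r(\family{}{}{x}))^r$, where I used the definition of $\Delta^{cw}$ coordinate by coordinate. Taking the maximum over $j, k$ then yields the inequality.

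\textbf{Bound $\Delta^{cw}_r \leq d^{1/r} \Delta_r$.} For each coordinate $i$, let $(j_i, k_i)$ be a pair realizing $\Delta^{cw}(\family{}{}{x})[i]$. Then since a single $\ell_r$-coordinate is always bounded by the full $\ell_r$-norm (as $|u[i]|^r \leq \sum_\ell |u[\ell]|^r$), I get $\Delta^{cw}(\family{}{}{x})[i] \leq \normtwo{\indexvar{j_i}{}{x} - \indexvar{k_i}{}{x}}_r \leq \Delta_r(\family{}{}{x})$. Summing the $r$-th powers over the $d$ coordinates and taking the $r$-th root gives the $d^{1/r}$ bound.

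\textbf{Bound $\Delta^{cw}_r \leq 2 h^{1/r} \Delta_r$.} This is the less trivial inequality and the one I would treat as the main obstacle. The idea is to fix a reference honest node, say node $1$, and use the triangle inequality on each coordinate to get $\Delta^{cw}(\family{}{}{x})[i] \leq 2 \max_{j \in [h]} |\indexvar{j}{}{x}[i] - \indexvar{1}{}{x}[i]|$. Bounding the max by the sum of $r$-th powers and swapping the order of summation then yields
\begin{equation}
\sum_i \Delta^{cw}(\family{}{}{x})[i]^r \leq 2^r \sum_i \sum_{j \in [h]} |\indexvar{j}{}{x}[i] - \indexvar{1}{}{x}[i]|^r = 2^r \sum_{j \in [h]} \normtwo{\indexvar{j}{}{x} - \indexvar{1}{}{x}}_r^r \leq 2^r h \Delta_r(\family{}{}{x})^r,
\end{equation}
and the $r$-th root gives the result. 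The key move here is the swap that lets the $h$ honest nodes appear once each in the outer sum rather than as a $d$-coordinate maximum, which is precisely what replaces the ambient dimension $d$ by the number $h$ of honest nodes in the bound.

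Combining the two upper bounds with the lower one yields the full statement. The only subtle step is the triangle-inequality-plus-swap trick in the third bound; everything else follows from standard monotonicity properties of the $\ell_r$-norm.
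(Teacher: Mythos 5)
Your proposal is correct and follows essentially the same route as the paper's proof: the same pairwise coordinate comparison for the left inequality, the same per-coordinate bound summed over $d$ for the $d^{1/r}$ term, and the same reference-node triangle inequality followed by the max-to-sum bound and summation swap for the $2h^{1/r}$ term. No gaps to report.
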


\begin{proof}
Consider $j^*, k^* \in [h]$ such that $\Delta_r(\family{}{}{x}) = \norm{\indexvar{j^*}{}{x} - \indexvar{k^*}{}{x}}{r}$. But then, we note that on each coordinate $i \in [d]$,
\begin{equation}
  \absv{\indexvar{j^*}{}{x}[i] - \indexvar{k^*}{}{x}[i]} \leq \max_{j, k \in [h]} \absv{\indexvar{j}{}{x}[i] - \indexvar{k}{}{x}[i]} = \Delta^{cw}(x)[i].
\end{equation}
As a result, $\Delta_r(\family{}{}{x}) = \norm{\indexvar{j^*}{}{x} - \indexvar{k^*}{}{x}}{r} \leq \norm{\Delta^{cw}(\family{}{}{x})}{r} = \Delta^{cw}_r(\family{}{}{x})$. For the right-hand side, first note that a coordinate-wise diameter is smaller than the $\ell_r$ diameter, which yields
\begin{align}
    \Delta_r^{cw} (\family{}{}{x})^r
    &= \sum_{i \in [d]} \max_{j,k \in [h]} \absv{\indexvar{j}{}{x}[i] - \indexvar{k}{}{x}[i]}^r
    \leq \sum_{i \in [d]} \max_{j,k \in [h]} \norm{\indexvar{j}{}{x} - \indexvar{k}{}{x}}{r}^r \\
    &= \sum_{i \in [d]} \Delta_r(\family{}{}{x})^r
    = d \Delta_r (\family{}{}{x})^r.
\end{align}
Taking the $r$-th root shows that $\Delta^{cw}_r \leq d^{1/r} \Delta_r$. What is left to prove is that $\Delta^{cw}_r \leq 2h^{1/r} \Delta_r$. To prove this, note that for any $i \in [d]$, we have
\begin{equation}
\max_{j,k \in [h]} \absv{\indexvar{j}{}{x}[i] - \indexvar{k}{}{x}[i]} \leq \max_{j \in [h]} \left( 2 \absv{\indexvar{j}{}{x}[i] - \indexvar{1}{}{x}[i]} \right).
\end{equation}
Indeed, assuming the former maximum is reached for $j^*$ and $k^*$,  the latter maximum will be reached for $j^*$ or $k^*$, depending on whether $\indexvar{1}{}{x}[i]$ is closer to $\indexvar{j^*}{}{x}[i]$ or $\indexvar{k^*}{}{x}[i]$. In either case, the above inequality holds.
As a result,

\begin{align}
    \Delta_r^{cw} (\family{}{}{x})^r
    &= \sum_{i \in [d]} \max_{j,k \in [h]} \absv{\indexvar{j}{}{x}[i] - \indexvar{k}{}{x}[i]}^r
    \leq \sum_{i \in [d]} \max_{j \in [h]} \left( 2 \absv{\indexvar{j}{}{x}[i] - \indexvar{1}{}{x}[i]} \right)^r \\
    &= 2^r \sum_{i \in [d]} \max_{j \in [h]} \absv{\indexvar{j}{}{x}[i] - \indexvar{1}{}{x}[i]}^{r}
    \leq 2^r \sum_{i \in [d]} \sum_{j \in [h]} \absv{\indexvar{j}{}{x}[i] - \indexvar{1}{}{x}[i]}^{r} \\
    &= 2^r \sum_{j \in [h]} \sum_{i \in [d]} \absv{\indexvar{j}{}{x}[i] - \indexvar{1}{}{x}[i]}^{r}
    = 2^r \sum_{j \in [h]} \norm{\indexvar{j}{}{x} - \indexvar{1}{}{x}}{r}^r \\
    &\leq 2^r \sum_{j \in [h]} \Delta_r(\family{}{}{x})^r
    = 2^r h \Delta_r (\family{}{}{x})^r.
\end{align}
Taking the $r$-th root yields $\Delta^{cw}_r \leq 2h^{1/r} \Delta_r$, which concludes the proof.
\end{proof}

As an immediate corollary, asymptotic agreement
is equivalent to showing that \emph{any} of the diameters we introduce in this section goes to zero.

Interestingly, our diameters satisfy the triangle inequality, as shown by the following lemma.

\begin{lemma}
\label{lemma:inequality_cw_diameter}
The diameters and coordinate-wise diameters satisfy the triangle inequality. Namely, for any two families of vectors $\family{}{}{x}$ and $\family{}{}{y}$, we have the following inequality
\begin{equation}
  \Delta^{cw} (\family{}{}{x}+\family{}{}{y}) \leq \Delta^{cw}(\family{}{}{x}) + \Delta^{cw}(\family{}{}{y}).
\end{equation}
As an immediate corollary, by triangle inequality of norms, for any $r \in [1,\infty]$, we also have $\Delta^{cw}_r (\family{}{}{x}+\family{}{}{y}) \leq \Delta^{cw}_r(\family{}{}{x}) + \Delta^{cw}_r(\family{}{}{y})$. We also have $\Delta_r (\family{}{}{x}+\family{}{}{y}) \leq \Delta_r(\family{}{}{x}) + \Delta_r(\family{}{}{y})$.
\end{lemma}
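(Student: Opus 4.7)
The plan is to reduce the coordinate-wise statement to the ordinary triangle inequality for absolute values, applied pointwise in each coordinate, and then to push the maximum through using the subadditivity of $\max$ over sums.

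First, I would fix an arbitrary coordinate $i \in [d]$. By definition,
\begin{equation}
\Delta^{cw}(\family{}{}{x}+\family{}{}{y})[i] = \max_{j,k \in [h]} \absv{(\indexvar{j}{}{x}[i]+\indexvar{j}{}{y}[i]) - (\indexvar{k}{}{x}[i]+\indexvar{k}{}{y}[i])}.
\end{equation}
Regrouping the terms as $(\indexvar{j}{}{x}[i]-\indexvar{k}{}{x}[i]) + (\indexvar{j}{}{y}[i]-\indexvar{k}{}{y}[i])$ and applying the triangle inequality of $\absv{\cdot}$ gives, for each pair $(j,k)$, the bound $\absv{\indexvar{j}{}{x}[i]-\indexvar{k}{}{x}[i]} + \absv{\indexvar{j}{}{y}[i]-\indexvar{k}{}{y}[i]}$. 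Then I would use the elementary fact that $\max_{j,k}(a_{j,k}+b_{j,k}) \leq \max_{j,k} a_{j,k} + \max_{j,k} b_{j,k}$ to separate the two maxima, yielding $\Delta^{cw}(\family{}{}{x})[i] + \Delta^{cw}(\family{}{}{y})[i]$. Since this holds for every $i$, the coordinate-wise vector inequality follows.

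For the corollaries, I would simply apply the $\ell_r$ norm to both sides of the coordinate-wise inequality and invoke the triangle inequality of $\norm{\cdot}{r}$ to get $\Delta^{cw}_r(\family{}{}{x}+\family{}{}{y}) \leq \Delta^{cw}_r(\family{}{}{x}) + \Delta^{cw}_r(\family{}{}{y})$. The bound on $\Delta_r$ can either be obtained analogously by replacing $\absv{\cdot}$ with $\norm{\cdot}{r}$ throughout the argument, or deduced directly by writing $\Delta_r(\family{}{}{x}+\family{}{}{y}) = \max_{j,k} \norm{(\indexvar{j}{}{x}-\indexvar{k}{}{x})+(\indexvar{j}{}{y}-\indexvar{k}{}{y})}{r}$ and applying the triangle inequality of the norm followed by subadditivity of $\max$.

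There is essentially no obstacle here: the entire statement is a restatement of the triangle inequality (for absolute values and for $\ell_r$ norms) combined with the trivial bound $\max(a+b) \leq \max a + \max b$. The only modest care needed is to keep the indexing $(j,k)$ consistent across the two summands so that the regrouping into $x$-differences and $y$-differences is legal before taking the maximum.
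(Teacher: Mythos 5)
Your proposal is correct and follows essentially the same route as the paper's proof: fix a coordinate $i$, regroup the difference into an $x$-difference plus a $y$-difference, apply the triangle inequality of $\absv{\cdot}$, and then use $\max_{j,k}(a_{j,k}+b_{j,k}) \leq \max_{j,k} a_{j,k} + \max_{j,k} b_{j,k}$, with the $\ell_r$ corollaries obtained by taking norms (or by the same argument with $\norm{\cdot}{r}$ in place of $\absv{\cdot}$ for $\Delta_r$). No gaps.
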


\begin{proof}
For any coordinate $i \in [d]$, the following holds:
\begin{align}
  \Delta^{cw}(\family{}{}{x}+\family{}{}{y})[i] &= \max_{j,k \in [h]} \absv{\indexvar{j}{}{x}[i] + \indexvar{j}{}{y}[i] - \indexvar{k}{}{x}[i] - \indexvar{k}{}{y}[i]} \\
  &\leq \max_{j,k \in [h]} \left\lbrace \absv{\indexvar{j}{}{x}[i] - \indexvar{k}{}{x}[i]} +  \absv{\indexvar{j}{}{y}[i] - \indexvar{k}{}{y}[i]} \right\rbrace \\
  &\leq \max_{j,k \in [h]} \absv{\indexvar{j}{}{x}[i] - \indexvar{k}{}{x}[i]} + \max_{j',k' \in [h]} \absv{\indexvar{j'}{}{y}[i] - \indexvar{k'}{}{y}[i]} \\
  &= \Delta^{cw}(\family{}{}{x})[i] + \Delta^{cw}(\family{}{}{y})[i],
\end{align}
which concludes the proof for coordinate-wise diameters. The proof for $\ell_r$ diameters is similar.
\end{proof}

\subsection{Correctness proof of \icwtm{} and Lower bound on Byzantine tolerance}

We now move on to the proof of correctness of \icwtm{} for $n \geq 3f+1$. First, denoting $\indexvar{j}{}{S}[i] = S(\family{j}{}{z}[i])$, note that we have the following lemma.

\begin{lemma}
\label{lemma:cwtm_difference_sets}
   For any two honest nodes $j$ and $k$ and any coordinate $i$, we have
  \begin{equation}
    \card{\indexvar{j}{}{S}[i]-\indexvar{k}{}{S}[i]} \leq f.
  \end{equation}
\end{lemma}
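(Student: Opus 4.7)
The plan is to prove the lemma by constructing an injection $\pi \colon S^{(j)}[i]\setminus S^{(k)}[i] \hookrightarrow Q^{(j)}\setminus Q^{(k)}$; the quorum-intersection guarantee from the witness mechanism will then supply the bound $|Q^{(j)}\setminus Q^{(k)}|\leq f$, and the claim follows.

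First I would record the basic quorum bound. Since the reliable-broadcast + witness protocol of \cite{abraham2004optimal} gives $\card{\quorums{j}{}\cap \quorums{k}{}}\geq q=n-f$ and trivially $\card{\quorums{j}{}}\leq n$, we have $\card{\quorums{j}{}\setminus\quorums{k}{}}\leq n-q = f$. Set $I := \quorums{j}{}\cap\quorums{k}{}$. Then I would decompose
\begin{equation}
S^{(j)}[i]\setminus S^{(k)}[i] \;=\; \underbrace{S^{(j)}[i]\cap(\quorums{j}{}\setminus\quorums{k}{})}_{A} \;\sqcup\; \underbrace{S^{(j)}[i]\cap I\cap T^{(k)}[i]}_{B},
\end{equation}
where $T^{(k)}[i] = L^{(k)}\cup H^{(k)}$ consists of the $f$ smallest and $f$ largest $i$-th coordinates in $\quorums{k}{}$. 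For $l\in A$ the obvious choice is $\pi(l)=l$, which already lies in $\quorums{j}{}\setminus\quorums{k}{}$.

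For $l\in B$ I would leverage the key consequence of reliable broadcast: the value $\indexvar{l}{}{w}[i]$ is the same single quantity in both $j$'s and $k$'s view. Thus the rank of $l$ inside $\quorums{j}{}$ versus $\quorums{k}{}$ differs exactly by $J_<(l) - K_<(l)$ where $J_<(l) = \card{\{l'\in\quorums{j}{}\setminus I : \indexvar{l'}{}{w}[i] < \indexvar{l}{}{w}[i]\}}$ and analogously $K_<(l)$. If $l\in L^{(k)}$, then its rank in $\quorums{k}{}$ is at most $f$ while its rank in $\quorums{j}{}$ is at least $f+1$, so there must exist an element of $\quorums{j}{}\setminus\quorums{k}{}$ lying strictly below $l$; this element I would assign as $\pi(l)$. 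Symmetrically, for $l\in H^{(k)}$ I would pick an element of $\quorums{j}{}\setminus\quorums{k}{}$ lying strictly above $l$.

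The main obstacle is Step 4: proving that the map $\pi$ is actually injective, and that its image on $B$ does not collide with the identity images $\pi(A)=A\subseteq\quorums{j}{}\setminus\quorums{k}{}$. To handle this, I would sort $B\cap L^{(k)}$ by value as $l_1 < \dots < l_s$ and pick $\pi(l_r)$ to be the $r$-th element of $\quorums{j}{}\setminus\quorums{k}{}$ (in value order) lying below $l_r$; the existence of at least $r$ such elements would follow from a refined rank accounting using $l_r\in L^{(k)}$ (bounding how many elements of $I$ can lie below $l_r$) together with $l_r\in S^{(j)}[i]$ (forcing at least $f$ elements of $\quorums{j}{}$ below $l_r$). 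A mirror construction handles $B\cap H^{(k)}$; disjointness between the low-side witnesses and high-side witnesses is automatic since the values separate them, and disjointness from $A$ is verified by checking that an element of $A$ already in $S^{(j)}[i]$ cannot simultaneously serve as an extremal witness below $l_1$ or above $l'_{s'}$ without violating $l_1, l'_{s'}\in S^{(j)}[i]$. Injectivity of $\pi$ combined with $|Q^{(j)}\setminus Q^{(k)}|\leq f$ then yields $|S^{(j)}[i]\setminus S^{(k)}[i]|\leq f$.
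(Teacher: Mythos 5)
Your overall strategy (inject $S^{(j)}[i]\setminus S^{(k)}[i]$ into $Q^{(j)}\setminus Q^{(k)}$, whose size is indeed at most $n-q=f$) is workable, and your decomposition into $A$ and $B$ is correct; but the step you yourself flag as the main obstacle is not actually closed, and the accounting you sketch for it is insufficient as stated. For $l_r\in B\cap L^{(k)}$, the two facts you invoke --- $l_r\in S^{(j)}[i]$ gives at least $f$ elements of $Q^{(j)}$ below $l_r$, and $l_r\in L^{(k)}$ gives at most $f-1$ elements of $Q^{(j)}\cap Q^{(k)}$ below $l_r$ --- produce only \emph{one} available witness in $Q^{(j)}\setminus Q^{(k)}$ below $l_r$, not $r$ of them. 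To get $r$ witnesses you must additionally use that $l_1,\dots,l_{r-1}$ themselves lie in $S^{(j)}[i]\subseteq Q^{(j)}$ and below $l_r$ (so at least $f+r-1$ elements of $Q^{(j)}$ lie below $l_r$), and to choose witnesses outside $A$ you need the further strengthening that each element of $A$ below $l_r$, being itself untrimmed by $j$, forces yet more elements of $Q^{(j)}$ below $l_r$ (if $m$ elements of $A$ lie below $l_r$, one gets at least $m+r$ elements of $Q^{(j)}\setminus Q^{(k)}$ below $l_r$). Your remark that an $A$-element ``cannot simultaneously serve as an extremal witness without violating $l_1\in S^{(j)}[i]$'' gestures at exactly this, but that counting is the missing content; without it the map as described can collide with $A$, so the proposal as written has a genuine gap, even though the route can be repaired.

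For comparison, the paper's proof needs none of this machinery: $\card{S^{(j)}[i]}=\card{Q^{(j)}}-2f\leq n-2f$, and any member of $Q^{(j)}\cap Q^{(k)}$ that gets trimmed at coordinate $i$ by either $j$ or $k$ must be among the $f$ smallest or the $f$ largest coordinates \emph{within that intersection}, so $\card{S^{(j)}[i]\cap S^{(k)}[i]}\geq \card{Q^{(j)}\cap Q^{(k)}}-2f\geq q-2f$. Hence $\card{S^{(j)}[i]\setminus S^{(k)}[i]}=\card{S^{(j)}[i]}-\card{S^{(j)}[i]\cap S^{(k)}[i]}\leq n-q=f$. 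This two-line count gives exactly the cardinality bound you are trying to realize by an explicit rank-by-rank matching, and it is also the cleanest way to certify that an injection of the kind you want exists at all.
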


\begin{proof}
Note that node $j$ receives messages from at most $n$ nodes, and in the trimming step, $2f$ nodes are discarded, which yields $\card{\indexvar{j}{}{S}[i]} \leq n - 2f$. Moreover, among the nodes in $\quorums{j}{} \cap \quorums{k}{}$ at most, $f$ nodes with the smallest and $f$ nodes with the largest $i$-th coordinates will be trimmed. Now recall that $\card{\quorums{j}{} \cap \quorums{k}{}} \geq q$, thus, $\card{\indexvar{j}{}{S}[i] \cap \indexvar{k}{}{S}[i]} \geq q-2f = n -3f$. We then obtain
  \begin{equation}
    \card{\indexvar{j}{}{S}[i]-\indexvar{k}{}{S}[i]} = \card{\indexvar{j}{}{S}[i]} - \card{\indexvar{j}{}{S}[i] \cap \indexvar{k}{}{S}[i]} \leq f,
  \end{equation}
  which is the lemma.
\end{proof}
We denote $\indexvar{min}{}{x}[i] = \min\limits_{j \in [h]} \indexvar{j}{}{x}[i]$ and $\indexvar{max}{}{x}[i] = \max\limits_{j \in [h]} \indexvar{j}{}{x}[i]$ the minimal and maximal $i$-th coordinate among the parameters of the honest nodes.

\begin{lemma}
\label{lemma:cwtm_range}
  All the values that are not discarded in the trimming step are within the range of the values proposed by the honest nodes, i.e.,
  \begin{equation}
    \forall i \in [d] \mathsep \forall j \in [h] \mathsep \forall k \in \indexvar{j}{}{S}[i] \mathsep \indexvar{min}{}{x}[i] \leq \indexvar{k}{}{w}[i] \leq \indexvar{max}{}{x}[i]
  \end{equation}
\end{lemma}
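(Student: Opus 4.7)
The plan is to fix an honest node $j \in [h]$ and a coordinate $i \in [d]$, and argue by a simple counting argument that neither end of the range $[\indexvar{min}{}{x}[i], \indexvar{max}{}{x}[i]]$ can be violated by a surviving entry. The central observation is that among the values $\indexvar{k}{}{w}[i]$ for $k \in \quorums{j}{}$, at most $f$ originate from Byzantine nodes (only $f$ Byzantines exist in total, each contributing a single vector thanks to the uniqueness property of reliable broadcast), while every honest entry $k \in [h] \cap \quorums{j}{}$ satisfies $\indexvar{k}{}{w}[i] = \indexvar{k}{}{x}[i] \in [\indexvar{min}{}{x}[i], \indexvar{max}{}{x}[i]]$ by the very definition of $\indexvar{min}{}{x}[i]$ and $\indexvar{max}{}{x}[i]$.

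For the upper bound, I would let $m$ denote the number of entries of $\family{j}{}{z}[i] = \family{\quorums{j}{}}{}{w}[i]$ that are strictly greater than $\indexvar{max}{}{x}[i]$. By the preceding remark, every such entry is necessarily contributed by a Byzantine node, so $m \leq f$. Because these $m$ entries occupy the top $m$ positions in any sorted order of $\family{j}{}{z}[i]$, they all lie among the $f$ largest entries that \cwtm{} discards at step $i$. Therefore no surviving index $k \in \indexvar{j}{}{S}[i]$ can satisfy $\indexvar{k}{}{w}[i] > \indexvar{max}{}{x}[i]$. The lower bound follows by the symmetric argument applied to the $f$ smallest entries of $\family{j}{}{z}[i]$.

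The main (and minor) subtlety is tie-breaking when several entries equal $\indexvar{max}{}{x}[i]$ or $\indexvar{min}{}{x}[i]$: the sorted order is then not unique, but since the $m$ strictly-larger Byzantine values always sit in the top $m$ positions regardless of how the ties below them are ordered, they are always trimmed. Beyond that, no deeper ingredient is needed; the lemma is an immediate consequence of the definitions of coordinate-wise trimming and of the honest range.
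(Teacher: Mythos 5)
Your proposal is correct and follows essentially the same argument as the paper's (one-line) proof: any value outside the honest range $[\indexvar{min}{}{x}[i], \indexvar{max}{}{x}[i]]$ must come from one of the at most $f$ Byzantine nodes, and such entries necessarily fall among the $f$ largest (resp.\ $f$ smallest) coordinates discarded by the trimming. Your counting argument and the explicit handling of ties simply make precise what the paper states tersely.
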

\begin{proof}
  Note that there exist at most $f$ Byzantine nodes that might broadcast vectors with the $i$-th coordinate larger than $ \indexvar{max}{}{x}[i]$ or smaller than $\indexvar{min}{}{x}[i]$, and all of these nodes will be removed by trimming.
\end{proof}

\begin{lemma}[Contraction by \cwtm]
\label{lemma:cwtm_contraction}
  Under Assumption \ref{ass:query}, \cwtm{} guarantees the contraction of the coordinate-wise diameters, that is,
  \begin{equation}
    \forall \family{}{}{x} \mathsep \forall \byzfamily{}{}, \Delta^{cw} \left( \cwtmfamily \circ \byzfamily{}{} \left( \family{}{}{x} \right) \right) \leq (1-\tilde \varepsilon)  \Delta^{cw} \left( \family{}{}{x} \right).
  \end{equation}
  As an immediate corollary, this inequality holds by taking the $\ell_r$-norm on both sides, which means that the coordinate-wise $\ell_r$-diameter is also contracted by the same factor.
\end{lemma}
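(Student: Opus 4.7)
The plan is to establish the coordinate-wise inequality $\Delta^{cw}(\cwtmfamily \circ \byzfamily{}{}(\family{}{}{x}))[i] \leq (1-\tilde\varepsilon)\,\Delta^{cw}(\family{}{}{x})[i]$ for every coordinate $i \in [d]$; the corollary on $\Delta_r^{cw}$ then follows immediately by applying the $\ell_r$ norm to both sides and using monotonicity of $\norm{\cdot}{r}$ on non-negative vectors.

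Fix $i \in [d]$, write $a = \indexvar{min}{}{x}[i]$, $b = \indexvar{max}{}{x}[i]$, $\Delta = b - a$, and fix two honest nodes $j, k \in [h]$. First I would invoke Lemma \ref{lemma:cwtm_range} to conclude that every value $\indexvar{\ell}{}{w}[i]$ that enters either $\indexvar{j}{}{y}[i]$ or $\indexvar{k}{}{y}[i]$ lies in $[a, b]$, and Lemma \ref{lemma:cwtm_difference_sets} to obtain $|\indexvar{j}{}{S}[i] \setminus \indexvar{k}{}{S}[i]|, |\indexvar{k}{}{S}[i] \setminus \indexvar{j}{}{S}[i]| \leq f$. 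Moreover, the algorithm sets $m_j \triangleq |\indexvar{j}{}{S}[i]| = |\quorums{j}{}| - 2f \geq q - 2f$, and Assumption \ref{ass:query} gives $q - 2f \geq \varepsilon f$ (and likewise for $m_k$).

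The core step is then a short algebraic computation. Split $\indexvar{j}{}{S}[i] = A \sqcup B$ and $\indexvar{k}{}{S}[i] = A \sqcup C$ with $A = \indexvar{j}{}{S}[i] \cap \indexvar{k}{}{S}[i]$, and put $\indexvar{j}{}{y}[i] - \indexvar{k}{}{y}[i]$ over a common denominator. The resulting expression is a linear combination of the three partial sums over $A$, $B$, $C$; parametrising each partial sum by its average, which by the first step lies in $[a,b]$, reduces the contraction question to maximising a linear function of three variables over a box, subject to the size constraints $|B|, |C| \leq f$, $m_j, m_k \geq \varepsilon f$ and the tying relation $|A| = m_j - |B| = m_k - |C|$. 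Evaluating this optimisation at its extreme points (where each averaged variable equals $a$ or $b$ depending on the sign of its coefficient) yields $|\indexvar{j}{}{y}[i] - \indexvar{k}{}{y}[i]| \leq (1 - \tilde\varepsilon)\,\Delta$, and maximising over $j, k \in [h]$ gives the coordinate-wise claim.

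The hard part, I expect, is extracting the sharp constant $1 - \tilde\varepsilon = 1/(1+\varepsilon)$: a direct triangle-inequality bound only yields the weaker factor $f/(q-2f) \leq 1/\varepsilon$. Getting the correct factor requires exploiting the structural link between $|A|$ and the common quorum size $|\quorums{j}{} \cap \quorums{k}{}| \geq q$, so that $|B|$ and $|C|$ cannot simultaneously saturate $f$ while $|A|$ reaches its lower bound, together with the coupling between the coefficients of the three partial sums in the optimisation (they cannot all be chosen adversarially and independently), rather than bounding each term in isolation.
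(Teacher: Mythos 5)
Your plan is correct and is essentially the paper's own proof: the paper carries out your box maximization implicitly by centering both trimmed means at the average $m$ of the intersection $A = \indexvar{j}{}{S}[i] \cap \indexvar{k}{}{S}[i]$ (using Lemmas \ref{lemma:cwtm_difference_sets} and \ref{lemma:cwtm_range} exactly as you propose), which reduces the difference $\absv{\indexvar{j}{}{y}[i]-\indexvar{k}{}{y}[i]}$ to the two terms over $B$ and $C$ and is the same computation as your extreme-point evaluation, giving $\max\left\{ \tfrac{\card{B}}{\card{A}+\card{B}}, \tfrac{\card{C}}{\card{A}+\card{C}} \right\} \Delta^{cw}(\family{}{}{x})[i]$. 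The sharp constant then comes precisely from the ingredient you flagged at the end: the lower bound on $\card{A}$ itself, $\card{A} \geq \card{\quorums{j}{} \cap \quorums{k}{}} - 2f \geq q - 2f \geq \varepsilon f$, combined with $\card{B},\card{C} \leq f$ yields $\tfrac{f}{f + \varepsilon f} = 1 - \tilde\varepsilon$ (note that the constraints as you listed them, $m_j, m_k \geq \varepsilon f$ with $\card{B},\card{C} \leq f$, would not suffice on their own; the worst case $\card{B}=\card{C}=f$, $\card{A}=\varepsilon f$ is in fact attainable and is exactly where the bound is tight).
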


\begin{proof}
  Let us bound the distance between $\indexvar{j}{}{y}[i]$ and $\indexvar{k}{}{y}[i]$, the $i$-th coordinate of the outputs of two  arbitrary nodes $j$ and $k$ after a Byzantine attack. Denote
  \begin{equation}
     m=\frac{1}{\card{\indexvar{j}{}{S}[i]\cap \indexvar{k}{}{S}[i]}} \sum_{l\in \indexvar{j}{}{S}[i]\cap\indexvar{k}{}{S}[i]} \indexvar{l}{}{w}[i],
  \end{equation}
   the average of the $i$-th coordinates of the nodes in $\indexvar{j}{}{S}[i] \cap \indexvar{k}{}{S}[i]$. Without loss of generality, assume $\indexvar{j}{}{y}[i] \geq \indexvar{k}{}{y}[i]$. We then obtain

  \begin{align}
    &\card{\indexvar{j}{}{y}[i]-\indexvar{k}{}{y}[i]}
    = {\frac{1}{\card{\indexvar{j}{}{S}[i]}} \sum_{l\in \indexvar{j}{}{S}[i]} \indexvar{l}{}{w}[i] - \frac{1}{\card{\indexvar{k}{}{S}[i]}} \sum_{l\in \indexvar{k}{}{S}[i]} \indexvar{l}{}{w}[i]} \\
    &= \left( m + \frac{1}{\card{\indexvar{j}{}{S}[i]}} \sum_{l \in \indexvar{j}{}{S}[i]-(\indexvar{j}{}{S}[i]\cap \indexvar{k}{}{S}[i])} (\indexvar{l}{}{w}[i]-m)  \right) \\
    &- \left( m + \frac{1}{\card{\indexvar{k}{}{S}[i]}} \sum_{l\in \indexvar{k}{}{S}[i]-(\indexvar{j}{}{S}[i]\cap \indexvar{k}{}{S}[i])} (\indexvar{l}{}{w}[i]-m)  \right) \\
    &=\frac{1}{\card{\indexvar{j}{}{S}[i]}} \sum_{l\in \indexvar{j}{}{S}[i]-\indexvar{k}{}{S}[i]} (\indexvar{l}{}{w}[i]-m) - \frac{1}{\card{\indexvar{k}{}{S}[i]}} \sum_{l\in \indexvar{k}{}{S}[i]-\indexvar{j}{}{S}[i]} (\indexvar{l}{}{w}[i]-m) \\
    &\leq \frac{1}{\card{\indexvar{j}{}{S}[i]}} \sum_{l\in \indexvar{j}{}{S}[i]-\indexvar{k}{}{S}[i]} (\indexvar{max}{}{x}[i]-m) - \frac{1}{\card{\indexvar{k}{}{S}[i]}} \sum_{l\in \indexvar{k}{}{S}[i]-\indexvar{j}{}{S}[i]} (\indexvar{min}{}{x}[i]-m) \\
    &= \frac{\card{\indexvar{j}{}{S}[i]-\indexvar{k}{}{S}[i]}}{\card{\indexvar{j}{}{S}[i]}}(\indexvar{max}{}{x}[i]-m) + \frac{\card{\indexvar{k}{}{S}[i]-\indexvar{j}{}{S}[i]}}{\card{\indexvar{k}{}{S}[i]}}(m-\indexvar{min}{}{x}[i]),\label{eq: contaction}
  \end{align}
  where the inequality uses Lemma \ref{lemma:cwtm_range}. Note that Lemma \ref{lemma:cwtm_range} also implies that $\indexvar{min}{}{x}[i] \leq m \leq \indexvar{max}{}{x}[i]$ since $m$ is the average of some real numbers, all of which are within the range of the values proposed by the honest nodes. Now notice that using Lemma \ref{lemma:cwtm_difference_sets} we have
  \begin{align}
    \frac{\card{\indexvar{j}{}{S}[i]-\indexvar{k}{}{S}[i]}}{\card{\indexvar{j}{}{S}[i]}} &= \frac{\card{\indexvar{j}{}{S}[i]-\indexvar{k}{}{S}[i]}}{\card{\indexvar{j}{}{S}[i]-\indexvar{k}{}{S}[i]}+\card{\indexvar{j}{}{S}[i]\cap \indexvar{k}{}{S}[i]}} \\
    &\leq \frac{f}{f+\card{\indexvar{j}{}{S}[i]\cap \indexvar{k}{}{S}[i]}} \\
    &\leq \frac{f}{f+\varepsilon f} =  1-\tilde \varepsilon,
  \end{align}
  where we used the fact that $\card{\indexvar{j}{}{S}[i]\cap \indexvar{k}{}{S}[i]}\geq q-2f = n - 3f \geq \varepsilon f$, and similarly,
  \begin{equation}
    \frac{\card{\indexvar{k}{}{S}[i]-\indexvar{j}{}{S}[i]}}{\card{\indexvar{k}{}{S}[i]}} \leq 1- \tilde \varepsilon.
  \end{equation}
  Combining these inequalities with Equation (\ref{eq: contaction}), we then obtain
  \begin{equation}
    \card{\indexvar{j}{}{y}[i]-\indexvar{k}{}{y}[i]} \leq \left( 1- \tilde \varepsilon \right) \left(\indexvar{max}{}{x}[i]-\indexvar{min}{}{x}[i]\right)
    = \left( 1- \tilde \varepsilon \right) \Delta^{cw} \left( \family{}{}{x} \right)[i].
  \end{equation}
  Therefore, we have
  \begin{equation}
    \Delta^{cw} \left( \family{}{}{y} \right)[i] \leq \left( 1- \tilde \varepsilon \right) \Delta^{cw} \left( \family{}{}{x} \right)[i],
  \end{equation}
  which is what we wanted.
\end{proof}

\begin{remark}
Note that, in the regime $f \ll n$, we can set $\varepsilon \rightarrow \infty$, in which case we have $\tilde \varepsilon \rightarrow 1$.
Thus, for any fixed value of $\aggrparameter$, when $f$ is sufficiently small compared to $n$, \icwtm{} actually achieves asymptotic agreement in only one communication round.
\end{remark}

We have the following corollary regarding \icwtm{}.

\begin{corollary}
\label{lemma:cwtm_agreement}
Under Assumption \ref{ass:query}, for any input $\aggrparameter \in \mathbb N$, the algorithm \icwtm{} guarantees Byzantine asymptotic agreement, i.e.,
\begin{equation}
    \Delta_2 \left( \icwtmfamily{}{}_\aggrparameter \circ \byzfamily{}{\aggrparameter} (\family{}{}{x}) \right) \leq \frac{\Delta_2 \left( \family{}{}{x} \right)}{2^\aggrparameter}.
\end{equation}
\end{corollary}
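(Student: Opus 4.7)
The plan is to derive the $\ell_2$ contraction from the coordinate-wise contraction established in Lemma \ref{lemma:cwtm_contraction}, and then verify that the choice $T_{\icwtm{}}(\aggrparameter) = \lceil((\aggrparameter+1)\ln 2 + \ln \sqrt{h})/\tilde\varepsilon\rceil$ is exactly what is needed to reach the target $1/2^{\aggrparameter}$ factor.

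First, I would observe that \icwtm{} is defined as iterating \cwtm{} for $T_{\icwtm{}}(\aggrparameter)$ rounds, where the output of round $t$ becomes the input of round $t+1$. By Lemma \ref{lemma:cwtm_contraction}, each round of \cwtm{} (under Assumption \ref{ass:query}) contracts every coordinate-wise diameter by a factor at most $(1-\tilde\varepsilon)$. Taking the $\ell_2$ norm over coordinates, this immediately lifts to the coordinate-wise $\ell_2$ diameter: $\Delta_2^{cw}(\text{output}) \leq (1-\tilde\varepsilon)\, \Delta_2^{cw}(\text{input})$. A straightforward induction on the number of rounds then gives, after $T = T_{\icwtm{}}(\aggrparameter)$ iterations,
\begin{equation}
\Delta_2^{cw}\left(\icwtmfamily{}{}_\aggrparameter \circ \byzfamily{}{\aggrparameter}(\family{}{}{x})\right) \leq (1-\tilde\varepsilon)^T \Delta_2^{cw}(\family{}{}{x}).
\end{equation}

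Next, I would invoke Lemma \ref{lemma:diameter} twice: on the left to bound $\Delta_2 \leq \Delta_2^{cw}$, and on the right to bound $\Delta_2^{cw}(\family{}{}{x}) \leq 2\sqrt{h}\, \Delta_2(\family{}{}{x})$ (using the $h$-based bound rather than the $d$-based one, since nothing in $T_{\icwtm{}}$ depends on $d$). Combining these yields
\begin{equation}
\Delta_2\left(\icwtmfamily{}{}_\aggrparameter \circ \byzfamily{}{\aggrparameter}(\family{}{}{x})\right) \leq 2\sqrt{h}\,(1-\tilde\varepsilon)^T \Delta_2(\family{}{}{x}).
\end{equation}

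Finally, I would use the standard bound $\ln(1-\tilde\varepsilon) \leq -\tilde\varepsilon$ to get $(1-\tilde\varepsilon)^T \leq e^{-T\tilde\varepsilon}$, so that it suffices to show $2\sqrt{h}\, e^{-T\tilde\varepsilon} \leq 2^{-\aggrparameter}$. Taking logs, this rearranges to $T\tilde\varepsilon \geq (\aggrparameter+1)\ln 2 + \ln\sqrt{h}$, which is guaranteed by the definition of $T_{\icwtm{}}(\aggrparameter)$ (the extra $\ln 2$ of slack absorbs the factor of $2$ from the $2\sqrt{h}$). There is no real obstacle here; the only mildly delicate step is remembering to use the $h$-dependent side of Lemma \ref{lemma:diameter}, since using $d^{1/2}$ instead would introduce a dimension dependence that \icwtm{}'s round count does not compensate for.
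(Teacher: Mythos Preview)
Your proposal is correct and follows essentially the same approach as the paper's own proof: iterate the coordinate-wise contraction of Lemma~\ref{lemma:cwtm_contraction} to bound $\Delta_2^{cw}$ after $T_{\icwtm{}}(\aggrparameter)$ rounds, then sandwich using Lemma~\ref{lemma:diameter} (with the $2\sqrt{h}$ bound on the input side) and use $\ln(1-\tilde\varepsilon)\leq -\tilde\varepsilon$ to verify the round count suffices. The only cosmetic difference is the order in which you apply the diameter sandwich and the exponential bound.
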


\begin{proof}
  First note that since \icwtm{} iterates \cwtm{}, there is actually a sequence of attacks $\byzfamily{}{t}$ at each iteration $t \in \left[ T_{\icwtm{}} (\aggrparameter) \right]$. In fact, we have a sequence of families $\family{}{t}{y}$ defined by $\family{}{0}{y} \triangleq \family{}{}{x}$ and $\family{}{t+1}{y} \triangleq \cwtmfamily{} \circ \byzfamily{}{t} \left( \family{}{t}{y} \right)$ for $t \in \left[ T_{\icwtm{}}(\aggrparameter) -1 \right]$.
  We eventually have $\icwtmfamily{}{} \circ \byzfamily{}{} (\family{}{}{x}) = \family{}{T_{\icwtm{}}(\aggrparameter)}{y}$.

  Note that the previous lemma implies that
  \begin{equation}
      \Delta^{cw} \left(\family{}{t+1}{y} \right) \leq \left( 1- \tilde \varepsilon \right) \Delta^{cw} \left( \family{}{t}{y} \right).
  \end{equation}
  Taking the $\ell_2$ norm on both sides then implies that
  \begin{equation}
      \Delta^{cw}_2 \left(\family{}{t+1}{y} \right) = \normtwo{\Delta^{cw} \left(\family{}{t+1}{y} \right)} \leq \left( 1- \tilde \varepsilon \right) \normtwo{\Delta^{cw} \left( \family{}{t}{y} \right)} = \left( 1- \tilde \varepsilon \right) \Delta^{cw}_2 \left( \family{}{t}{y} \right).
  \end{equation}
  It follows straightforwardly that
  \begin{align}
      \Delta_2^{cw} \left( \family{}{T_{\icwtm{}} (\aggrparameter)}{y} \right)
      &\leq \left( 1- \tilde \varepsilon \right)^{T_{\icwtm{}} (\aggrparameter)} \Delta_2^{cw} \left( \family{}{}{x} \right) \\
      &\leq \left( 1- \tilde \varepsilon \right)^{ \frac{(\aggrparameter+1) \ln 2 + \ln \sqrt{h}}{ \tilde \varepsilon }} \Delta_2^{cw} \left( \family{}{}{x} \right)  \\
      &= \exp \left( \frac{\ln(1-\tilde \varepsilon) }{\tilde \varepsilon} ((\aggrparameter+1) \ln 2 + \ln \sqrt{h}) \right) \Delta_2^{cw} \left( \family{}{}{x} \right)  \\
      &\leq \exp \left( - (\aggrparameter+1) \ln 2 \right) \exp \left( - {\ln \sqrt h} \right) \Delta_2^{cw} \left( \family{}{}{x} \right)\label{eqn:195} \\
      &= \frac{1}{2^{1+\aggrparameter}\sqrt{h}} \Delta_2^{cw} \left( \family{}{}{x} \right),
  \end{align}
  where, in Equation (\ref{eqn:195}), we used $\ln(1+u) \leq u$ for $u \in (-1,0]$. We now conclude by invoking Lemma \ref{lemma:diameter}, which implies
  \begin{equation}
      \Delta_2(\family{}{T_{\icwtm{}} (\aggrparameter)}{y}) \leq \Delta^{cw}_2(\family{}{T_{\icwtm{}} (\aggrparameter)}{y}) \leq 2^{-\aggrparameter} \frac{\Delta_2^{cw}(\family{}{}{x})}{2\sqrt{h}} \leq 2^{-\aggrparameter} \Delta_2 \left( \family{}{}{x} \right),
  \end{equation}
  which proves that \icwtm{} achieves asymptotic agreement.
\end{proof}

We now prove our theorem.

\begin{proof}[Proof of theorem \ref{th:cwtm}]
  Consider a family $\family{}{0}{x} \in \mathbb R^{d \cdot h}$. We first focus on coordinate $i \in [d]$ only. We sort the family using a permutation $\sigma$ of $[h]$, so that
  \begin{equation}
      \indexvar{\sigma(1)}{0}{x}[i] \leq \indexvar{\sigma(2)}{0}{x}[i] \leq \ldots \leq \indexvar{\sigma(h-1)}{0}{x}[i] \leq \indexvar{\sigma(h)}{0}{x}[i].
  \end{equation}
  Now denote $q_j=\card{\quorums{j}{}}$, and $\family{}{}{z} = \byzfamily{j}{0} (\family{}{0}{x}) = \family{Q^{(j)}}{0}{w} \in \mathbb R^{d \cdot q_j}$ the result of a Byzantine attack. Again, we sort the vectors of this family, using a permutation $\tau$ of $[q_j]$, so that
  \begin{equation}
      \indexvar{\tau(1)}{}{z}[i] \leq \indexvar{\tau(2)}{}{z}[i] \leq \ldots \leq \indexvar{\tau(q_j-1)}{}{z}[i] \leq \indexvar{\tau(q_j)}{}{z}[i].
  \end{equation}
  Now, denoting $y \triangleq \cwtm{} (\family{}{}{z})$, we note that
  \begin{equation}
      y[i] = \frac{1}{q_j-2f} \sum_{k=1}^{q_j-2f} \indexvar{\tau(f+k)}{}{z}[i].
  \end{equation}
  Moreover, note that there are $f+k-1$ values of $\family{}{}{z}$ that are smaller than $\indexvar{\tau(f+k)}{}{z}[i]$. These can include $f$ Byzantine vectors. But the remaining $k-1$ values must then come from the family of honest vectors. Yet, the $k-1$ smallest vectors of this family are $\indexvar{\sigma(1)}{0}{x}[i], \ldots, \indexvar{\sigma(k-1)}{0}{x}[i]$. But then, $\indexvar{\tau(f+k)}{}{z}[i]$ will have to take a value on the right of $\indexvar{\sigma(k-1)}{0}{x}[i]$ in the list of honest vectors, which corresponds to saying that
  \begin{equation}
      \forall k \in [q_j-f] \mathsep \indexvar{\tau(f+k)}{}{z}[i] \geq \indexvar{\sigma(k)}{0}{x}[i].
  \end{equation}
  But then, we know that
  \begin{equation}
      y[i] \geq \frac{1}{q_j-2f} \sum_{k=1}^{q_j-2f} \indexvar{\sigma(k)}{0}{x}.
  \end{equation}
  As an immediate corollary, we see that $y[i] \geq \indexvar{\sigma(1)}{0}{x}[i]$, which also implies that
  \begin{equation}
      \indexvar{\sigma(k)}{0}{x}[i] \leq \indexvar{\sigma(1)}{0}{x}[i] + \max_{l \in [h]} \left( \indexvar{\sigma(l)}{0}{x}[i] - \indexvar{\sigma(1)}{0}{x}[i] \right) \leq y[i] + \Delta^{cw}(\family{}{0}{x})[i].
  \end{equation}
  But now notice that
  \begin{align}
      \indexvar{}{0}{\bar x}[i]
      &= \frac{1}{h} \sum_{k=1}^h \indexvar{\sigma(k)}{0}{x} = \frac{1}{h} \sum_{k=1}^{q_j-2f} \indexvar{\sigma(k)}{0}{x} + \frac{1}{h} \sum_{k=q_j-2f+1}^{h} \indexvar{\sigma(k)}{0}{x} \\
      &\leq \frac{1}{h} \left( (q_j-2f) y[i] \right) + \frac{1}{h} \sum_{k=q_j-2f+1}^{h} \left(  y[i] + \Delta^{cw}(\family{}{0}{x})[i] \right) \\
      &= y[i] + \frac{h-q_j+2f}{h} \Delta^{cw}(\family{}{0}{x})[i].
  \end{align}
  Similarly, we can also prove that $\indexvar{}{0}{\bar x}[i] \geq y[i] - \frac{h-q_j+2f}{h} \Delta^{cw}(\family{}{0}{x})[i]$, which implies that
  \begin{equation}
      \absv{y[i] - \indexvar{}{0}{\bar x}[i]} \leq \frac{h-q_j+2f}{h} \Delta^{cw}(\family{}{0}{x})[i] \leq \frac{2f}{h}\Delta^{cw}(\family{}{0}{x})[i],
  \end{equation}
  where we used the fact that $q_j\geq q = h$. Thus, $\normtwo{y-\indexvar{}{0}{\bar x}} \leq \frac{2f}{h} \normtwo{\Delta^{cw}(\family{}{0}{x})} = \frac{2f}{h} \Delta^{cw}_2(\family{}{0}{x})$. In fact, more generally, we showed that, for any Byzantine attack $\byzfamily{j}{0}$, we have
  \begin{equation}
      \cwtm{} \circ \byzfamily{j}{0} (\family{}{0}{x}) \in Y_0 = \indexvar{}{0}{\bar x} + \frac{2f}{h} \prod_{i \in [d]} \left[ - \Delta^{cw}(\family{}{0}{x})[i], +\Delta^{cw}(\family{}{0}{x})[i] \right].
  \end{equation}
  Yet Lemma \ref{lemma:cwtm_range} shows that any such parallelepiped was stable under application of \cwtm{} despite Byzantine attacks. Thus, for any iteration $t \geq 1$, we still have $\indexvar{j}{t}{x} \in Y_0$, which then guarantees that
  \begin{align}
      \normtwo{\bar x_t - \bar x_0}
      &= \normtwo{\frac{1}{h} \sum_{j \in [h]} \left( \indexvar{j}{t}{x} - \bar x_0 \right)}
      \leq \frac{1}{h} \sum_{j \in [h]} \normtwo{ \indexvar{j}{t}{x} - \bar x_0 } \\
      &\leq \frac{1}{h} \sum_{j \in [h]} \frac{2f}{h} \normtwo{\Delta^{cw}(\family{}{0}{x})} = \frac{2f}{h} \Delta_2^{cw}(\family{}{0}{x}).
  \end{align}
  Lemma \ref{lemma:diameter} then guarantees $\Delta_2^{cw}(\family{}{0}{x}) \leq 2\sqrt{h} \Delta_2(\family{}{0}{x})$. By noting that \icwtm{} corresponds to iterating \cwtm{}, we conclude that \icwtm{} achieves $\frac{4f}{\sqrt{h}}$-averaging agreement.

Now we show that for $n \leq 3f$, no algorithm can achieve Byzantine averaging agreement. If $n \leq 3f$, then $h = n-f \leq 2f$. Thus honest nodes can be partitioned into two subsets of cardinals at most $f$. In particular, for any subset, Byzantine nodes can block all messages coming from the other subset. Any subset would thus only hear from nodes of the subset and from the Byzantine nodes.

Assume now by contradiction that \aggr{} achieves Byzantine-resilience averaging agreement for $n \leq 3f$. Note that we then have $q \leq 2f$. As a result $q-f \leq f$.
But as a result, if Byzantines send $\family{}{}{z} = z \star f$ to all honest nodes, quasi-unanimity (Lemma \ref{lemma:quasi-unanimity}) applies, which means that all honest nodes must output $z$.

But this hold for any value $z$ chosen by the Byzantine nodes. Clearly, this prevents averaging. Thus \aggr{} fails to achieve averaging agreement.
\end{proof}

\end{document}